
\documentclass[nohyperref]{article}

\usepackage{microtype}
\usepackage{graphicx}
\usepackage{subfigure}
\usepackage{booktabs} 
\usepackage{ytableau}
\usepackage{wrapfig}
\usepackage{multirow,makecell}

\usepackage{hyperref}
\usepackage{tikz}
\usepackage{enumerate}

\usepackage{xcolor}


\newcommand{\yl}[1]{{\color{magenta}{\bf[Yaron:} #1{\bf]}}}


\newcommand{\abs}[1]{\left\vert#1\right\vert}

\newcommand{\set}[1]{\left\{#1\right\}}
\newcommand{\parr}[1]{\left (#1\right )}
\newcommand{\brac}[1]{\left [#1\right ]}

\newcommand{\Real}{\mathbb R}
\newcommand{\Nat}{\mathbb N}

\newcommand{\eps}{\varepsilon}
\newcommand{\too}{\rightarrow}

\newcommand{\diag}{\textrm{diag}} 

\newcommand{\one}{\mathbf{1}}



\newcommand{\eg}{{e.g.}}
\newcommand{\ie}{{i.e.}}



\usepackage{amsmath,amsfonts,bm}









\def\eqref#1{equation~\ref{#1}}









\def\floor#1{\lfloor #1 \rfloor}
\def\1{\bm{1}}

\def\eps{{\epsilon}}








\def\vb{{\bm{b}}}

\def\vm{{\bm{m}}}

\def\vs{{\bm{s}}}

\def\vx{{\bm{x}}}



\def\mI{{\bm{I}}}

\DeclareMathAlphabet{\mathsfit}{\encodingdefault}{\sfdefault}{m}{sl}
\SetMathAlphabet{\mathsfit}{bold}{\encodingdefault}{\sfdefault}{bx}{n}
\newcommand{\tens}[1]{\bm{\mathsfit{#1}}}
\def\tA{{\tens{A}}}

\def\tH{{\tens{H}}}

\def\tR{{\tens{R}}}

\def\tX{{\tens{X}}}
\def\tY{{\tens{Y}}}
\def\tZ{{\tens{Z}}}


\def\gB{{\mathcal{B}}}

\def\gF{{\mathcal{F}}}

\def\gH{{\mathcal{H}}}

\def\gZ{{\mathcal{Z}}}



\def\sN{{\mathbb{N}}}










\DeclareMathOperator{\Tr}{Tr}

\usepackage{tabularx,ragged2e,booktabs,caption}
\newcolumntype{C}[1]{>{\Centering}m{#1}}

\newcolumntype{Z}[1]{>{\Left}m{#1}}

\definecolor{mygray}{gray}{.95}



\usepackage[accepted]{icml2023}

\usepackage{amsmath}
\usepackage{amssymb}
\usepackage{mathtools}
\usepackage{amsthm}

\usepackage[capitalize,noabbrev]{cleveref}

\theoremstyle{plain}
\newtheorem{theorem}{Theorem}[section]
\newtheorem{proposition}[theorem]{Proposition}
\newtheorem{lemma}[theorem]{Lemma}
\newtheorem{corollary}[theorem]{Corollary}
\theoremstyle{definition}
\newtheorem{definition}[theorem]{Definition}

\theoremstyle{remark}

\makeatletter
\newtheorem*{rep@theorem}{\rep@title}
\newcommand{\newreptheorem}[2]{%
\newenvironment{rep#1}[1]{%
 \def\rep@title{\textbf{#2} \ref{##1}}%
 \begin{rep@theorem}}%
 {\end{rep@theorem}}}
\makeatother

\newreptheorem{theorem}{Theorem}
\newreptheorem{proposition}{Proposition}
\newreptheorem{lemma}{Lemma}
\newreptheorem{corollary}{Corollary}

\newcommand{\Aut}{\mathrm{Aut}}
\newcommand{\scount}{\mathrm{count}}
\newcommand{\inj}{\mathrm{inj}}

\usepackage[textsize=tiny]{todonotes}

\icmltitlerunning{Equivariant Polynomials for Graph Neural Networks}

\begin{document}

\twocolumn[
\icmltitle{Equivariant Polynomials for Graph Neural Networks}



\icmlsetsymbol{equal}{*}

\begin{icmlauthorlist}
\icmlauthor{Omri  Puny}{equal,yyy}
\icmlauthor{Derek Lim}{equal,sch}
\icmlauthor{Bobak T. Kiani}{equal,sch}
\icmlauthor{Haggai Maron}{comp2}
\icmlauthor{Yaron Lipman}{yyy,comp}
\end{icmlauthorlist}

\icmlaffiliation{yyy}{Weizmann Institute of Science}
\icmlaffiliation{comp}{Meta AI Research Centre for Artificial Intelligence}
\icmlaffiliation{sch}{MIT CSAIL}

\icmlaffiliation{comp2}{NVIDIA Research}

\icmlcorrespondingauthor{Omri Puny}{omri.puny@weizmann.ac.il}

\icmlkeywords{Machine Learning, ICML}

\vskip .3in
]



\printAffiliationsAndNotice{\icmlEqualContribution} 

\begin{abstract}
Graph Neural Networks (GNN) are inherently limited in their expressive power. Recent seminal works  \cite{xu2019how,morris2019weisfeiler} introduced the Weisfeiler-Lehman (WL) hierarchy as a measure of expressive power. Although this hierarchy has propelled significant advances in GNN analysis and architecture developments, it suffers from several significant limitations. These include a complex definition that lacks direct guidance for model improvement and a WL hierarchy that is too coarse to study current GNNs.
This paper introduces an alternative expressive power hierarchy based on the ability of GNNs to calculate equivariant polynomials of a certain degree. As a first step, we provide a full characterization of all equivariant graph polynomials by introducing a concrete basis, significantly generalizing previous results. Each basis element corresponds to a specific multi-graph, and its computation over some graph data input corresponds to a tensor contraction problem. Second, we propose algorithmic tools for evaluating the expressiveness of GNNs using tensor contraction sequences, and calculate the expressive power of popular GNNs. Finally, we enhance the expressivity of common GNN architectures by adding polynomial features or additional operations / aggregations inspired by our theory. These enhanced GNNs demonstrate state-of-the-art results in experiments across multiple graph learning benchmarks. 
\end{abstract}

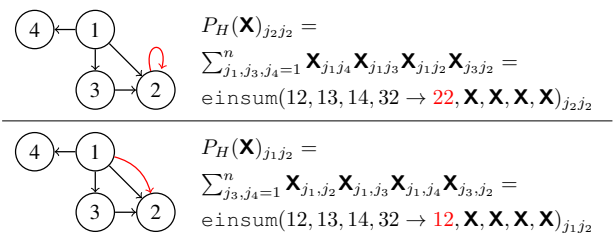
\begin{figure}
    \centering  
    \resizebox{\columnwidth}{!}{
     \begin{tabular}{cc}
 \makecell[c]{ \begin{tikzpicture}[main/.style = {draw, circle},baseline=5pt,semithick,] 
\node[main] (1) [minimum size=18pt] {3};
\node[main] (2) [minimum size=18pt, above of=1] {1};
\node[main] (3) [minimum size=18pt, right of=1] {2};
\node[main] (4) [minimum size=18pt, left of=2] {4};
\path 
(2) edge [->] node {} (1)
(4) edge [<-] node {} (2)
(2) edge [->] node {} (3)
(1) edge [->] node {} (3);
\path (3) edge [loop above, color=red] (3);
\end{tikzpicture}\\ \  }  & 
\makecell[bl]
{$P_H(\tX)_{j_2 j_2}=$\\ [5pt]
$\sum_{j_1,j_3,j_4=1}^n \tX_{j_1 j_4} \tX_{j_1 j_3 }\tX_{j_1 j_2}\tX_{j_3 j_2 }=$\\ [5pt]
     $\texttt{einsum}(12,13,14,32  \rightarrow {\color{red}  22} ,\tX,\tX,\tX,\tX)_{j_2j_2}$\vspace{-16pt} }
     \vspace{-7pt}\\ \hline \vspace{-7pt}\\
\makecell[c]{
\begin{tikzpicture}[main/.style = {draw, circle},baseline=5pt,semithick] 
\node[main] (1) [minimum size=18pt] {3};
\node[main] (2) [minimum size=18pt, above of=1] {1};
\node[main] (3) [minimum size=18pt, right of=1] {2};
\node[main] (4) [minimum size=18pt, left of=2] {4};
\path 
(2) edge [->, bend left, color=red] node {}  (3)
(2) edge [->] node {} (1)
(4) edge [<-] node {} (2)
(2) edge [->] node {} (3)
(1) edge [->] node {} (3);
\end{tikzpicture}   }  & 
\makecell[bl]{
$P_H(\tX)_{j_1 j_2}=$\\[5pt]
$\sum_{j_3,j_4=1}^n \tX_{j_1,j_2} \tX_{j_1,j_3}\tX_{j_1,j_4}\tX_{j_3,j_2}=$ \\ [5pt]
  $\texttt{einsum}(12,13,14,32 \rightarrow {\color{red}  12} ,\tX,\tX,\tX,\tX)_{j_1j_2}$ \vspace{-22pt} }
    \end{tabular}  }   
    \caption{Example of two basis elements of equivariant graph polynomials: node-valued (top), and edge-valued (bottom). Basis elements $P_H$ can be described by tensor contraction networks $H$ (left), corresponding to a matching \texttt{einsum} expression. }
    \label{fig:basis_examples}
\end{figure}

\section{Introduction}

In recent years, graph neural networks (GNNs) have become one of the most popular and extensively studied classes of machine learning models for processing graph-structured data. However, one of the most significant limitations of these architectures is their limited expressive power. In recent years,  the Weisfeiler-Lehman (WL) hierarchy has been used to measure the expressive power of GNNs \cite{morris2019weisfeiler,xu2019how,morris2021weisfeiler}. The introduction of the WL hierarchy marked an extremely significant step in the graph learning field, as researchers were able to evaluate and compare the expressive power of their architectures, and used higher-order WL tests to motivate the development of new, more powerful architectures. 

The WL hierarchy, however, is not an optimal choice for either purpose. First, its definition is rather complex and not intuitive, particularly for $k\geq 3$. One implication is that it is often difficult to analyze WL expressiveness of a particular architecture class. As a result, many models lack a theoretical understanding of their expressive power. A second implication is that WL does not provide practical guidance in the search for more expressive architecture. Lastly, as was noted in recent works \cite{morris2022speqnet,morris2019sparsewl}, the WL test appears to be too coarse to be used to evaluate the expressive power of current graph models. As an example, many architectures (e.g., \cite{frasca2022understanding}) are strictly more powerful than $2$-WL and bounded by $3$-WL, and there is no clear way to compare them. 

The goal of this paper is to offer an alternative expressive power hierarchy, which we call \emph{polynomial expressiveness} that mitigates the limitations of the WL hierarchy. Our proposed hierarchy relies on the concept of graph polynomials, which are, for graphs with $n$ nodes, polynomial functions $P:\mathbb{R}^{n^2}\to \mathbb{R}^{n^2}$ that are also permutation equivariant --- that is, well defined on graph data.
The polynomial expressiveness hierarchy is based on a natural and simple idea --- the ability of GNNs to compute or approximate equivariant graph polynomials up to a certain degree. 

This paper provides a number of theoretical and algorithmic contributions aimed at defining the polynomial hierarchy, providing tools to analyze the polynomial expressive power of GNNs, and demonstrating how this analysis can suggest practical improvements in existing models that give state-of-the-art performance in GNN benchmarks.

First, while some polynomial functions were used in GNNs in the past \cite{maron2019provably, chen2019equivalence, azizian2021expressive}, a complete characterization of the space of polynomials is lacking. In this paper, we provide the first characterization of graph polynomials with arbitrary degrees. In particular, we propose a basis for this vector space of polynomials, where each basis polynomial $P_H$ of degree $d$ corresponds to a specific multi-graph $H$ with $d$ edges. This characterization provides a significant generalization of known results, such as the basis of constant and linear equivariant functions on graphs \cite{maron2018invariant}.
Furthermore, this graphical representation $H$ can be viewed as a type of a tensor network, which provides a concrete way to compute those polynomials by performing a series of tensor (node) contractions. This is illustrated in Figure \ref{fig:basis_examples}.

As a second contribution, we propose tools for measuring polynomial expressiveness of graph models and placing them in the hierarchy. This is accomplished by analyzing tensor networks using standard contraction operators, similar to those found in Einstein summation (\texttt{einsum}) algorithms. Using these, we analyze two popular graph models: Message Passing Neural Networks (MPNNs) and Provably Powerful Graph Networks (PPGNs). This is done by first studying the polynomial expressive power of prototypical versions of these algorithms, which we define.

Our third contribution demonstrates how to improve MPNN and PPGN by using the polynomial hierarchy. Specifically, we identify polynomial basis elements that are not computable by existing graph architectures and add those polynomial basis elements to the model as feature layers. Also, we add two simple operations to the PPGN architecture (matrix transpose and diagonal / off-diagonal MLPs) to achieve the power of a Prototypical edge-based graph model. After precomputing the polynomial features, we achieve strictly better than $3$-WL expressive power while only requiring $O(n^2)$ memory --- to the best of our knowledge this is the first equivariant model to achieve this. We demonstrate that these additions result in state-of-the-art performance across a wide variety of datasets.

\section{Equivariant Graph Polynomials}
\label{s:graph_equi_poly}
We represent a graph with $n$ nodes as a matrix $\tX\in \Real^{n^2}$, where edge values are stored at off-diagonal entries, $\tX_{ij}$, $i\ne j$, $i,j \in [n]=\set{1,2,\ldots,n}$, and node values are stored at diagonal entries $\tX_{ii}$, $i\in [n]$. 

An \emph{equivariant graph polynomial} is a matrix polynomial map $P: \Real^{n^2} \too \Real^{n^2}$ that is also equivariant to node permutations. More precisely, $P$ is a polynomial map if each of its entries, $P(\tX)_{ij}$, $i,j\in [n]$, is a polynomial in the inputs $\tX_{rs}$, $r,s\in[n]$. $P$ is equivariant if it satisfies 
\begin{equation}\label{e:equivariance}
 P(g\cdot \tX) = g \cdot P(\tX),
\end{equation}
for all permutations $g\in S_n$, where $S_n$ denotes the permutation group on $[n]$, and $g$ acts on a matrix $\tY$ as usual by
\begin{equation}\label{e:g}
 (g\cdot \tY)_{ij} = \tY_{g^{-1}(i),g^{-1}(j)}.
\end{equation}


\subsection{$P_H$: Basis for equivariant graph polynomials}\label{ss:P_H}
We next provide a full characterization of equivariant graph polynomials by enumerating a particular basis, denoted $P_H$. In later sections we use this basis to analyze expressive properties of graph models and improve expressiveness of existing GNNs.

The basis elements $P_H$ of degree $d$ equivariant graph polynomials are enumerated from non-isomorphic \emph{directed multigraphs}, $H=(V,E,(a,b))$, where $V=[m]$ is the node set; $E=\set{(r_1,s_1),\ldots,(r_d,s_d)}$, $r_i,s_i\in [m]$,  the edge set, where parallel edges and self-loops are allowed; and $a,b\in V$ is a pair of not necessarily distinct nodes representing the output dimension. The pair $(a,b)$ will be marked in our graphical notation as a red edge. 


Defining the basis $P_H$ will be facilitated by the use of Einstein summation operator defined next. Note that the multi-graph $H$ can be represented as the following string that encodes both its list of edges and the single red edge: $H\cong \ 'r_1 s_1, \ldots, r_ds_d \too ab'$. The $\texttt{einsum}$ operator is:
\begin{align*}
    &\texttt{einsum}(H,\tX^1,\ldots,\tX^d)_{i_a,i_b} = \\  &\texttt{einsum}(r_1s_1,\ldots,r_ds_d\too ab,\tX^1,\ldots,\tX^d)_{i_a,i_b}= \\ &\sum_{\substack{j_1,\ldots j_m \in [n] \\ j_a=i_a, j_b=i_b}} \tX^1_{j_{r_1},j_{s_1}}\cdots \tX^d_{j_{r_d},j_{s_d}}
\end{align*}
\begin{figure}

    \centering  
    \resizebox{0.8\columnwidth}{!}{
     \begin{tabular}{cc}
 \makecell[c]{ \begin{tikzpicture}[main/.style = {draw, circle},baseline=5pt,semithick,] 
\node[main] (1) [minimum size=4pt] {\scalebox{.7}{3}};
\node[main] (2) [minimum size=4pt, above of=1] {\scalebox{.7}{1}};
\node[main] (3) [minimum size=4pt, right of=1] {\scalebox{.7}{2}};
\path 
(2) edge [->, bend left, color=red] node {}  (3)
(2) edge [->] node {\vspace{-5pt}$\tX$} (1)
(1) edge [->] node {$\tY$} (3);
\end{tikzpicture}\\ \  }  & 
\makecell[bl]{\scalebox{.9}{$\texttt{einsum}(H,\tX,\tY)_{j_1 j_2}=$}
     \\ [5pt]
     \scalebox{.9}{$\texttt{einsum}(13,32  \rightarrow {\color{red}  12} ,\tX,\tY)=$}
     \\ [5pt]
     \scalebox{.9}{$\sum_{j_3=1}^n \tX_{j_1 j_3 }\tY_{j_3 j_2 }= (\tX\tY)_{j_1j_2}$}\vspace{-15pt}}
    \end{tabular}  }   
    \vspace{-10pt}
    \caption{Example of matrix multiplication, $\tX\tY$. Computation defined by a multigraph $H$ and Einstein summation.
    }
    \label{fig:matrix_multiplication}
\end{figure}
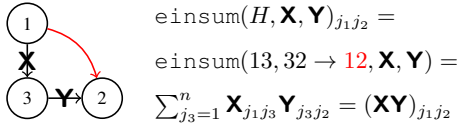
Figure \ref{fig:matrix_multiplication} shows how matrix multiplication can be defined using a corresponding multigraph $H$ and Einstein summation. Such multigraphs span the basis of equivariant polynomials:
\newpage
%
\begin{theorem}[Graph equivariant basis]\label{thm:poly}
A basis for all equivariant graph polynomials $P:\Real^{n^2} \too \Real^{n^2}$ of degree $\leq d$ is enumerated by directed multigraphs $H=(V,E,(a,b))$, where $|V|\leq \min\set{n,2+2d}$, $\abs{E}\leq d$, and $V\setminus \set{a,b}$ does not contain isolated nodes. The polynomial basis elements corresponding to $H$ are
\begin{equation}\label{eq:einsum_P_h}
    P_H(\tX) = \texttt{einsum}(H,\overset{d\text{ times}}{\overbrace{\tX,\ldots,\tX}}).
\end{equation}
\end{theorem}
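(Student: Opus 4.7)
The plan is to follow the standard two-step approach from invariant theory: parametrize all degree-$d$ equivariant maps via $S_n$-invariant coefficient tensors, and then identify the resulting tensor basis with the multigraph enumeration in the statement. By grading, I may reduce to $P$ homogeneous of degree $d$ and expand
\begin{equation*}
P(\tX)_{ab} = \sum_{r_i, s_i \in [n]} C^{ab}_{r_1 s_1 \cdots r_d s_d}\,\tX_{r_1 s_1}\cdots \tX_{r_d s_d},
\end{equation*}
where the coefficient tensor $C$ may be taken symmetric under the $S_d$ action permuting the factor pairs $(r_i,s_i)$, since the monomials already are. Substituting \eqref{e:g} into \eqref{e:equivariance} and matching monomial coefficients shows that $P$ is equivariant iff $C$ is $S_n$-invariant under the diagonal action on its $k := 2d+2$ indices $a, b, r_1, s_1, \ldots, r_d, s_d$.

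Next, I invoke the classical description of $S_n$-invariant tensors: every such tensor on $[n]^k$ is a linear combination of equality-pattern indicators $\one^\pi$ indexed by set partitions $\pi$ of $[k]$, and these indicators are linearly independent as soon as $n \geq k$ (this underlies the linear and constant equivariant bases of \cite{maron2018invariant}). To each such partition $\pi$ I associate a marked multigraph $H_\pi$ whose node set $V$ consists of the blocks of $\pi$, whose marked pair $(a,b)$ is the pair of blocks containing $a$ and $b$, and whose $d$ directed edges are obtained by interpreting each $(r_i, s_i)$ as an edge between the block of $r_i$ and the block of $s_i$. A direct computation then shows that plugging $C = \one^\pi$ into the expansion above collapses the equality constraints and produces exactly $\texttt{einsum}(H_\pi, \tX,\ldots,\tX) = P_{H_\pi}(\tX)$ as in \eqref{eq:einsum_P_h}. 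The bound $\abs{V} \leq 2+2d$ is just the partition-block bound, $\abs{V} \leq n$ is the independence threshold for the $\one^\pi$, and nodes outside $\set{a,b}$ are automatically incident to some edge because otherwise the corresponding block of $\pi$ would lie entirely in $\set{a,b}$ and hence already equal $a$ or $b$ after the block-to-node identification.

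Finally, I pass from coefficient tensors to polynomials. The $S_d$ symmetry of $C$ in the factor pairs translates exactly to isomorphism of marked multigraphs fixing $(a,b)$, so quotienting by this symmetry yields a candidate basis indexed by non-isomorphic marked directed multigraphs with the stated constraints. The main obstacle is linear independence of the resulting family $\set{P_H}$: spanning falls out immediately from the invariant-tensor decomposition, but independence requires showing that the symmetrization does not merge distinct orbits. I would argue this by noting that the $\one^\pi$ have pairwise disjoint supports as tensors on $[n]^k$, so their $S_d$-orbit sums remain disjointly supported; evaluating the resulting $P_H$ on a tensor $\tX$ with algebraically independent entries then produces polynomials whose monomial supports uniquely identify $H$ up to isomorphism, ruling out any nontrivial linear relation.
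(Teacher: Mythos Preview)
Your coefficient-tensor approach is sound and closely parallel to the paper's Reynolds-operator argument (the two are essentially dual: symmetrizing monomials is the same as projecting the coefficient tensor onto $S_n$-invariants). However, there is a genuine gap at the step where you identify which polynomial the indicator $\one^\pi$ produces.

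If $\one^\pi$ is the indicator of the \emph{exact} equality pattern $\pi$ (which is what you need for the ``pairwise disjoint supports'' claim you use later), then plugging $C=\one^\pi$ into your expansion forces indices in the same block to be equal \emph{and indices in different blocks to be distinct}. The resulting polynomial is
\[
\sum_{\substack{j_1\ne\cdots\ne j_m\\ j_a=i_a,\ j_b=i_b}}\ \prod_{(r,s)\in E(H_\pi)} \tX_{j_r,j_s},
\]
which is the paper's $Q_{H_\pi}$ (\eqref{e:Q_H}), \emph{not} the unrestricted einsum $P_{H_\pi}$ of \eqref{eq:einsum_P_h}. Conversely, if you take $\one^\pi$ to be the ``at least this equality pattern'' indicator, you do get $P_{H_\pi}$, but then the disjoint-support claim is false (e.g.\ the monomial $\tX_{ii}$ appears both in $P_H$ for the single-self-loop graph and in $P_{H'}$ for the single-edge-plus-red-loop graph), so your linear-independence argument collapses. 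Your final sentence about ``monomial supports uniquely identifying $H$'' is not correct for $P_H$ for the same reason.

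What your argument actually establishes is that $\set{Q_H}$ is a basis; the passage from $Q_H$ to $P_H$ is a separate step that the paper handles by an induction on $|V(H)|$: writing $P_H = Q_H + \sum_\ell c_\ell\, Q_{H_\ell}$ where each $H_\ell$ arises from merging at least two vertices of $H$ (so $|V(H_\ell)|<|V(H)|$), which is a unitriangular change of basis. You need to supply this step (or the equivalent M\"obius-inversion over the partition lattice) to conclude that the $P_H$ themselves are linearly independent.
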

An explicit formula for $P_H$ can be achieved by plugging in the definition of $\texttt{einsum}$ and \eqref{eq:einsum_P_h}:
\begin{align}\label{e:P_H}
    P_H(\tX)_{i_{a},i_b} &= \sum_{\substack{j_1,\ldots j_m \in [n]\\ j_a=i_a, j_b=i_b}}\prod_{(r,s)\in E} \tX_{j_r,j_s}.
\end{align}
Figure \ref{fig:basis_examples} depicts an example of graph equivariant basis elements $P_H$ corresponding to two particular multigraphs $H$. Note that a repeated pair $(a,a)$ in $H$ leads to a \emph{node-valued} equivariant polynomial, while a distinct pair $(a,b)$, $a\ne b$ leads to an \emph{edge-valued} equivariant polynomial. Furthermore, we make the convention that if $E$ is empty then $\prod_{(r,s)\in E}\tX_{i_r,i_s}=1$. The number of such polynomials increases exponentially with the degree of the polynomial; the first few counts of degree $d$ equivariant graph polynomials are 2 ($d=0$), 15 ($d=1$), 117 ($d=2$), 877 ($d=3$), 6719 ($d=4$), \dots Further details and proofs of these sequences are provided in \Cref{app:counting_section}. The full proof of Theorem \ref{thm:poly} is provided in \Cref{a:proof_of_poly}.

\textit{Proof idea for Theorem \ref{thm:poly}.} Since the set of monomials form a basis of all (not necessarily invariant) polynomials $P:\Real^{n^2}\too \Real^{n^2}$, we can project them onto the space of equivariant polynomials via the symmetrization (Reynolds) operator to form a basis for the equivariant polynomials. This projection operation will group the monomials into orbits that form the equivariant basis. 

To find these orbits, the basic idea is to consider the monomials in the input variables $\{\tX_{i,j}: i,j \in [n]\}$ and an additional variable $\{\delta_{i,j}: i,j \in [n]\}$ to denote the possible output entries of the equivariant map. Any given monomial $M(\tX, \delta^{a,b} )$ takes the form
\begin{equation}
    M(\tX, \delta^{a,b} )_{i,j} = \delta^{a,b}_{i,j} \prod_{r,s=1}^n \tX_{r,s}^{\tA_{r,s}},
\end{equation}
where $\tA\in \Nat_0^{n^2}$, $\Nat_0=\set{0,1,\ldots}$ is the matrix of powers, and  $\delta^{a,b}_{i,j}=1$ if $a=i$, $b=j$, and $\delta^{a,b}_{i,j}=0$ otherwise. A natural way to encode these monomials is with \emph{labeled} multi-graphs $H=(V,E,(a,b))$, where $V=[n]$, $E$ is defined by the adjacency matrix $\tA$, and $(a,b)$ is a special (red) edge. We therefore denote $M=M_H$.

These monomials can be projected onto equivariant polynomials via the Reynolds operator that takes the form,
\begin{equation}\label{e:H_orbits}
\begin{split}
    Q_H(\tX) &= \sum_{g\in S_n} g\cdot M_H(g^{-1}\cdot \tX, \delta^{g(a),g(b)}) \\
    &= \sum_{g\in S_n} M_{g\cdot H} (\tX, \delta^{a,b} ),
\end{split}
\end{equation}
where the action of $g\in S_n$ on the multi-graph $H$ is defined (rather naturally) as node relabeling of $H$ using the permutation $g$. 

From the above, we note: (i) $Q_H$ sums all monomials with multi-graphs in the orbit of $H$, namely $[H]=\set{g\cdot H\vert g\in S_n}$. This shows that, in contrast to $M_H$, $Q_H$ is represented by an \emph{unlabeled} multi-graph $H$ and enumerated by \emph{non-isomorphic} multi-graphs $H$. (ii) Since the symmetrization is a projection operator, any equivariant polynomial is spanned by $Q_H$. (iii) Since each $Q_H$ is a sum of $M_H$ belonging to a different orbit, and since orbits are disjoint, the set $\set{Q_H}$ for non-isomorphic $H$ is linearly independent. These three points establish that $\set{Q_H}$ for non-isomorphic multi-graphs $H$ is a basis of equivariant graph polynomials. 

Noting that $Q_H(\tX)_{ij}$ includes only terms for which $\delta^{g(a),g(b)}=\delta^{i,j}$, the explicit form below can be derived:
\begin{equation}\label{e:Q_H}
    Q_H(\tX)_{i_{a},i_b} = \sum_{\substack{j_1 \neq \ldots \neq  j_m \in [n]\\ j_a=i_a, j_b=i_b}}\prod_{(r,s)\in E} \tX_{j_r,j_s}.
\end{equation} $Q_H$ is similar to $P_H$ in \eqref{e:P_H}, except we only sum over non-repeated indices. The proof in \Cref{a:proof_of_poly} shows that $P_H$ is also a basis for such equivariant polynomials. 

\begin{figure}[t]
  \centering
    \resizebox{\columnwidth}{!}{
    \newcolumntype{?}{!{\vrule width 3pt}}
    \begin{tabular}{@{}c@{\hspace{2pt}}?c@{\hspace{2pt}}|@{\hspace{2pt}}c@{\hspace{2pt}}|@{\hspace{2pt}}c@{\hspace{2pt}}|@{\hspace{2pt}}c@{\hspace{2pt}}|@{\hspace{2pt}}c@{}}
\begin{tikzpicture}[main/.style = {draw, circle},baseline=0pt,semithick] 
\node[main] (1) [above of=1] {};
\path (1) edge [loop above, color=red] (1);
\end{tikzpicture}  & 
\begin{tikzpicture}[main/.style = {draw, circle},baseline=0pt,semithick] 
\node[main] (1) [] {};
\node[main] (2) [above of=1] {};
\draw[<-] (1) edge (2);
\path (2) edge [loop above, color=red] (2);
\end{tikzpicture} & 
\begin{tikzpicture}[main/.style = {draw, circle},baseline=0pt,semithick] 
\node[main] (1) [] {};
\node[main] (2) [above of=1] {};
\draw[<-] (2) edge (1);
\path (2) edge [loop above, color=red] (2);
\end{tikzpicture} & 
\begin{tikzpicture}[main/.style = {draw, circle},baseline=0pt,semithick] 
\node[main] (1) [] {};
\node[main] (2) [above of=1] {};
\path (1) edge [loop above] (1);
\path (2) edge [loop above, color=red] (2);
\end{tikzpicture} & 
\begin{tikzpicture}[main/.style = {draw, circle},baseline=0pt,semithick] 
\node[main] (2) [above of=1] {};
\path (2) edge [loop above] (2);
\path (2) edge [loop above, color=red] (2);
\path (2) edge [loop below, color=black] (2);
\end{tikzpicture} &
\begin{tikzpicture}[main/.style = {draw, circle},baseline=0pt,semithick] 
\node[main] (1) [] {};
\node[main] (2) [above of=1] {};
\node[main] (3) [right of=1]{};
\draw[->] (1) edge (3);
\path (2) edge [loop above, color=red] (2);
\end{tikzpicture}   \\  \hline 
\begin{tikzpicture}[main/.style = {draw, circle},baseline=0pt,semithick] 
\node[main] (1) [] {};
\node[main] (3) [right of=1] {};
\draw[->] (1) edge [color=red] (3);
\end{tikzpicture} &
\begin{tikzpicture}[main/.style = {draw, circle},baseline=0pt,semithick] 
\node[main] (1) [] {};
\node[main] (3) [right of=1] {};
\draw[->,bend right=30] (1) edge (3);
\draw[->,bend left=30] (1) edge [color = red] (3);
\end{tikzpicture} &
\begin{tikzpicture}[main/.style = {draw, circle},baseline=0pt,semithick] 
\node[main] (1) [] {};
\node[main] (3) [right of=1] {};
\draw[<-,bend right=30] (1) edge (3);
\draw[->,bend left=30] (1) edge [color = red] (3);
\end{tikzpicture} &
\begin{tikzpicture}[main/.style = {draw, circle},baseline=0pt,semithick] 
\node[main] (1) [] {};
\node[main] (3) [right of=1] {};
\path (1) edge [loop above] (1);
\draw[->] (1) edge [color = red] (3);
\end{tikzpicture} & 
\begin{tikzpicture}[main/.style = {draw, circle},baseline=0pt,semithick] 
\node[main] (1) [] {};
\node[main] (3) [right of=1] {};
\path (3) edge [loop above] (3);
\draw[->] (1) edge [color = red] (3);
\end{tikzpicture} & 
\begin{tikzpicture}[main/.style = {draw, circle},baseline=0pt,semithick] 
\node[main] (1) [] {};
\node[main] (3) [right of=1]{};
\node[main] (2) [below of=1]{};
\path (2) edge [loop above] (2);
\draw[->] (1) edge [color = red] (3); 
\end{tikzpicture} \\  \hline 
&
\begin{tikzpicture}[main/.style = {draw, circle},baseline=0pt,semithick] 
\node[main] (1) [] {};
\node[main] (2) [right of=1]{};
\node[main] (3) [below of=1]{};
\draw[->] (1) edge (3);
\draw[->] (1) edge [color = red] (2);
\end{tikzpicture} &
\begin{tikzpicture}[main/.style = {draw, circle},baseline=0pt,semithick] 
\node[main] (1) [] {};
\node[main] (2) [right of=1]{};
\node[main] (3) [below of=1]{};
\draw[<-] (1) edge (3);
\draw[->] (1) edge [color = red] (2);
\end{tikzpicture} &
\begin{tikzpicture}[main/.style = {draw, circle},baseline=0pt,semithick] 
\node[main] (1) [] {};
\node[main] (2) [right of=1]{};
\node[main] (3) [below of=1]{};
\draw[<-] (3) edge (2);
\draw[->] (1) edge [color = red] (2);
\end{tikzpicture} &
\begin{tikzpicture}[main/.style = {draw, circle},baseline=0pt,semithick] 
\node[main] (1) [] {};
\node[main] (2) [right of=1]{};
\node[main] (3) [below of=1]{};
\draw[->] (3) edge (2);
\draw[->] (1) edge [color = red] (2);
\end{tikzpicture} &
\begin{tikzpicture}[main/.style = {draw, circle},baseline=0pt,semithick] 
\node[main] (1) [] {};
\node[main] (2) [right of=1]{};
\node[main] (3) [below of=1]{};
\node[main] (4) [right of=3]{};
\draw[->] (3) edge (4);
\draw[->] (1) edge [color = red] (2); 
\end{tikzpicture} \\ 
\end{tabular}}
\caption{Basis of equivariant constant (left of bold line) and linear (right of bold line) graph polynomials.}\label{fig:linear_basis}
\end{figure}
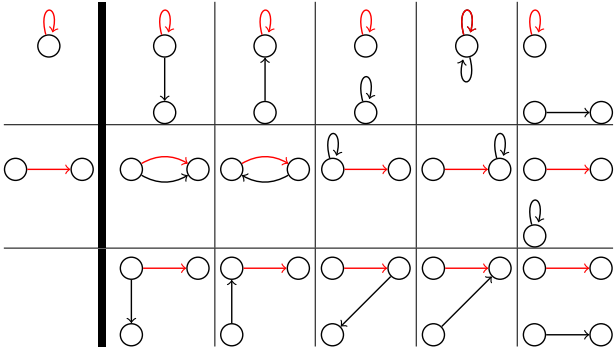
%

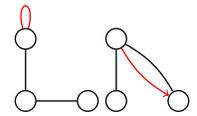
\begin{wrapfigure}[7]{r}{.3\columnwidth}
  \begin{center}\vspace{-20pt}\hspace{-10pt}
  \resizebox{.30\columnwidth}{!}{
  \begin{tabular}{@{\hskip0pt}c@{\hskip3pt}c@{\hskip0pt}}    
    \begin{tikzpicture}[main/.style = {draw, circle},baseline=5pt,semithick, every loop/.style={}] 
\node[main] (1) [] {};
\node[main] (2) [above of=1] {};
\node[main] (3) [right of=1] {};
\draw[-] (1) edge (2);
\draw[-] (3) edge (1);
\path (2) edge [loop above, color=red] (2);
\end{tikzpicture} 
        & 
\begin{tikzpicture}[main/.style = {draw, circle},baseline=5pt,semithick] 
\node[main] (1) [] {};
\node[main] (2) [above of=1] {};
\node[main] (3) [right of=1] {};
\draw[-] (1) edge (2);
\draw[->, color=red, bend right=15] (2) edge (3);
\draw[-, bend left=15] (2) edge (3);
\end{tikzpicture} \end{tabular} }
    \end{center}\vspace{-10pt}
  \caption{Simple $H$ corresponding to simple graph data.}\label{fig:symmetric_basis_examples}
\end{wrapfigure}
\textbf{Simple graphs.} It is often the case that the input data $\tX$ is restricted to some subdomain of $\Real^{n^2}$, e.g., 
symmetric $0/1$ matrices with diagonal entries set to zero correspond to simple graph data.
In such cases, polynomials $P_H$ that correspond to 
different multi-graphs $H$ can coincide, resulting in a smaller basis. 
For simple graph data $\tX$, existence of self loops in $H$ would result in $P_H(\tX)=0$, parallel edges in $H$ can be replaced with single edges without changing the value of $P_H(\tX)$, and since the direction of black edges in $H$ do not change the value of $P_H(\tX)$ we can consider only \emph{undirected} multi-graphs $H$. That is, for simple graph data it is enough to consider simple graphs $H$ (ignoring the red edge). Figure \ref{fig:symmetric_basis_examples} shows two examples of $H$ for simple graph data. 

\textbf{Example: linear basis.} \ Employing Theorem \ref{thm:poly} for the $d=0,1$ case reproduces the graph equivariant constant and linear functions from \citet{maron2018invariant}. Figure \ref{fig:linear_basis} depicts the graphical enumeration of the 2 constant and 15 linear basis elements. 

\begin{figure*}[t]
    \centering  
    \resizebox{\textwidth}{!}{
    \begin{tabular}{@{\hskip0pt}c@{\hskip0pt}c@{\hskip0pt}c@{\hskip0pt}c@{\hskip0pt}c@{\hskip0pt}c@{\hskip0pt}c@{\hskip0pt}}
\begin{tikzpicture}[main/.style = {draw, circle},baseline=5pt,semithick,every loop/.style={}]
\node[main] (1) [] {{\tiny $3$}};
\node[main] (2) [above of=1] {{\tiny $1$}};
\node[main] (3) [right of=1] {{\tiny $2$}};
\node[main] (4) [left of=2] {{\tiny $4$}};
\path 
(2) edge [->, bend left, color=red] node {}  (3)
(2) edge [->] node {$\tX$} (1)
(4) edge [<-] node {$\tX$} (2)
(2) edge [->] node {$\tX$} (3)
(1) edge [->] node {$\tX$} (3);
\end{tikzpicture}  
&
\makecell[b]{\begin{tikzpicture}[main/.style = {draw, circle},baseline=5pt,semithick,every loop/.style={}]
\node[main] (1) [] {{\tiny $1$}};
\node[main] (2) [left of=1,fill=lightgray] {{\tiny $4$}};
\path 
(1) edge [->] node {}  (2)
(1) edge [-, loop above, color=red] node {} (1);
\end{tikzpicture} \quad  $\tY_{j_1j_1}=\displaystyle\sum_{j_4}\tX_{j_1j_4}$      \\ \vspace{+0pt}
     $ \xrightarrow{\hspace*{140pt}} $
}

&
\begin{tikzpicture}[main/.style = {draw, circle},baseline=5pt,semithick,every loop/.style={}]
\node[main] (1) [] {{\tiny $3$}};
\node[main] (2) [above of=1] {{\tiny $1$}};
\node[main] (3) [right of=1] {{\tiny $2$}};
\path 
(2) edge [->, bend left, color=red] node {}  (3)
(2) edge [->] node {$\tX$} (1)
(2) edge [->] node {$\tX$} (3)
(2) edge [-,loop above] node {$\tY$} (2)
(1) edge [->] node {$\tX$} (3);
\end{tikzpicture} 

&

\makecell[b]{\begin{tikzpicture}[main/.style = {draw, circle},baseline=5pt,semithick,every loop/.style={}]
\node[main] (1) [] {{\tiny $2$}};
\node[main] (2) [left of=1,fill=lightgray] {{\tiny $3$}};
\node[main] (3) [above of=2] {{\tiny $1$}};
\path 
(3) edge [->] node {}  (2)
(2) edge [->] node {}  (1)
(3) edge [->, color=red] node {} (1); 
\end{tikzpicture}   $\tZ_{j_1j_2}=\displaystyle\sum_{j_3}\tX_{j_1j_3}\tX_{j_3j_2}$     \\ \vspace{+0pt}
     $ \xrightarrow{\hspace*{150pt}} $
}

&

\begin{tikzpicture}[main/.style = {draw, circle},baseline=5pt,semithick,every loop/.style={}]
\node[main] (3) [] {{\tiny $2$}};
\node[main] (1) [left of=3, draw=none] {};
\node[main] (2) [above of=1] {{\tiny $1$}};
\path 
(2) edge [->, bend left, color=red] node {}  (3)
(2) edge [->, bend right] node {$\tZ$}  (3)
(2) edge [->] node {$\tX$} (3)
(2) edge [-,loop above] node {$\tY$} (2);
\end{tikzpicture}  

&

\makecell[b]{\begin{tikzpicture}[main/.style = {draw, circle},baseline=5pt,semithick,every loop/.style={}]
\node[main] (3) [] {{\tiny $2$}};
\node[main] (2) [above left of=3] {{\tiny $1$}};
\path 
(2) edge [->, bend left, color=red] node {}  (3)
(2) edge [->, bend right] node {}  (3)
(2) edge [->] node {} (3)
(2) edge [-,loop above] node {} (2);
\end{tikzpicture}  $\tR_{j_1j_2}=\tY_{j_1j_1}\tZ_{j_1j_2}\tX_{j_1j_2}$      \\ \vspace{+0pt}
     $ \xrightarrow{\hspace*{140pt}} $
}

&

\begin{tikzpicture}[main/.style = {draw, circle},baseline=5pt,semithick,every loop/.style={}]
\node[main] (3) [] {{\tiny $2$}};
\node[main] (2) [above left of=3] {{\tiny $1$}};
\path 
(2) edge [->, bend left, color=red] node {}  (3)
(2) edge [->] node {\hspace{-10pt}\vspace{5pt}$\tR$} (3);
\end{tikzpicture}  
\end{tabular} }   
    \caption{Computation of $P_H(\tX)$ with a sequence of tensor contractions: The polynomial $P_H(\tX)$ (left-most) is computed when a single black edge parallel to the red edge is left (right-most); above each arrow is the tensor contraction applied (contracted nodes are in gray). \vspace{-5pt}
    }
    \label{fig:computation_of_P_H}
\end{figure*}
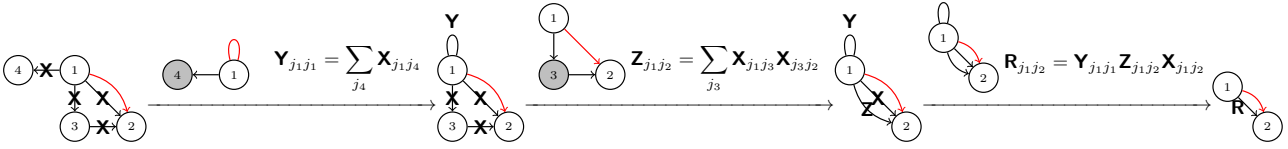

\textbf{Computing $P_H(\tX)$ with tensor contractions.}
A useful observation for the graph model analysis performed later is that computing $P_H(\tX)$ is equivalent to a tensor contraction guided by $H$. Similarly to $\texttt{einsum}$, computing $P_H(\tX)$ can be done iteratively in multiple ways by finding a sequence of contraction paths for $H$ where we start with each edge of $H$ endowed with $\tX$ and our end goal is to have a single black edge aligned with the red edge.
 Figure~\ref{fig:computation_of_P_H} provides an example of computing a 4th degree polynomial,
 \begin{equation*}
     P_H(\tX)_{j_1,j_2} = \sum_{j_3,j_4=1}^n\tX_{j_1j_4}\tX_{j_1j_3}\tX_{j_3j_2}\tX_{j_1j_2}.
 \end{equation*}
 The computation of the polynomial is decomposed to a sequence of operations, portrayed in the figure. Each step is labeled by the tensor contraction operation and the corresponding explicit computation. Nodes colored in gray correspond to contracted nodes whose indices are summed in the $\texttt{einsum}$. The output of each contraction step is represented by a new black edge (labeled as $\tY,\tZ$ and $\tR$ in our example).
\subsection{Generalizations and discussion}\label{ss:gen_and_discussion}

\textbf{Invariant graph polynomials.}
This approach also gives a basis for the invariant polynomials $P: \Real^{n^2} \to \Real$. In this case, we let $H$ be a directed multigraph without a red edge, and define $P_H(\tX) = \sum_{j_1, \ldots, j_m \in [n]} \prod_{r,s \in E(H)} \tX_{j_r,j_s}$. Computing $P_H$ then corresponds to contracting $H$ to the trivial graph (with no nodes or edges). Our equivariant basis is a generalization of previous work, which used invariant polynomials analogous to $P_H$ or the alternative basis $Q_H$ to study properties of graphs~\citep{thiery2000algebraic, lovasz2012large, komiske2018energy}.

\textbf{Subgraph counting.}
The previous work on invariant polynomials mentioned above as well as our proof of Theorem~\ref{thm:poly} suggest $Q_H$ (see \eqref{e:Q_H}) as another basis of equivariant graph polynomials. In Appendix~\ref{appendix:homomorphism}, we show that when applied to binary input $\tX \in \{0, 1\}^{n \times n}$, $Q_H$ performs subgraph counting; essentially, $Q_H(\tX)_{i_a, i_b}$ is proportional to the number of subgraphs of $\tX$ isomorphic to $H$ such that $i_a$ is mapped to $a$ and $i_b$ is mapped to $b$. This $Q_H$ basis is interpretable, but does not lend itself to efficient vectorized computation or the tensor contraction perspective that the $P_H$ basis has.

\textbf{Equivariant polynomials for attributed graphs.}
Our basis for equivariant graph polynomials can be extended to cover the more general case of \emph{attributed graphs} (\ie, graphs with $\Real^f$ features attached to nodes and/or edges), $P:\Real^{n^2\times f}\too\Real^{n^2}$. A similar basis to $P_H$ can be used in this case, as described in \Cref{a:attributed_graphs}. \Cref{fig:attr_poly} visualizes this extension. 
\begin{figure}[H]
    \centering      \resizebox{\columnwidth}{!}{
    \begin{tabular}{cc}
  \makecell[c]{\begin{tikzpicture}[main/.style = {draw, circle},baseline=5pt,semithick] 
\node[main] (1) [minimum size=18pt] {1};
\node[main] (2) [above of=1, minimum size=18pt] {2};
\node[main] (3) [right of=1, minimum size=18pt] {3};
\draw[<-,color = green] (1) edge (2);
\draw[<-, bend left=15,color = blue] (3) edge (1);
\draw[<-, bend left=15, bend right=15,color = orange] (3) edge (1);
\path (2) edge [densely dotted, loop above, color=red] (2);
\end{tikzpicture}\\ \  }  &\makecell[bl]{ $P_H(\tX)_{j_2j_2}=$\\[5pt]
$\sum_{j_1,j_3=1}^n \tX_{j_2j_1\textcolor{green}{1}} \tX_{j_1j_3\textcolor{orange}{0}}\tX_{j_1j_3\textcolor{blue}{2}}=$\\[5pt]
      $\texttt{einsum}(21,13,13 \rightarrow 22 ,\tX[:,:,\textcolor{green}{1}],\tX[:,:,\textcolor{orange}{0}],\tX[:,:,\textcolor{blue}{2}])_{j_2j_2}$ \vspace{-22pt}  }
     \vspace{-7pt}\\\hline  \vspace{-7pt}     \\
\makecell[c]{
\begin{tikzpicture}[main/.style = {draw, circle},baseline=5pt,semithick] 
\node[main] (1) [minimum size=18pt] {1};
\node[main] (2) [above of=1, minimum size=18pt] {2};
\node[main] (3) [right of=1, minimum size=18pt] {3};
\draw[<-, bend right = 15, densely dotted, color = red] (1) edge (2);
\draw[<-, bend left = 15, color = green] (1) edge (2);
\draw[->, color=orange, bend right=15] (2) edge (3);
\draw[->, bend left=15, color=orange] (2) edge (3);
\end{tikzpicture}   }  &\makecell[bl]{  $P_H(\tX)_{j_2j_1}=$\\[5pt]
$\sum_{j_3=1}^n \tX_{j_2j_3\textcolor{orange}{0}}^2 \tX_{j_2j_1\textcolor{green}{1}}=$\\[5pt]
      $\texttt{einsum}(23,23,21 \rightarrow 21 ,\tX[:,:,\textcolor{orange}{0}],\tX[:,:,\textcolor{orange}{0}],\tX[:,:,\textcolor{green}{1}])_{j_2j_1}$\vspace{-22pt} }
    \end{tabular}
    }
    \caption{Basis elements of equivariant polynomials from $\mathbb{R}^{n^2 \times 3} \to \mathbb{R}^{n^2}$. The output edge is  indicated by a dotted red edge and the feature dimension is indexed by three colors for index zero (orange), index one (green), and index two (blue).}\label{fig:attr_poly} 
\end{figure}



\section{Expressive Power of Graph Models}
In this section we evaluate the expressive power of equivariant graph models from the new, yet natural hierarchy arising from equivariant graph polynomials. By graph model, $\gF=\set{F}$, we mean any collection of equivariant functions $F:\Real^{n^2}\too \Real^{n^k}$, where $k=1$ corresponds to a family of node-valued functions, $F(\tX)\in\Real^n$, and $k=2$ to node and edge-valued functions, $F(\tX)\in\Real^{n^2}$. For expositional simplicity we focus on graph data $\tX$ representing simple graphs, but note that the general graph data case can be analysed using similar methods. We will use two notions of polynomial expressiveness: exact and approximate. The exact case is used for analyzing Prototypical graph models, whereas the approximate case is used for analyzing practical graph models.
\begin{definition}\label{def:exact}
A graph model $\gF$ is $d$ node/edge polynomial \emph{exact} if it can compute all the degree $d$ polynomial basis elements $P_H(\tX)$ for every simple graph $\tX$.
\end{definition}
\begin{definition}\label{def:expressive}
A graph model $\gF$ is $d$ node/edge polynomial \emph{expressive} if for arbitrary $\eps>0$ and degree $d$ polynomial basis element $P_H(\tX)$ there exists an $F\in \gF$ such that $\max_{\tX\text{ simple}} \abs{F(\tX)-P(\tX)} < \eps$.
\end{definition}

As a primary application of the equivariant graph basis $P_H$, we develop tools here for analyzing the polynomial expressive power of graph models $\gF$. We define \emph{Prototypical} graph models which provide a structure to analyze or improve existing popular GNNs such as MPNN \citep{gilmer2017neural} and PPGN~\citep{maron2019provably}.

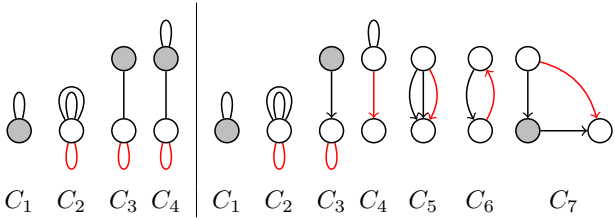
\begin{figure}[t]
    \centering
    \resizebox{\columnwidth}{!}{
    \begin{tabular}{@{\hskip0pt}c@{\hskip2pt}c@{\hskip2pt}c@{\hskip5pt}c|c@{\hskip2pt}c@{\hskip2pt}c@{\hskip5pt}c@{\hskip5pt}c@{\hskip5pt}c@{\hskip5pt}c@{\hskip0pt}}
   \begin{tikzpicture}[main/.style = {draw, circle},baseline=5pt,semithick,every loop/.style={}]
\node[main] (1) [fill=lightgray] {};
\path 
(1) edge [-, loop above] node {} (1);
\end{tikzpicture}
&  
 \begin{tikzpicture}[main/.style = {draw, circle},baseline=5pt,semithick,every loop/.style={}]
\node[main] (1) [] {};
\path 
(1) edge [-, loop below, color=red] node {} (1)
(1) edge [-, loop above] node {} (1)
(1) edge [out=120, in=60, loop, min distance = 20pt] (1);
\end{tikzpicture}
& 
\begin{tikzpicture}[main/.style = {draw, circle},baseline=5pt,semithick,every loop/.style={}]
\node[main] (2) [] {};
\node[main] (1) [above of=2, fill=lightgray] {};
\path 
(1) edge [-] node {} (2)
(2) edge [-, loop below, color=red] node {} (2);
\end{tikzpicture} & 
     \begin{tikzpicture}[main/.style = {draw, circle},baseline=5pt,semithick,every loop/.style={}]
\node[main] (2) [] {};
\node[main] (1) [above of=2, fill=lightgray] {};
\path 
(1) edge [loop above,-] node {} (1)
(1) edge [-] node {} (2)
(2) edge [-, loop below, color=red] node {} (2);
\end{tikzpicture} 
&
\begin{tikzpicture}[main/.style = {draw, circle},baseline=5pt,semithick,every loop/.style={}]
\node[main] (1) [fill=lightgray] {};
\path 
(1) edge [-, loop above] node {} (1);
\end{tikzpicture}
&  
 \begin{tikzpicture}[main/.style = {draw, circle},baseline=5pt,semithick,every loop/.style={}]
\node[main] (1) [] {};
\path 
(1) edge [-, loop below, color=red] node {} (1)
(1) edge [-, loop above] node {} (1)
(1) edge [out=120, in=60, loop, min distance = 20pt] (1);
\end{tikzpicture}
& 
\begin{tikzpicture}[main/.style = {draw, circle},baseline=5pt,semithick,every loop/.style={}]
\node[main] (2) [] {};
\node[main] (1) [above of=2, fill=lightgray] {};
\path 
(1) edge [->] node {} (2)
(2) edge [-, loop below, color=red] node {} (2);
\end{tikzpicture} 
&
\begin{tikzpicture}[main/.style = {draw, circle},baseline=5pt,semithick,every loop/.style={}]
\node[main] (2) [] {};
\node[main] (1) [above of=2] {};
\path 
(1) edge [-, loop above] node {} (1)
(1) edge [->, color=red] node {} (2);
\end{tikzpicture}
& 
\begin{tikzpicture}[main/.style = {draw, circle},baseline=5pt,semithick,every loop/.style={}]
\node[main] (2) [] {};
\node[main] (1) [above of=2] {};
\path 
(1) edge [->] node {} (2)
(1) edge [->, bend right] node {} (2)
(1) edge [->, bend left, color=red] node {} (2);
\end{tikzpicture}
& 
\begin{tikzpicture}[main/.style = {draw, circle},baseline=5pt,semithick,every loop/.style={}]
\node[main] (2) [] {};
\node[main] (1) [above of=2] {};
\path 
(1) edge [->, bend right] node {} (2)
(1) edge [<-, bend left, color=red] node {} (2);
\end{tikzpicture}
& 
\begin{tikzpicture}[main/.style = {draw, circle},baseline=5pt,semithick,every loop/.style={}]
\node[main] (2) [fill=lightgray] {};
\node[main] (1) [above of=2] {};
\node[main] (3) [right of=2] {};
\path 
(1) edge [->] node {} (2)
(2) edge [->] node {} (3)
(1) edge [->, bend left, color=red] node {} (3);
\end{tikzpicture}\\
$C_1$ & $C_2$ & $C_3$ & $C_4$ & $C_1$ & $C_2$ & $C_3$ & $C_4$ & $C_5$ & $C_6$ & $C_7$
\end{tabular} }
    \caption{Prototypical node-based (left) and edge-based (right)  graph models' contraction banks. Gray nodes indicate nodes that are contracted. Explicit formula of each element can be found in Appendix \ref{s:approximation_proof}. \vspace{-5pt}}
    \label{fig:ideal_graph_models}
\end{figure}
\vspace{-5pt}

\subsection{Prototypical graph models }
We consider graph computation models, $\gF_\gB$, that are finite sequences of tensor contractions taken from a bank of primitive contractions $\gB$. 
\begin{equation}
    \gF_\gB=\set{C_{i_1} C_{i_2} \cdots C_{i_\ell} \ \vert C_{i_j} \in \gB },
\end{equation}
where the bank 
    $\gB=\set{C_1,\ldots,C_k}$ 
consists of multi-graphs $C_i=(V_i,E_i,(a_i,b_i))$, each representing a different primitive tensor contraction. 
%
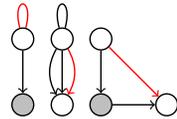
\begin{wrapfigure}[9]{r}{.30\columnwidth}
  \begin{center}\vspace{-12pt}\hspace{-10pt}
  \resizebox{.3\columnwidth}{!}{
  \begin{tabular}{@{\hskip3pt}c@{\hskip3pt}c@{\hskip3pt}c@{\hskip0pt}}
   \begin{tikzpicture}[main/.style = {draw, circle},baseline=5pt,semithick,every loop/.style={}]
\node[main] (2) [fill=lightgray] {};
\node[main] (1) [above of=2] {};
\path 
(1) edge [-, loop above, color=red] node {} (1)
(1) edge [->] node {} (2);
\end{tikzpicture}
& 
\begin{tikzpicture}[main/.style = {draw, circle},baseline=5pt,semithick,every loop/.style={}]
\node[main] (2) [] {};
\node[main] (1) [above of=2] {};
\path 
(1) edge [->] node {} (2)(1) edge [-, loop above] node {} (1)
(1) edge [->, bend right] node {} (2)
(1) edge [->, bend left, color=red] node {} (2);
\end{tikzpicture}
&
   \begin{tikzpicture}[main/.style = {draw, circle},baseline=5pt,semithick,every loop/.style={}]
\node[main] (1) [] {};
\node[main] (2) [left of=1,fill=lightgray] {};
\node[main] (3) [above of=2] {};
\path 
(3) edge [->] node {}  (2)
(2) edge [->] node {}  (1)
(3) edge [->, color=red] node {} (1); 
\end{tikzpicture}
    \end{tabular} }
    \end{center}\vspace{-10pt}
  \caption{A bank $\gB$ of a model $\gF_\gB$ that can compute the example in Figure \ref{fig:computation_of_P_H}.}\label{fig:F_simple}
\end{wrapfigure}
A model $\gF_\gB$ can compute a polynomial $P_H(\tX)=\texttt{einsum}(H,\tX,\ldots,\tX)$ 
if it can contract $H$ to the red edge by applying a finite sequence of contractions from its bank. 
If there exists such a sequence then $P_H$ is deemed computable by $\gF_\gB$, otherwise it is not computable by $\gF_\gB$. For example, the model with the bank presented in Figure \ref{fig:F_simple} can compute $P_H$ in Figure \ref{fig:computation_of_P_H}; removing any element from this model, will make $P_H(\tX)$ non-computable. We recap:
\vspace{5pt}
\begin{definition}
    The polynomial $P_H$ is \emph{computable by} $\gF$ iff there exists a sequence of tensor contractions from $\gF$ that computes $P_H(\tX)=\texttt{einsum}(H,\tX,\ldots,\tX)$.
\end{definition}
We henceforth focus on two Prototypical models: the node-based model $\gF_{n}$ and edge-based model $\gF_{e}$. Their respective contraction banks are depicted in Figure \ref{fig:ideal_graph_models}, each motivated by the desire to achieve polynomial exactness (see Definition \ref{def:exact}) and contract multi-graphs $H$ where a member of the bank can always be used to contract nodes with up to $N$ neighbors. Taking $N=1$ results in the node-based bank in Figure \ref{fig:ideal_graph_models} (left), and $N=2$ in the edge-based bank in Figure \ref{fig:ideal_graph_models} (right). These choices are not unique --- other contraction banks can satisfy these requirements.
\begin{lemma}\label{lem:always_can_contract_verts}
    $\gF_n$ (for simple graphs) and $\gF_e$ (for general graphs) can always contract a node in $H$ iff its number of neighbors is at-most $1$ and $2$, respectively. 
\end{lemma}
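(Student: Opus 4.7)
The plan is to establish both directions of the iff by direct inspection of the primitive-contraction banks depicted in Figure~\ref{fig:ideal_graph_models}. The claim is purely structural: it characterizes precisely which local neighborhoods of a node in $H$ admit a matching bank element.

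For the \emph{only if} direction, I would enumerate each element $C_i$ in each bank and read off the number of distinct neighbors of its gray (contracted) node. In the node-based bank $C_1,\ldots,C_4$, the gray node of every $C_i$ has at most one non-self-loop neighbor; in the edge-based bank $C_1,\ldots,C_7$, the gray node has at most two distinct neighbors, the maximum being realized by $C_7$ (the three-node path, matrix-multiplication-like pattern). Since a single contraction step can be applied to a node $v\in H$ only when the sub-pattern of $H$ formed by $v$ together with its incident edges and neighbors matches some $C_i$ with $v$ playing the role of the gray node, any contractible $v$ must have at most $N$ neighbors, where $N=1$ for $\gF_n$ and $N=2$ for $\gF_e$.

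For the \emph{if} direction, I would do a case split on the local configuration of a node $v$ with at most $N$ neighbors and exhibit a matching bank element in each case. In the simple-graph setting of $\gF_n$, edges are undirected and without multiplicity, and the relevant attributes are: the number of neighbors of $v$ (zero or one), whether $v$ carries a self-loop, and whether $v$ is incident to the red (output) edge. An isolated $v$ is handled by $C_1$ or $C_2$, while a $v$ with a single neighbor is handled by $C_3$ or $C_4$ according to the presence of a self-loop on $v$. For $\gF_e$ applied to general (directed multi-) graphs, I would enumerate by (i) number of neighbors $\in\{0,1,2\}$, (ii) presence/absence of a self-loop on $v$, (iii) orientations and multiplicities of the incident edges, and (iv) placement of the red-edge mark, and match each resulting configuration to one of $C_1,\ldots,C_7$. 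Configurations with $0$ or $1$ neighbors are absorbed into $C_1$--$C_6$, which jointly span all combinations of self-loops, edge directions, and parallel edges; the two-neighbor configurations are reduced (after, if needed, preliminary matrix-transpose relabelings implicit in the bank's directed variants) to $C_7$.

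The main obstacle is the combinatorial bookkeeping for $\gF_e$: one must verify that the seven primitives genuinely cover every directed configuration of a node with up to two neighbors, including all combinations of parallel edges, self-loops, edge orientations, and red-edge placements. This is a finite but finicky enumeration, and the design of the bank in Figure~\ref{fig:ideal_graph_models} is exactly what makes this enumeration close; indeed, the banks were constructed with this completeness property in mind, so the proof reduces to a careful checklist against the explicit formulas of $C_1,\ldots,C_7$ (provided in Appendix~\ref{s:approximation_proof}).
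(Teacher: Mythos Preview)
Your approach is the same as the paper's---direct inspection of the banks---but there is a real gap in your \emph{only if} direction. You argue that to contract $v$ one must match $v$ to the gray node of some single $C_i$, and since every gray node in either bank has at most $N$ neighbors, you are done. But in the paper's framing (see Theorem~\ref{thm:contraction} and Algorithm~\ref{alg:Q_H}), ``contracting a node'' means removing it via a \emph{sequence} of bank operations, not a single application. Several bank elements ($C_2$ in $\gF_n$; $C_2,C_4,C_5,C_6$ in $\gF_e$) do not remove any node at all, and a priori one might apply them first to alter $v$'s neighborhood before the final gray-node step. Your argument therefore needs the additional observation that none of these non-removing operations can eliminate an edge between two distinct nodes---they only affect self-loops, edge multiplicities, or orientations---so the number of distinct neighbors of $v$ is invariant until some node is actually deleted. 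The paper makes exactly this point: ``edges cannot be completely removed between nodes without contraction.''

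Your \emph{if} direction for $\gF_e$ is also underspecified in a way that hides the main subtlety. You propose an exhaustive case split matching each local configuration to a primitive, but the case where $v$ has one or two neighbors \emph{and} carries a self-loop is not handled by any single $C_i$: there is no gray-node primitive in $\gF_e$ whose gray node has both a self-loop and an incident edge. The paper resolves this with a multi-step recipe: use $C_4$ to convert the self-loop on $v$ into an additional edge from $v$ to one of its existing neighbors, then use $C_6$ (if needed) and $C_5$ to orient and merge the resulting parallel edges, leaving $v$ self-loop-free; only then does $C_3$ or $C_7$ apply. Your checklist plan would presumably discover this, but the proposal as written suggests each configuration maps to one primitive, which is false here.
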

A few comments are in order. Node-based contractions can only add self-edges during the contraction process (\ie, new node-valued data) thus requiring only $O(n)$ additional memory to perform computation. Further note that since we assume simple graph data, $H$ is also a simple graph, and no directed edges (\ie, non-symmetric intermediate tensors) are created during contraction. Contraction banks with undirected graphs suffice in this setting. We later show that the node-based model acts analogously to message-passing. 
The edge-based model targets exactness over both node and edge valued polynomial. It generates new edges that can be directed even if $\tX$ is simple, and thus includes directed contractions in its bank. The edge-based model
will later be connected to the graph models PPGN~\citep{maron2019provably} and 
Ring-GNN~\citep{chen2019equivalence}. We later show that the node-based model is 1-WL expressive and the edge-based model is 3-WL expressive.

\begin{algorithm}[tb]
   \caption{Decide if $P_H$ is computable by $\gF_{n}$ or $\gF_{e}$.}
   \label{alg:Q_H}
\begin{algorithmic}
   \STATE {\bfseries Input:} contraction bank $\gF\in\set{\gF_{n},\gF_e}$, multi-graph $H$ 
   \item set $d=1$ for $\gF=\gF_n$, or $d=2$ for $\gF=\gF_e$.
   \item set $doneContracting=false$
   \WHILE{not $doneContracting$ }
   \IF{exists a node in $V\setminus \set{a,b}$ with $\leq d$ neighbors}
   \item contract the node using $\gF$ 
   \ELSE
   \item $doneContracting=true$
   \ENDIF   
   \ENDWHILE
   \IF{$V\setminus \set{a,b}$ is empty}
   \item return \texttt{computable}
   \ELSE
   \item return \texttt{non-computable}
   \ENDIF   
\end{algorithmic}
\end{algorithm}

\textbf{Deciding computability of $P_H(\tX)$ with $\gF_\gB$.}
A key component in analyzing the expressive power of a Prototypical model is determining which polynomials $P_H(\tX)$ can be computed with $\gF$, given $H$ and $\tX$ encoding simple graph data. A naive algorithm traversing all possible enumerations of nodes in $H$ and their contractions would lead to a combinatorial explosion that is too costly --- especially since this procedure needs to be repeated for a large number of polynomials. Here, we show that at least for contraction banks $\gF_n$ and $\gF_e$, Algorithm \ref{alg:Q_H} is a linear time (in $|V|,|E|$), \emph{greedy} algorithm for deciding computability of a given polynomial. Algorithm \ref{alg:Q_H} finds a sequence of contractions using the greedy step until no more nodes are left to contract. That is, it terminates when all nodes, aside from $a,b$, have more than 1 or 2 neighbors for $\gF_n$ or $\gF_e$, respectively. If it terminates with just $\set{a,b}$ as vertices it deems $P_H$ computable and otherwise it deems $P_H$ non-computable. To show correctness of this algorithm we prove:
\begin{theorem}\label{thm:contraction}
    Let $H$ be some multi-graph and $\gF_\gB\in\set{\gF_n,\gF_e}$. Further, let $H'$ be the multi-graph resulting after contracting a single node in $H$ using one or more operations from $\gB$ to $H$. Then, $H$ is $\gF_\gB$-computable iff $H'$ is $\gF_\gB$-computable. 
\end{theorem}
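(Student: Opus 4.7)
The plan is to prove the two implications separately, with the reverse direction being essentially immediate and the forward direction reducing to a commutativity property of node contractions.

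\textbf{Easy direction ($\Leftarrow$).} Suppose $H'$ is $\gF_\gB$-computable via operations $C_{i_1},\ldots,C_{i_\ell}\in\gB$. Then $H$ is $\gF_\gB$-computable by first performing the one-or-more operations from $\gB$ that contract the chosen node (these operations exist by Lemma \ref{lem:always_can_contract_verts}, since that node has at most $d$ neighbors in $H$, with $d=1$ for $\gF_n$ and $d=2$ for $\gF_e$) to obtain $H'$, and then appending $C_{i_1},\ldots,C_{i_\ell}$.

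\textbf{Main direction ($\Rightarrow$).} Assume $H$ is $\gF_\gB$-computable. Any valid computation induces an ordering $\pi=(u_1,\ldots,u_r)$ of $V\setminus\set{a,b}$ in which nodes are fully contracted; let $v$ denote the single node contracted in the step $H\to H'$, and let $j$ be the index with $v=u_j$. My plan is to show that the reordered sequence $\pi'=(v,u_1,\ldots,u_{j-1},u_{j+1},\ldots,u_r)$ is also a valid contraction ordering. Since $\pi'$ begins by contracting $v$ and produces $H'$, its tail is a valid $\gF_\gB$-computation of $H'$, establishing the claim.

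\textbf{Key lemma (monotonicity of neighbor counts).} The whole argument pivots on the property: \emph{for both $\gF_n$ and $\gF_e$, contracting any single node $u$ does not increase the neighbor count of any other node.} I would establish this by direct case analysis over the bank operations of Figure \ref{fig:ideal_graph_models}. For $\gF_n$, contracting $u$ only inserts self-loops on $u$'s (at most one) neighbor, so for every other node the neighbor set either is unchanged or shrinks by one (losing $u$). For $\gF_e$, contracting $u$ with two neighbors $v',w'$ additionally inserts (or strengthens) an edge $v'w'$; this swaps $u$ for $w'$ in $v'$'s neighbor set, so $v'$'s neighbor count stays the same if $w'$ was not already a neighbor of $v'$, and decreases otherwise. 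Self-loops and parallel edges never count towards the ``number of neighbors,'' so the bound is preserved.

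\textbf{Validity of $\pi'$.} Step 1 of $\pi'$ contracts $v$, which has $\leq d$ neighbors in $H$ by hypothesis, hence is legal by Lemma \ref{lem:always_can_contract_verts}. For each subsequent step, the graph reached in $\pi'$ differs from the graph reached at the analogous step in $\pi$ only by having additionally removed $v$ (and the contractions induced by removing $v$). Iterating the key lemma shows that the neighbor count of the node being contracted at any step of $\pi'$ is bounded above by its neighbor count at the corresponding step in $\pi$, which was $\leq d$ by the validity of $\pi$. Lemma \ref{lem:always_can_contract_verts} then guarantees that each contraction in $\pi'$ is realizable by operations from $\gB$.

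\textbf{Expected main obstacle.} The delicate point is the rigorous verification of the key lemma in the $\gF_e$ case: contracting a 2-neighbor node creates a new edge between the former neighbors, and one must carefully handle the interaction with pre-existing parallel edges, self-loops, and edge directions. A secondary bookkeeping issue is making precise the notion of ``the node contracted at step $k$ of $\sigma$'' when a single node contraction may be implemented by several bank operations applied in succession; this is handled by grouping consecutive operations that act on the same node into a single ``node-contraction block,'' which is always possible for $\gF_n$ and $\gF_e$ since the bank operations that do not remove a node (e.g.\ collapsing parallel edges or self-loops) only affect edges incident to the current target.
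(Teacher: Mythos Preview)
Your high-level plan matches the paper's: the easy direction is identical, and for the hard direction both you and the paper reuse the original contraction order for $H$ with the special node $v$ moved to the front (equivalently, removed from the sequence for $H'$). Your monotonicity lemma (``contracting a node never raises any other node's neighbor count'') is correct and is indeed the reason the argument works. However, there is a real gap at the sentence ``the graph reached in $\pi'$ differs from the graph reached at the analogous step in $\pi$ only by having additionally removed $v$.'' This is a \emph{commutativity} claim---that contracting $\{v,u_1,\ldots,u_{k-1}\}$ in order $(v,u_1,\ldots,u_{k-1})$ produces the same neighbor structure as the order $(u_1,\ldots,u_{k-1},v)$---and it does not follow from monotonicity. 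Monotonicity bounds neighbor counts after one contraction from a \emph{fixed} graph; your induction compares two \emph{different} graphs $H_k$ and $H'_k$, and to invoke monotonicity you need to know that $H'_k$ is literally $H_k$ with $v$ contracted, which is exactly the unproved commutativity.

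The paper avoids stating commutativity outright by maintaining a more specific invariant by induction on $k$: for any pair of nodes $i,j$ at least one of which lies outside the current $1$-ring of $v$ (in $H_k$), the adjacency of $i,j$ is identical in $H_k$ and $H'_k$. This is proved by a three-case analysis depending on whether the node being contracted at step $k-1$ is $v$, a neighbor of $v$, or neither, using only that each bank contraction affects only the $1$-ring of the contracted node (the paper's Lemma~\ref{lem:gF_only_affects_1_ring}). From this invariant, the bound on $u_k$'s neighbor count in $H'_k$ drops out by a short case split (the paper's Lemma~\ref{lem:assumption_implies_k_can_be_reduced}). Your route can be completed by instead proving pairwise commutativity of node contractions at the neighbor-structure level (a small case analysis: $u$ and $v$ non-adjacent with disjoint neighbor sets; $u,v$ adjacent; $u,v$ sharing a neighbor), which then legitimises your asserted identity $H'_k = (H_k \text{ with } v \text{ contracted})$; but as written, that step is missing, and it is precisely the crux of the hard direction.
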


To verify the correctness of this procedure, note that the algorithm has to terminate after at most $|V|-|\set{a,b}|$ node contractions. Now consider two cases: if the algorithm terminates successfully, it must have found a sequence of tensor contractions to compute $P_H(\tX)$. If it terminates unsuccessfully, the theorem implies its last network $H'$ is computable iff the input network $H$ is computable. Now since there is \emph{no} further node contraction possible to do in $H'$ using operations from $\gB$ it is not computable by definition, making $H$ not computable.

\begin{wrapfigure}[10]{r}{.4\columnwidth}
  \begin{center}\vspace{-20pt}\hspace{-10pt}
  \resizebox{.40\columnwidth}{!}{
  \begin{tabular}{@{\hskip0pt}c@{\hskip3pt}c@{\hskip3pt}c@{\hskip0pt}}
    
    \begin{tikzpicture}[main/.style = {draw, circle},baseline=5pt,semithick,every loop/.style={}]
\node[main] (1) [] {};
\node[main] (2) [above of=1] {};
\node[main] (3) [right of=1] {};
\draw[-] (1) edge (2);
\draw[-] (2) edge (3);
\draw[-] (3) edge (1);
\path (2) edge [-,loop above, color=red] (2);
\end{tikzpicture} 
        & 
\begin{tikzpicture}[main/.style = {draw, circle},baseline=5pt,semithick,every loop/.style={}]
\node[main] (1) [] {};
\node[main] (2) [above of=1] {};
\node[main] (3) [right of=1] {};
\node[main] (4) [above of=3] {};
\draw[-, loop above, color=red] (2) edge (2);
\draw[-] (1) edge (2);
\draw[-] (1) edge (3);
\draw[-] (1) edge (4);
\draw[-] (2) edge (3);
\draw[-] (2) edge (4);
\draw[-] (4) edge (3);
\end{tikzpicture} 
& 
\begin{tikzpicture}[main/.style = {draw, circle},baseline=5pt,semithick,every loop/.style={}]
\node[main] (1) [] {};
\node[main] (2) [above of=1] {};
\node[main] (3) [right of=1] {};
\node[main] (4) [above of=3] {};
\draw[<-, color=red] (1) edge (2);
\draw[-] (1) edge (3);
\draw[-] (1) edge (4);
\draw[-] (2) edge (3);
\draw[-] (2) edge (4);
\draw[-] (4) edge (3);
\end{tikzpicture}
\end{tabular} }
    \end{center}\vspace{-10pt}
  \caption{The smallest non-computable $H$ for: $\gF_n$ (left: node-valued), and $\gF_e$ (middle: node-valued; right: edge-valued).}\label{fig:failure_H_for_Fn_Fe}
\end{wrapfigure}
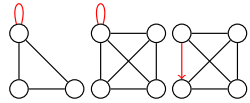
\textbf{Polynomial exactness.} To compute the $d$ polynomial exactness (see Definition \ref{def:exact}) for the node-based and edge-based Prototypical graph models we enumerate all non-isomorphic simple graphs $H$ with up to $d$ edges and one red edge and run Algorithm \ref{alg:Q_H} on each $H$. This reveals that the node-based model $\gF_n$ is 2-node-polynomial-exact, while the edge-based model $\gF_e$ is 5-node-polynomial-exact and 4-edge-polynomial-exact. See Figure \ref{fig:failure_H_for_Fn_Fe} for the lowest degree polynomials, represented by $H$ with the smallest number of edges, that are non-computable for $\gF_n$ and $\gF_e$. 

\textbf{$k$-WL expressive power.} For simple graphs, there is a natural connection between our Prototypical graph models and the $k$-WL graph isomorphism tests. This stems from a result of \citet{dvovrak2010recognizing, dell2018lov}, which states that two graphs $\tX^{(1)}$ and $\tX^{(2)}$ are $k$-FWL equivalent if and only if $\hom(H, \tX^{(1)}) = \hom(H, \tX^{(2)})$ for all $H$ of tree-width at most $k$. Recall that $\hom(H, \tX)$ is the number of homomorphisms from $H$ to $\tX$ (where $H$ has no red edges), which we show is equivalent to the output of $P_H(\tX)$ for the invariant polynomial $P_H$ in Appendix~\ref{appendix:homomorphism}. By showing that the Prototypical node-based graph model can contract any $H$ of tree-width 1 (and no others), and that the Prototypical edge-based graph model can contract any $H$ of tree-width at most 2 (and no others), we thus have the following result.
\begin{proposition}\label{prop:tree-width}
    The Prototypical node-based model can distinguish a pair of simple graphs if and only if  1-WL can. The Prototypical edge-based model can distinguish a pair of simple graphs if and only if 3-WL / 2-FWL can.
\end{proposition}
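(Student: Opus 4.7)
My plan is to reduce the proposition to a combinatorial characterization of which multigraphs $H$ are computable in each Prototypical model, and then plug that into the Dvořák--Dell theorem already cited in the text. Concretely, the appendix shows $P_H(\tX) = \hom(H,\tX)$ for the invariant version of $P_H$ (no red edge), and Dvořák--Dell states that two simple graphs $\tX^{(1)},\tX^{(2)}$ are $k$-FWL equivalent iff they agree on $\hom(H,\cdot)$ for every $H$ with $\mathrm{tw}(H)\leq k$. Hence it suffices to show (a) $\gF_n$ computes exactly the invariant $P_H$ with $\mathrm{tw}(H)\leq 1$, and (b) $\gF_e$ computes exactly the invariant $P_H$ with $\mathrm{tw}(H)\leq 2$. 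With these in hand, two simple graphs are $\gF_n$-distinguishable iff some forest $H$ separates them (iff $1$-WL separates them), and similarly $\gF_e$-distinguishable iff some tree-width-$2$ graph $H$ separates them (iff $2$-FWL $=$ $3$-WL separates them).

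For (a), Theorem \ref{thm:contraction} together with Algorithm \ref{alg:Q_H} reduce $\gF_n$-computability to being reducible to the empty graph by iteratively deleting a vertex of degree $\leq 1$ (plus the bank's self-loop/isolated-vertex cleanups). This leaf-plucking process succeeds iff $H$ has no cycle, i.e.\ iff $H$ is a forest, which is the standard inductive characterization of tree-width $\leq 1$. For (b), the same machinery reduces $\gF_e$-computability to iteratively removing a vertex of degree $\leq 2$; inspecting the bank in Figure \ref{fig:ideal_graph_models} one checks it realizes both the \emph{series reduction} (contracting a degree-$2$ vertex into an edge between its two former neighbors) and the \emph{parallel reduction} (merging parallel edges), along with self-loop absorption. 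This is exactly Duffin's classical SP-reduction, and Duffin's theorem says a graph reduces to a single edge by SP-reductions iff it is series-parallel iff it is $K_4$-minor-free iff $\mathrm{tw}(H)\leq 2$; adding the bank's isolated-vertex and self-loop cleanups extends this to full contraction. This gives the direction $\mathrm{tw}(H)\leq 2\Rightarrow \gF_e$-computable.

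For the converse direction of (b), I would run the contraction sequence backwards and inductively build a tree decomposition of width $\leq 2$. Starting from the empty graph (trivial decomposition) and undoing one step at a time, each reinsertion places a vertex $v$ with at most two neighbors $u_1,u_2$ into the current graph; I insert a new bag $\{v,u_1,u_2\}$ joined to the existing decomposition along a shortest path of bags connecting the subtrees for $u_1$ and $u_2$ (pushing $u_1$ or $u_2$ into intermediate bags as needed). Since the new bag has size at most $3$ and intermediate bags only gain one of $\{u_1,u_2\}$, the width stays $\leq 2$, so $\mathrm{tw}(H)\leq 2$.

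The main obstacle I expect is the converse direction of (b): multi-edges, self-loops, and the precise form of the series reduction on multigraphs have to be reconciled with the simple-graph definitions underlying tree-width and the Dvořák--Dell theorem. A cleaner alternative would be a direct structural induction: verify that none of the bank operations can ``create'' a $K_4$ minor when the contraction is reversed, so the reducibility of $H$ forces $H$ to be $K_4$-minor-free and hence of tree-width $\leq 2$. Either way, once (a) and (b) are established, the Dvořák--Dell equivalence yields the $1$-WL and $3$-WL $=$ $2$-FWL statements of the proposition.
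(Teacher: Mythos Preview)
Your plan is correct and mirrors the paper's: characterize $\gF_n$-computability of the invariant $P_H$ as $\mathrm{tw}(H)\leq 1$ and $\gF_e$-computability as $\mathrm{tw}(H)\leq 2$, then invoke Dv\v{o}\v{r}\'ak--Dell. Two points of difference are worth flagging. First, for the forward direction of (b) the paper's primary argument uses the partial-$2$-tree elimination ordering (every tree-width-$2$ graph admits a vertex ordering in which each vertex has degree $\leq 2$ at its time of deletion) rather than Duffin; it does mention the series-parallel route as an alternative, but notes that Duffin only covers biconnected graphs, so one must also walk the block-cut tree --- your statement ``series-parallel iff $\mathrm{tw}\leq 2$'' is not literally true without that caveat. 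Second, your converse construction for (b) has a flaw as written: ``pushing $u_1$ or $u_2$ into intermediate bags'' can inflate a size-$3$ bag to size $4$, breaking the width bound. The clean fix --- and essentially what the paper does via chordal completion --- is to observe that the series reduction that eliminated $v$ already inserted an edge $u_1u_2$ into the reduced graph, so any width-$2$ decomposition of that graph has a bag containing $\{u_1,u_2\}$; attach the new bag $\{v,u_1,u_2\}$ there, no pushing needed. Equivalently (the paper's phrasing), the elimination order induced by any $\gF_e$ contraction sequence yields a chordal completion whose maximal cliques have size at most $3$, since each bank operation touches at most $3$ nodes, certifying $\mathrm{tw}(H)\leq 2$.
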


This proposition indicates that $\mathcal{F}_n$ and $\mathcal{F}_e$ can contract an invariant polynomial, represented by a graph $H$, if and only if the tree-width of $H$ is $1$ and $2$, respectively. Therefore computability of \emph{invariant} $P_H$ can be decided by checking the tree width of $H$. We leave generalizing this approach to equivariant $P_H$ to future work.

\subsection{GNNs and their expressive power}\label{ss:gnn_expressive_power}
In this section we turn our attention to commonly used graph neural networks (GNNs), and provide lower bounds on their polynomial expressive power as in Definition \ref{def:expressive}. The Message Passing Neural Network (MPNN) we consider consists of layers of the form
\begin{equation}
    \tY^{(k+1)} = \texttt{m}\brac{\tX \tY^{(k)}, \one\one^T \tY^{(k)}, \tY^{(k)}},
    \label{eq:MPNN}
\end{equation}
where the intermediate tensor variables are $\tY\in\Real^{n\times d}$, $\one\in\Real^n$ is the vector of all ones, $\tY^{(0)}=\one$, brackets indicate concatenation in the feature dimension, and $\texttt{m}$ means a multilayer perceptron (MLP) applied to the feature dimension.  

As an application of the Prototypical edge-based model, we propose and implement a new model architecture (PPGN++) that is at least as expressive as the full versions of PPGN/Ring-GNN \cite{maron2019provably, chen2019equivalence} (which incorporate all $15$ linear basis elements), but is more efficient --- PPGN++ uses a smaller number of ``primitive'' operations than the full PPGN/Ring-GNN, and does not need parameters for each linear basis element:
\begin{equation}
    \tZ^{(k+1)} = \bar{\texttt{m}}_3\brac{ \bar{\texttt{m}}_1\brac{\tZ^{(k)},\tZ^{(k)T}} \circledast
 \bar{\texttt{m}}_2(\tZ^{(k)}), \tZ^{(k)}},
    \label{eq:PPGN++}
\end{equation} 
where $\tZ\in\Real^{n^2\times d}$ are intermediate tensor variables, $\tZ^{(0)}= \tX$, $\circledast$ performs matrix multiplication of matching features, and $\bar{\texttt{m}}_i$, for $i\in[3]$, is a \emph{pair} of MLPs:   
one applied to all diagonal and off-diagonal features of $\tZ$ separately. 

We lower bound the polynomial expressiveness of MPNN and PPGN++ in the next theorem:
\begin{theorem}\label{thm:gnn_poly_lower_bound}
    PPGN++ is at-least $4$ edge polynomial expressive and $5$ node polynomial expressive. MPNN is at-least $2$ node polynomial expressive.
\end{theorem}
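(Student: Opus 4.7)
The plan is to reduce the theorem to the already-established polynomial exactness of the Prototypical models. By construction, $\gF_n$ is $2$-node-polynomial-exact and $\gF_e$ is $5$-node- and $4$-edge-polynomial-exact. Hence it suffices to exhibit, for every contraction primitive $C_i$ in the bank of the relevant Prototypical model, a constant-depth sub-computation inside the target GNN that realizes $C_i$ on an appropriate feature slot, up to arbitrary precision $\eps>0$. Chaining such simulations along any contraction path produced by Algorithm~\ref{alg:Q_H} then yields $P_H(\tX)$ (with controlled accumulated error) as one coordinate of the final embedding, which a final readout MLP can extract.

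The first ingredient is a standard ``product-via-MLP'' lemma: on any compact input domain (which we have, since the simple-graph data lies in a finite set and all intermediate tensors produced by polynomial maps in $\tX$ are uniformly bounded), the pointwise product $(u,v)\mapsto uv$ can be approximated to within any $\eps$ by an MLP applied coordinate-wise to the feature dimension. Consequently, inside one layer of MPNN the MLP $\texttt{m}[\,\cdot\,]$ and inside one layer of PPGN++ the three MLPs $\bar{\texttt{m}}_1,\bar{\texttt{m}}_2,\bar{\texttt{m}}_3$ can approximate any polynomial combination of their inputs along the feature axis. With distinct channels one can carry several partial computations in parallel, so finitely many contractions can be executed in parallel per layer.

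For MPNN, initialize $\tY^{(0)}=\one$. The aggregation $\tX\tY^{(k)}$ simulates the only non-trivial contraction needed by $\gF_n$: contracting a leaf node $v$ whose sole neighbor is $u$ contributes $\sum_v \tX_{uv} f(v\text{-data})$, which is exactly a column of $\tX\tY^{(k)}$ once $f$ is realized as an MLP feature. Self-loop primitives correspond to pointwise products on a single feature channel, handled by $\texttt{m}$; the global aggregator $\one\one^T \tY^{(k)}$ is never strictly required for $\gF_n$ at degree $2$ but, when present, subsumes the ``isolated node with self-loop'' primitive $C_1$. Verifying each of $C_1,\dots,C_4$ of Figure~\ref{fig:ideal_graph_models}(left) one-by-one, with explicit feature bookkeeping, gives MPNN at-least $\gF_n$'s power, hence $2$-node polynomial expressiveness.

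For PPGN++, initialize $\tZ^{(0)}=\tX$. The operation $\bar{\texttt{m}}_1[\tZ,\tZ^{T}]$ provides both the tensor and its transpose, so any directed primitive can be obtained from the underlying undirected feature. The matrix product $\circledast$ in the update realizes the length-two path contraction $C_7$ (and with appropriate MLP pre/post-processing, all of $C_5,C_6,C_7$); $\bar{\texttt{m}}_2,\bar{\texttt{m}}_3$, applied separately to diagonal and off-diagonal entries, handle the node-valued self-loop contractions $C_1,C_2$ and the node-to-edge ``attach'' primitives $C_3,C_4$ by combining self-loop features with edge features via MLP-approximated products. Running Algorithm~\ref{alg:Q_H} for $\gF_e$ on an arbitrary $H$ of degree $\le 4$ (edge-valued) or $\le 5$ (node-valued) produces a contraction sequence of length at most $|V|-|\{a,b\}|$; each step is simulated by one PPGN++ layer (with a constant number of channels), so the total depth is bounded independent of $\eps$. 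An $\eps/L$-accurate simulation of each of the $L$ steps gives a final error $\le\eps$ by a routine Lipschitz/composition argument on the compact input set, yielding $5$-node and $4$-edge polynomial expressiveness.

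The main obstacle is the careful error-control and channel bookkeeping in the PPGN++ simulation: one must verify that each of the seven primitives $C_1,\dots,C_7$ of $\gF_e$ (including the asymmetric ones requiring the transpose) can indeed be produced in a \emph{single} PPGN++ update with separate diagonal/off-diagonal MLPs, since the diagonal-versus-off-diagonal split is what distinguishes node-valued from edge-valued intermediate data. Once this per-primitive check is done, the rest of the argument is compositional approximation using Algorithm~\ref{alg:Q_H} as the contraction schedule.
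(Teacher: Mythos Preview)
Your proposal is correct and follows essentially the same two-step strategy as the paper: first verify that a single MPNN (resp.\ PPGN++) layer can approximate every primitive $C_i$ in the bank of $\gF_n$ (resp.\ $\gF_e$), using MLP universal approximation for the pointwise products and the built-in aggregation/matrix-multiply/transpose for the structural operations; then compose along a contraction path. The only cosmetic difference is in the composition step: the paper invokes a ``layer-wise universality implies universality'' lemma from \citet{lim2022sign}, whereas you sketch the underlying $\eps/L$ Lipschitz argument directly.
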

\textit{Proof idea.} We prove the theorem in two steps. First, showing that an MPNN or PPGN++ layer can approximate any primitive contraction $C\in\gB$ from the bank of the Prototypical node based $\gF_n$ or edge based $\gF_e$ models, respectively. Second, we use a lemma from \citet{lim2022sign} stating that layer-wise universality leads to overall universality. The complete proof is in Appendix \ref{s:approximation_proof}.

\textbf{Comparison of PPGN++ and PPGN.}
Proposition \ref{prop:tree-width} and the proof of Theorem \ref{thm:gnn_poly_lower_bound}  indicate that PPGN++ is 3-WL/2-FWL expressive for simple graphs, similarly to PPGN \citep{maron2019provably}. 
However, the following proposition shows that there is a significant expressiveness gap between PPGN and PPGN++ in approximating equivariant polynomials.
\begin{proposition}\label{thm:ppgn_poly_lower_bound}
    PPGN is at most $0$ edge polynomial expressive. 
\end{proposition}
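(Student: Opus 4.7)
The plan is to exhibit a specific simple graph on which every possible PPGN output is provably constrained to a two-dimensional linear subspace, and then to note that one degree-$1$ edge polynomial basis element escapes this subspace. The target polynomial is $P_H(\tX) = \tX$, the basis element corresponding to $H$ with red edge $(a,b)$, $a\ne b$, and a single parallel black edge. The witness graph $G$ is the disjoint union of two edges on $n=4$ vertices, whose adjacency matrix $\tX_G$ is a fixed-point-free involution and therefore satisfies the identities $\tX_G^2 = \tI$, $\tX_G\one = \one = \one^{T}\tX_G$, and $\one\one^{T}\one\one^{T} = 4\,\one\one^{T}$. Write $V := \mathrm{span}\{\one\one^{T}, \tI\} \subset \Real^{4\times 4}$.

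The core step is an induction on the layer $k \ge 1$: every feature channel of $\tZ^{(k)}$ evaluated at $\tX_G$ lies in $V$, no matter how the PPGN MLPs are chosen. For the base case, an entrywise MLP $\texttt{m}_i$ applied to the $\{0,1\}$-matrix $\tX_G$ produces, per feature, a matrix of the form $a_i \one\one^{T} + (b_i - a_i)\tX_G$ with $a_i = \texttt{m}_i(0)$ and $b_i = \texttt{m}_i(1)$. Expanding the matrix product $\texttt{m}_1(\tX_G) \circledast \texttt{m}_2(\tX_G)$ per feature gives four terms, each of which collapses under the four identities above into a multiple of $\one\one^{T}$ or $\tI$; so the product lives in $V$. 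The entrywise MLP $\texttt{m}_3$ preserves $V$, because any element of $V$ takes only two distinct values on $G$ (one on the diagonal, one off-diagonal) and an entrywise MLP can merely remap each to a scalar. The inductive step $k \to k+1$ repeats this argument, using that $V$ is closed under entrywise MLPs, under matrix products ($\tI^2 = \tI$, $\one\one^{T}\one\one^{T} = 4\,\one\one^{T}$, and $\tI$ commutes with $\one\one^{T}$), and under any further MLP.

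Now observe that $P_H(\tX_G) = \tX_G$ takes the value $1$ on the four intra-edge off-diagonals and $0$ on the off-diagonal inter-edge positions, so its off-diagonal is non-constant. Every matrix $\alpha\one\one^{T} + \beta\tI \in V$ has constant off-diagonal value $\alpha$, and hence $\min_{\alpha}\max_{i\ne j}|(\tX_G)_{ij} - \alpha| = 1/2$. Combined with the induction, $\|\tZ^{(K)} - \tX_G\|_{\infty} \ge 1/2$ on $G$ for every depth $K \ge 1$ and every parameter choice, so $P_H$ cannot be $\epsilon$-approximated for any $\epsilon < 1/2$. Since PPGN trivially approximates the unique degree-$0$ edge basis element $\one\one^{T}$ (using constant MLPs so that $\texttt{m}_1(\cdot) \circledast \texttt{m}_2(\cdot) = \one\one^{T}$), we conclude that PPGN is exactly at most $0$ edge polynomial expressive.

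The main creative step, and the hardest part, is the simultaneous choice of $G$ and $P_H$: the algebraic structure of $G$ — chiefly $\tX_G^2 = \tI$ together with $1$-regularity — is precisely what is needed to make every PPGN matrix product on $G$ collapse into $V$, while at the same time the target polynomial $P_H(\tX) = \tX$ evaluated on $G$ must escape $V$. Once $G$ is selected, the base case of the induction is a short but essential computation, and the inductive step and the conclusion are then essentially automatic.
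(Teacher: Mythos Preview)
There is a genuine gap. The PPGN layer used in the paper is
\[
\tZ^{(k+1)} \;=\; \texttt{m}_3\bigl[\, \texttt{m}_1(\tZ^{(k)}) \circledast \texttt{m}_2(\tZ^{(k)}),\ \tZ^{(k)}\,\bigr],
\]
with a skip connection: $\tZ^{(k)}$ is concatenated to the matrix product before $\texttt{m}_3$ acts. Your base case omits this concatenation. With the skip present, choosing $\texttt{m}_3$ to project onto the last feature channel gives $\tZ^{(1)}=\tZ^{(0)}=\tX_G$, so PPGN computes the identity polynomial $P_H(\tX)=\tX$ \emph{exactly} --- the very polynomial you claim is unreachable. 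More explicitly, on your witness $G$ the entries fall into three classes (diagonal, intra-edge off-diagonal, inter-edge off-diagonal); you are right that the product $\texttt{m}_1(\tX_G)\circledast\texttt{m}_2(\tX_G)$ collapses into $V=\mathrm{span}\{\one\one^{T},\tI\}$, but the concatenated channel $(\tX_G)_{ij}$ separates the two off-diagonal classes, so $\texttt{m}_3$ can assign them different values. The invariant ``every feature of $\tZ^{(k)}$ lies in $V$'' therefore fails already at $k=1$, and enlarging $V$ to include $\tX_G$ would defeat the purpose since your target then lies in the enlarged space.

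The paper's argument avoids this by targeting a different degree-$1$ element, the transpose $P_H(\tZ)=\tZ^{T}$, and by using as witness a non-symmetric $2\times 2$ tensor with constant rows. Every PPGN ingredient --- entrywise MLPs, the feature-wise matrix product, and the skip --- preserves the ``each row is constant'' structure, whereas transpose destroys it. Two remarks: (i) the paper's witness is not a simple graph, so the proposition is implicitly about general tensor inputs rather than the simple-graph domain literally appearing in Definition~\ref{def:expressive}; (ii) on simple (symmetric) inputs the transpose and the identity coincide, which is why your restriction to simple graphs pushed you toward a polynomial that PPGN can in fact compute.
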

\textit{Proof idea.} We claim that PPGN is at most $0$ edge polynomial expressive by proving that there exist a linear polynomial (the transpose operator) that cannot be approximated by PPGN. The proof shows that for an input tensor of the form
\begin{equation*}  
\tZ=\begin{bmatrix}
    a & a \\
    b & b \\
\end{bmatrix}, \quad a,b\in\Real,
\end{equation*}
a PPGN model cannot approximate the transpose operator $P_H(\tZ)=\tZ^T$ since it preserves the row structure. The complete proof is in Appendix \ref{s:ppgn_approximation_proof}.


\subsection{Increasing the expressive power of GNNs}\label{ss:poly_features}
Theorem \ref{thm:gnn_poly_lower_bound} proves a lower bound on the polynomial expressiveness of two popular GNN models --- a natural question is how to increase the expressiveness beyond the lower bound. Polynomial expressiveness provides a simple path forward to add network operations or input features that complement these architectures with polynomials that are otherwise uncomputable. In our study, we add input features to enhance expressiveness.  

Suppose we have a $d'$ polynomial expressive GNN model (with a corresponding $d'$ exact Prototypical graph model $\gF_\gB$) that we want to extend it to be $d>d'$ polynomial expressive. For every $\ell\in\sN$, $d'+1\leq \ell \leq d$, we can compute all $\gF_\gB$ \emph{non-computable} $\ell$-degree basis elements of $P_H$ using Algorithm \ref{alg:Q_H}, considering all non-isomorphic, simple and connected $H$. Indeed any $H$ with two disconnected components corresponds to a multiplication of two lower degree polynomials  approximable by the GNN itself (or using lower degree polynomial features). Any non-computable polynomials discovered in this process are added as node/edge input features to the architecture, effectively increasing the polynomial expressiveness of the model to $d$. 


\begin{table}[h!]
\centering
\vspace{-5pt}
\caption{Numbers of non-computable polynomials (left) out of all relevant polynomials (right) for the Prototypical models. }\label{tab:computable}
\vspace{5pt}
\resizebox{0.8\columnwidth}{!}{
\begin{tabular}{c|c|c|c|c|c}\toprule   & $d=3$ & $d=4$ & $d=5$ & $d=6$ & $d=7$ \\\hline
$\gF_n$ & 2/8 & 6/18 & 23/49 & 85/144 & 308/446 \\  \hline
$\gF_e$ & 0/18 & 0/53 & 1/174 & 11/604& 72/2193\\    \bottomrule
\end{tabular}}
\end{table} 
In Table \ref{tab:computable} we list, for each Prototypical model and degree $d$, the number of polynomials that are found non-computable by the Prototypical models (left), out of the total number of relevant polynomials $P_H$ (right). For the node based model we count only node-valued polynomials, while for the edge based model we count both node and edge-valued polynomials. 
Note that the number of non-computable polynomials is substantially smaller than the total number, especially in $\gF_e$. 
Since polynomials are calculated at the data preprocessing step, there is an upfront computational cost for this procedure that must be accounted for. Finding the optimal contraction path that minimizes runtime complexity for a general matrix polynomial is an NP-hard problem \citep{Biamonte_2015Tensorcontraction} with a naive upper bound in runtime complexity of $O(n^d)$. An empirical evaluation of the preprocessing time is in Appendix \ref{s:imp_details}; in our experiments, preprocessing time is small compared to training time.

\section{Related Work}

\textbf{Relation to Homomorphisms and Subgraph Counts.} Past work has studied invariant polynomials on graphs \citep{thiery2000algebraic, lovasz2012large, komiske2018energy}. Viewed as functions on binary inputs, the basis consists of functions that count homomorphisms or injective homomorphisms of $H$ into an input graph $\tX$. Homomorphisms are related to the $P_H$ basis, and injective homomorphisms are related to $Q_H$ (see Appendix~\ref{appendix:homomorphism}). Also, equivariant homomorphism counts that relate to our $P_H$ or $Q_H$ has been studied~\citep{manvcinska2020quantum, grohe2021homomorphism, maehara2019simple, bouritsas2022improving, Barcelo2021LocalGraphParameters, welke2022expectation}. However, these works do not exhibit a basis of equivariant polynomials. Also, our tensor contraction interpretation and analysis does not appear in past work.

\textbf{Expressivity Measures for Graph Models.} The $k$-WL hierarchy has been widely used for studying graph machine learning~\citep{morris2021weisfeiler}, starting with the works of \citet{morris2019weisfeiler} and \citet{xu2019how}, which show an equivalence between message passing neural networks and 1-WL. Tensor methods resembling $k$-WL such as $k$-IGN~\citep{maron2018invariant} and PPGN-like methods~\citep{maron2019provably, azizian2021expressive} achieve $k$-WL power~\citep{azizian2021expressive, geerts2022expressiveness}, but scale in memory as $n^k$ or $n^{k-1}$ for $n$-node graphs. \citet{morris2019sparsewl, morris2022speqnet} define new $k$-WL variants with locality and sparsity biases, which gives a finer hierarchy and offers a trade-off between efficiency and expressiveness.

Various works measure the expressivity of graph neural networks by the types of subgraphs that they can count~\citep{chen2020can, tahmasebi2020counting, arvind2020weisfeiler}. On simple graphs, subgraph counting of $H$ is equivalent to evaluating an invariant polynomial $Q_H$. Additional works have studied the ability of graph models to compute numerous other graph properties. For instance, graph machine learning models have been studied in the context of approximating combinatorial algorithms~\citep{sato2019approximation}, solving biconnectivity problems~\citep{zhang2023rethinking}, computing spectral invariants~\citep{lim2022sign}, distinguishing rooted graphs at the node level~\citep{chen2021on}, and computing various other graph properties~\citep{garg2020generalization}. As opposed to our framework, these expressivity measures generally do not induce a hierarchy of increasing expressivity, and they often do not directly suggest improvements for graph models

A matrix query language (MATLANG)~\citep{brijder2019expressive, geerts2021expressive} and a more general tensor language (TL)~\citep{geerts2022expressiveness} have been used to study expressive power of GNNs~\citep{balcilar2021breaking, geerts2022expressiveness}. These languages define operations and ways to compose them for processing graphs in a permutation equivariant or invariant way. Our edge-based Prototypical model result gives a new perspective on a result of \citet{geerts2021expressive}, which shows that MATLANG can distinguish any two graphs that 2-FWL / 3-WL can. Indeed, our edge-based graph model includes the five linear algebra operations that form the 3-WL expressive MATLANG. While the operations of MATLANG were included in a somewhat ad-hoc manner (``motivated by operations supported in linear algebra package'' \citep{geerts2021expressive}), our framework shows that these are the at-most quadratic equivariant polynomials that are required to contract all tree-width 2 graphs.

\textbf{Other Expressive GNNs.}
Various approaches have been used to develop expressive graph neural networks. One approach adds node or edge features, oftentimes positional or structural encodings, to base graph models~\citep{sato2021random, abboud2021surprising, bouritsas2022improving, lim2022sign, zhang2023rethinking, li2020distance, Loukas2020What}. Subgraph GNNs treat an input graph as a collection of subgraphs~\citep{bevilacqua2022equivariant, frasca2022understanding, qian2022ordered, cotta2021reconstruction, Zhao2021akgnn, you2021identity, zhang2021nested}. Some models utilize modified message passing and higher-order convolutions \citep{bodnar2021cell, Bodnar2021topological, thiede2021autobahn, de2020natural}. One can also take a base model and perform group averaging or frame averaging to make it have the desired equivariances while preserving expressive power~\citep{murphy2019relational, puny2022frame}.

\section{Experiments}\label{s:experiments}
In this section we demonstrate the impact of increasing the polynomial expressive power of GNNs. We test two families of models. PPGN++ ($d$) uses the architecture in \eqref{eq:PPGN++}, derived using our edge based Prototypical model, and achieves $d$ polynomial expressive power by pre-computing polynomial features found in Subsection \ref{ss:poly_features}; missing ($d$) notation means using just PPGN++ without pre-computed features. GatedGCN ($d$) uses the base MPNN architecture of \cite{Bresson2017Gated} with the $d$-expressive polynomials pre-computed.
We experiment with $4$ datasets: a graph isomorphism dataset SR \citep{Bodnar2021topological}, which measures the ability of GNNs to distinguish strongly regular graphs; and $3$ real-world molecular property prediction datasets including ZINC, ZINC-full \citep{Dwivedi2020benchmarking} and Alchemy \citep{Chen2019Alchemy}. 

\subsection{Graph Isomorphism Expressiveness}
Distinguishing non-isomorphic graphs from families of strongly regular graphs is a challenging task \citep{bodnar2021cell, Bodnar2021topological}. The SR dataset \citep{bouritsas2022improving} is composed of 9 strongly regular families. This dataset is challenging since any pair of graphs in the SR dataset cannot be distinguished by the $3$-WL algorithm. This experiment is done without any training (same procedure as in \citep{Bodnar2021topological}) and the evaluation is done by randomly initialized models. For every family in the dataset, we iterate over all pairs of graphs and report the fraction that the model determines are isomorphic. Two graphs are considered isomorphic if the $L_2$ distance between their embeddings is smaller than a certain threshold ($\epsilon=0.01$). This procedure was repeated for $5$ different random seeds and the averaged fraction rate was reported in Figure \ref{fig:sr}. This figure portrays the expressiveness boost gained by using high degree polynomial features. While the base models, GatedGCN and PPGN (and PPGN++), cannot distinguish any pair of graphs (as theoretically expected), adding higher degree polynomial features significantly improves the ability of the model to distinguish non-isomorphic graphs in this dataset. Optimal results of $0\%$ failure rate are obtained for PPGN++ ($d$) with $d\geq 6$ (\ie, adding at-least degree $6$ polynomial features). For the GatedGCN model, although we do not reach the $0\%$ failure rate, adding the right polynomial features makes GatedGCN outperform $3$-WL based models in distinguishing non-isomorphic graphs in this dataset. 
 

\begin{table}[h!]
\renewcommand{\tabcolsep}{1.6pt}
\small
\centering
\caption{Results on Alchemy \citep{Chen2019Alchemy} and ZINC-full \citep{Dwivedi2020benchmarking} datasets. Lower is better, best models are marked in \textbf{bold}.}
\vspace{5pt}
\begin{tabular}{l|c|c}
\toprule
\multirow{2}{*}{Model} & ZINC-Full & Alchemy \\ \cline{2-3} 
 & Test MAE & Test MAE \\ \hline
GIN \citep{xu2019how} & $.088 \pm .002$ & $.180\pm .006$ \\
$\delta$-$2$-GNN \citep{morris2019sparsewl} & $.042 \pm .003$ & $.118\pm .001$ \\
SpeqNet \citep{morris2022speqnet} & - & $.115\pm .001$ \\
PF-GNN \citep{dupty2022pfgnn} & - & $.111 \pm .010$ \\
HIMP \citep{Fey2020himp} & $.036 \pm .002$ & - \\
SignNet \citep{lim2022sign} & $.024 \pm .003$ & $.113 \pm .002$ \\
CIN \citep{bodnar2021cell} & $.022 \pm .002$ & - \\ \hline
PPGN \citep{maron2019provably} &  $.022 \pm .003$ & $.113 \pm .001$ \\ 
PPGN++ & $.022 \pm .001$ & $.111 \pm .002$ \\
\textbf{PPGN++ (5)} & $\bm{.020 \pm .001}$ & $.110 \pm .001$ \\
\textbf{PPGN++ (6)} & \multicolumn{1}{l|}{$\bm{.020 \pm .001}$} & $\bm{.109 \pm .001}$ \\ \bottomrule
\end{tabular}
\label{tab:alchemy+zinc_full}
\end{table}
\begin{figure}[h!]
    \includegraphics[width = \columnwidth]{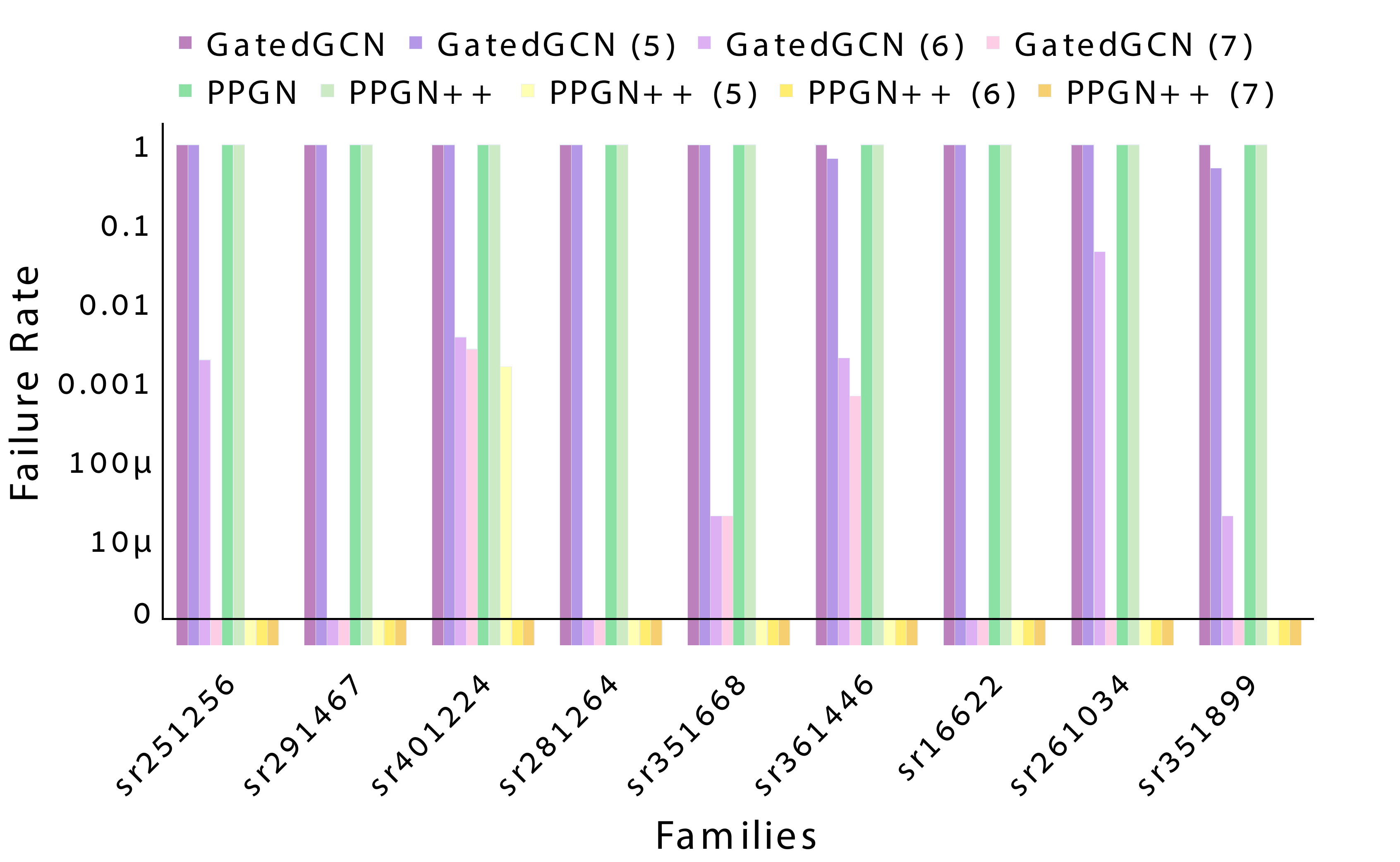}
    \vspace{-20pt}
    \caption{Failure rate (log scale) for distinguishing SR graphs, the lower the better.}
    \label{fig:sr}
    \vspace{-10pt}
\end{figure}
\subsection{Real-World Datasets}
The efficacy of increasing polynomial expressive power on real-world data (molecular graphs datasets) was evaluated on $3$ graph regression tasks: ZINC, ZINC-full and Alchemy.
\textbf{Training.} We followed the training protocol mentioned in \citep{Dwivedi2020benchmarking} for ZINC and ZINC-full and the protocol from \citep{lim2022sign} for Alchemy. All of our trained models obey a $500K$ parameter budget. Further details regarding the training procedure and model parameters can be found in Appendix \ref{s:imp_details}. \\
\textbf{Baselines.} The baseline results for the ZINC $12K$ experiment were obtained from \citep{zhao2022setGNN}, except 
for PPGN and GatedGCN, which we calculated. For ZINC-full and Alchemy we used the results from \citep{lim2022sign}.\\
\textbf{Results.} 
The mean absolute error (MAE) over the test set is reported in Table \ref{tab:zinc_small} for ZINC $12K$ and Table \ref{tab:alchemy+zinc_full} for ZINC-full and alchemy. In both tables, PPGN++ ($6$) achieves SOTA results across all $3$ datasets. In addition, for both test model families there is a clear correlation between higher $d$ (polynomial expressiveness) and test error. Furthermore, PPGN++ ($5$) and PPGN++ ($6$), which produce the top results in all $3$ experiments, are the only $2$ models (including all baselines) which are provably strictly more powerful than 3-WL. Our results add evidence to the guiding hypothesis that increases in expressivity facilitate improved downstream performance.

\vspace{-15pt}
\begin{table}[H]
\renewcommand{\tabcolsep}{1.6pt}
\small
\centering
\caption{Results on ZINC $12K$ \citep{Dwivedi2020benchmarking} dataset. Lower is better, best model is marked in \textbf{bold}.}
\vspace{5pt}
\begin{tabular}{l|c}
\toprule
Model                 & Test MAE                   \\ \hline
GCN \citep{kipf2016gcn}                  & $.321 \pm .009$          \\
GIN \citep{xu2019how}                  & $.163 \pm .003$          \\
PNA \citep{corso2020pna}                  & $.140 \pm .006$          \\
GSN \citep{bouritsas2022improving}                  & $.115 \pm .012$          \\
PF-GNN \citep{dupty2022pfgnn}                & $.122 \pm .010$          \\
GIN-AK \citep{Zhao2021akgnn}               & $.080 \pm .001$          \\
CIN \citep{bodnar2021cell}                  & $.079 \pm .006$          \\
SetGNN \citep{zhao2022setGNN}               & $.075 \pm .003$          \\\hline
GatedGCN \citep{Bresson2017Gated}               & $.265 \pm .015$          \\
GatedGCN ($4$)          & $.150 \pm .005$          \\
GatedGCN ($5$)          & $.138 \pm .003$          \\
GatedGCN ($6$)        & $.106 \pm .003$ \\ \hline
PPGN \citep{maron2019provably}                 & $.079 \pm .005$          \\ 
PPGN++                & $.076 \pm .003$          \\
PPGN++ ($5$)          & $.072 \pm .005$          \\
\textbf{PPGN++ ($6$)} & $\bm{.071 \pm .001}$ \\ \bottomrule
\end{tabular}
\label{tab:zinc_small}
\vspace{-12pt}
\end{table}

\section{Conclusions}
We propose a novel framework for evaluating the expressive power of GNNs by evaluating their ability to approximate equivariant graph polynomials. Our first step was introducing a basis for those  polynomials of any degree. We then utilized Prototypical graph models to determine the computability of these polynomials with practical GNNs. This led to a method for increasing the expressivity of GNNs through the use of precomputed polynomial features, resulting in a significant improvement in empirical performance.

Future research could focus on several promising directions. One direction can reduce the number of features passed into GNNs by working with a generating set of polynomials rather than the complete basis. Additionally, incorporating features on nodes and edges, as outlined in \cref{ss:gen_and_discussion} and \cref{a:attributed_graphs}, could further improve performance. Another exciting avenue for exploration can incorporate otherwise uncomputable polynomials as computational primitives into GNN layers, rather than as features, to increase expressiveness. Finally, it would be beneficial to study Prototypical graph models to identify families with optimal properties related to expressiveness, memory/computation complexity, and the size of the contraction bank. 
\section{Acknowledgments}
OP is supported by a grant from Israel CHE Program for Data Science Research Centers. DL is supported by an NSF Graduate Fellowship. We thank Nicolas Usunier for his insightful remarks and the anonymous reviewers for their helpful comments.

\bibliography{equipoly}
\bibliographystyle{icml2023}

\newpage
\appendix
\onecolumn

\section{Implementation Details}\label{s:imp_details}
\subsection{Datasets}\label{ss:datasets_details}
\paragraph{SR.} The SR dataset \citep{bouritsas2022improving} is composed of $9$ families of Strongly Regular graphs. Each family has a $4$ dimensional representation: $n$ the number of nodes in the graph, $d$ the degree of each node, $\lambda$ the number of mutual neighbors of adjacent nodes and $\mu$ the number of mutual neighbors of non-adjacent nodes. Table \ref{tab:sr_sizes} shows the size of each Strongly Regular family from the dataset.
\begin{table}[h!]
\renewcommand{\tabcolsep}{1.6pt}
\small
\centering
\caption{Sizes of Strongly Regular Families \citep{bouritsas2022improving}}
\vspace{5pt}
\begin{tabular}{l|c|c|c|c|c|c|c|c|c}
\toprule
Familty & (16,6,2,2) & (25,12,5,6) & (26,10,3,4) & (28,12,6,4) & (29,14,6,7) & (35,16,6,8) & (35,18,9,9) & (36,14,4,6) & (40,12,2,4) \\ \hline
Number of Graphs & 2 & 15 & 10 & 4 & 41 & 3854 & 227 & 180 & 28\\
\bottomrule
\end{tabular}
\label{tab:sr_sizes}
\end{table}
\vspace{-15pt}
\paragraph{ZINC.}
The ZINC dataset is a molecular graph dataset composed of $\sim 250K$ molecules. The regression criterion is a molecular property known as the constrained solubility. Each molecule has both node features and edge features. Node features represent the type of heavy atoms ($4$ types) and edge features the type of bonds between them ($28$). The average number of nodes in a graph is $23.15$ and the number of edges is $49.8$.
There are two versions of the dataset used for learning: ZINC $12K$ which has train/val/test split of $10000/1000/1000$ and ZINC-full with a $2200011/24445/5000$ split. Both data splits can be obtained from \citep{Fey2019PyG} 
\paragraph{Alchemy.}
Alchemy is also a molecular graph dataset composoed of $12000$ graphs ($10000/1000/1000$ split taken from \citep{lim2022sign}). The average number of nodes is $10.1$ and the number of edges is $20.9$. The Regression target in this dataset is a $12$-dimensional vector composed of a collection molecular properties : dipole moment, polarizability, HOMO, LUMO, gap, $R^2$, zero point energy, internal energy, internal energy at $298.15 K$, enthalpy at $298.15 K$ , free energy at $298.15 K$ and heat capacity at $298.15 K$. Each graph has node features ($6$-dimensional atom type indicator) and edge features ($4$-dimensional bond type indicator). 
\subsection{Training Protocol}\label{ss:tain_details}
\paragraph{ZINC.}
For the ZINC and ZINC-full experiments we followed the training protocol from \citep{Dwivedi2020benchmarking}. The protocol includes parameter budget ($500K$), predefined $4$ random seeds and a learning rate decay scheme that reduces the rate based on the validation error (factor $0.5$ and patience factor of $10$ epochs). Initial learning rate was set to $0.002$ and training stopped when reached $10^{-5}$. Batch size was set to $128$. Test error at last epoch was reported. When using polynomial features, we removed the polynomials that had no response over the dataset. Namely, let $f:\Real^{n^2}\too \Real^{n^2}$ be an equivariant polynomial and $\mathcal{X}=\set{\tX}$ be a graph dataset. $f$ does not have a response over $\mathcal{X}$ if $\forall\tX\in\mathcal{X}$, $f(\tX)=0$. Similarly to \citep{Barcelo2021LocalGraphParameters} we normalized the additional features to have a unit norm. For PPGN++ we used $1/1$ of the edge based $5$-degree polynomials and $8/11$ of the $6$-degree polynomials. For GatedGCN we used $2/2$ of the node based $3$-degree polynomials, $6/6$ of the $4$-degree polynomials, $23/23$ of the $5$-degree polynomials and $83/85$ of the $6$-degree polynomials. models were trained using the \textsc{LAMB} optimizer \citep{You2019lamb} on a single Nvidia V-$100$ GPU. The models were trained using the PyTorch  framework \citep{Paszke2019pytorch}.
\paragraph{Alchemy.}
We followed the training protocol from \citep{lim2022sign}. The protocol includes averaging results on $5$ random seeds and learning rate decay scheme that reduces the rate based on the validation error (factor $0.5$ and patience factor of $20$ epochs). Initial learning rate was set to $10^{-3}$ and training stopped when reached $10^{-5}$.  Batch size was set to $128$. Test error at last epoch was reported. When using polynomial features, we removed the polynomials that had no response over the dataset and normalized them in the same way as in the ZINC experiment. For PPGN++ we used $1/1$ of the edge based $5$-degree polynomials and $8/11$ of the $6$-degree polynomials.
models were trained using the \textsc{LAMB} optimizer \citep{You2019lamb} on a single Nvidia V-$100$ GPU. The models were trained using the PyTorch  framework \citep{Paszke2019pytorch}.
\subsection{Architectures}\label{ss:arch_details}
\paragraph{GatedGCN.} We used the model as it defined in \citep{Bresson2017Gated} and implemented in \citep{lim2022sign}. 
For the ZINC $12K$ experiment we to used a $16$-layer model (same baseline as used in \citep{lim2022sign}) with feature dimension of size $77$ for the baseline model and $75$ for the models with polynomial features.  
For the SR dataset we used a $4$-layer network with hidden dimension size of $150$.
The polynomial features were added to the initial input node features via concatenation. 
\paragraph{PPGN++.}
The PPGN++ architecture is based on the PPGN architecture \citep{maron2019provably}. The original PPGN layer is defined by the following equation:
\begin{equation*}
    \tZ^{(k+1)} = \texttt{m}_3\brac{ \texttt{m}_1(\tZ^{(k)}) \circledast
 \texttt{m}_2(\tZ^{(k)}), \tZ^{(k)}}
\end{equation*} 
For $\tZ\in\Real^{n^2\times d}$. While this layer definition cannot approximate all $C\in\gB$ from $\gF_e$, it is possible to naively incorporate all the linear and constant basis \citep{maron2018invariant} to obtain full approximation power. As mentioned in Section \ref{ss:gnn_expressive_power} we suggest to add this expressiveness to the layer in a more compact manner:
\begin{equation*}
    \tZ^{(k+1)} = \bar{\texttt{m}}_3\brac{ \bar{\texttt{m}}_1\brac{\tZ^{(k)},\tZ^{(k)T}} \circledast
 \bar{\texttt{m}}_2(\tZ^{(k)}), \tZ^{(k)}}
\end{equation*}
where 
$$\bar{\texttt{m}}_i=\parr{\bar{\texttt{m}}^{\tiny\text{diag}}_i,\bar{\texttt{m}}^{\tiny\text{off-diag}}_i},$$
defines a separate MLP for diagonal elements and off-diagonal elements. In practice, we implement this separation by adding an identity matrix as additional feature before applying an MLP on the tensor's features.

For the ZINC $12K$ experiment we used a $8$-layer network with hidden dimension size of $95$. For ZINC-full and Alchemy we used a $6$-layer network with hidden dimension size of $110$.
We ran parameter search over the number of layers $L\in\set{4,6,8}$ and hidden dimension size $h\in\set{95,110,130}$ while maintaining the $500K$ parameter budget. For the SR experiment we used a $4$-layer network with hidden dimension of size $75$. The polynomial features were added to the initial input features via concatenation.
    
\subsection{Timing}
Table \ref{tab:comp_table} shows a runtime comparison between the preprocessing require to compute polynomial features and training a PPGN++ ($6$) model on the ZINC $12K$ dataset. The time it takes to compute polynomials of degree $7$ is non-negligible and most likely that for higher degrees (or in cases of larger graphs) the runtime will be longer and intractable from some degree. However, for SOTA results which we report in Section \ref{s:experiments} we only use up to $6$ degree polynomial features and the added time used for computing those features is equivalent to only $3$ training epochs. 
Moreover, comparing the running time of other methods puts in perspective the computational time required for computing polynomial features. SetGNN \citep{zhao2022setGNN} reports that the epoch running of their best ZINC model ($0.075$ compared to $0.071$ of PPGN++ ($6$)) is around $25$ seconds. In addition GraphGPS \citep{gps2023recipe}, a state of the art Graph Transformer (test error of $0.07$ on the ZINC dataset), takes $\sim 11.7$ hours to train.  
\begin{table}[h]
\centering
\caption{Runtime comparison on ZINC $12K$: preprocessing vs. training. }
\vspace{5pt}
\begin{tabular}{|l|c|}
\hline
 & Time (Seconds) \\ \hline
finding all $\gF_e$ non-computable polynomials up to degree $7$. & 5 \\ \hline
compute all $5$ degree polynomial features for the entire ZINC $12K$ dataset. & 10 \\ \hline
compute all $6$ degree polynomial features for the entire ZINC $12K$ dataset. & 23 \\ \hline
compute all $7$ degree polynomial features for the entire ZINC $12K$ dataset. & 310 \\ \hline
Average runtime of training PPGN++ ($6$) on ZINC $12K$ & 4110 (15.5 per epoch) \\ \hline
\end{tabular}

\label{tab:comp_table}
\end{table}

\section{Proof of Theorem \ref{thm:poly}.}\label{a:proof_of_poly}

\paragraph{General definitions and setup.}
We denote an input graph data points represented by $\tX\in\Real^{n^2}$. We denote by the vector space of all polynomials $P:\Real^{n^2}\too\Real^{n^2}$ by  $\mathfrak{P}=\Real^{n^2}\otimes \Real[\tX]$, where $\otimes$ is the tensor product and $\Real[\tX]$ denotes the module of polynomials with indeterminate $\tX_{11},\ldots,\tX_{nn}$. The space of polynomials $\mathfrak{P}$ is spanned by the monomial basis
\begin{equation}\label{ea:monomial}
    M(\tX) = \delta^{a,b}\otimes \prod_{r,s=1}^n \tX_{r,s}^{\tA_{r,s}}
\end{equation}
where $\tA\in \Nat_0^{n^2}$, $\Nat_0=\set{0,1,\ldots}$, and $\delta^{a,b} \in \Real^{n^2}$ is a matrix satisfying 
\begin{equation*}
    \delta^{a,b}_{i,j} = \begin{cases} 1 & \text{if } a=i, $b=j$ \\ 0 & \text{o/w} \end{cases}    
\end{equation*}
That is $\delta^{a,b}$, $a,b\in [n]$ is a basis for $\Real^{n^2}$. 

The degree of a polynomial is the maximal degree of its monomials defined by
\begin{equation}
    \deg M(\tX) = \sum_{r,s=1}^n \tA_{r,s}
\end{equation}
We denote by $\mathfrak{P}_d$ the space of all polynomials of degree at most $d$.

\paragraph{Enumerating monomials with multi-graphs $H$.}
Next, we define $H=(V,E,(a,b))$ to be a multi-graph with node set $V=[n]$, and edge multiset defined by the matrix $\tA$, that is $(r,s)$ appears $k\in \Nat_0$ times in $E$ iff $\tA_{r,s}=k$. Lastly $(a,b)$ is the red edge. We can therefore identify monomials with multi-graphs $H$, \ie, 
\begin{equation}
    M_H=M,
\end{equation}
where $M$ is defined in \eqref{ea:monomial}. 

\paragraph{Action of permutations $S_n$ on polynomials.}
We consider the group of permutations $S_n$ that consists of bijections $g:[n]\too[n]$. The action of $S_n$ on a matrix $\tX$ is defined in the standard way in \eqref{e:g}, \ie, 
\begin{equation}\label{ea:action}
    (g\cdot \tX)_{i,j} = \tX_{g^{-1}(i),g^{-1}(j)}
\end{equation}
where the inverse is used to make this a left action. We define $\mathfrak{P}^{S_n}$ to be the space of permutation equivariant polynomials, namely $P\in \mathfrak{P}$ that satisfy
\begin{equation*}
    g\cdot P(\tX) = P(g\cdot \tX)
\end{equation*}
for all $g\in S_n$ and $\tX \in \Real^{n^2}$. A standard method of projecting a polynomial in $\mathfrak{P}$ onto the equivariant polynomials $\mathfrak{P}^{S_n}$ is via the symmetrization (Reynolds) operators:
\begin{equation}
    \bar{P}(\tX) = \sum_{g\in S_n} g\cdot P( g^{-1} \cdot \tX)
\end{equation}
Let us verify that indeed $\bar{P}\in \mathfrak{P}^{S_n}$:
\begin{align*}
        \bar{P}(h\cdot \tX) &= \sum_{g\in S_n} g\cdot P( g^{-1} \cdot (h\cdot \tX)) \\ &= \sum_{g\in S_n} g\cdot P( (h^{-1}g)^{-1} \cdot \tX)
        \\ &= \sum_{g\in S_n} hg\cdot P( g^{-1} \cdot \tX) \\&= h\cdot \bar{P}(\tX)
\end{align*}

\paragraph{Symmetrization of monomials.}
The key part of the proof is computing the symmetrization of the monomial basis $M_H$ via the symmetrization operator:
\begin{align*}
    Q_H(\tX)_{i,j} &= \sum_{g\in S_n} \brac{g\cdot M_H(g^{-1}\cdot \tX)}_{i,j}\\ 
    &=\sum_{g\in S_n} \brac{M_H(g^{-1}\cdot \tX)}_{g^{-1}(i),g^{-1}(j)}\\&=\sum_{g\in S_n} \brac{\delta^{a,b}\otimes \prod_{r,s=1}^n (g^{-1}\cdot \tX)_{r,s}^{\tA_{r,s}}}_{g^{-1}(i),g^{-1}(j)}\\    
    &=\sum_{g\in S_n} \brac{\delta^{a,b}\otimes \prod_{r,s=1}^n \tX_{g(r),g(s)}^{\tA_{r,s}}}_{g^{-1}(i),g^{-1}(j)}\\    
    &=\sum_{g\in S_n} \brac{\delta^{a,b}\otimes \prod_{r,s=1}^n \tX_{r,s}^{\tA_{g^{-1}(r),g^{-1}(s)}}}_{g^{-1}(i),g^{-1}(j)}\\    
    &=\sum_{g\in S_n} \delta^{a,b}_{g^{-1}(i),g^{-1}(j)} \prod_{r,s=1}^n \tX_{r,s}^{\tA_{g^{-1}(r),g^{-1}(s)}}\\    
    &=\sum_{g\in S_n} \delta^{g(a),g(b)}_{i,j} \prod_{r,s=1}^n \tX_{r,s}^{\tA_{g^{-1}(r),g^{-1}(s)}}
    \end{align*}
where in the second and fourth equality we used the action definition (\eqref{ea:action}), in the fifth equality we re-enumerated $(r,s)\in [n]\times [n]$ with $(r',s')=(g(r),g(s))$, and the last equality uses the fact that $a=g^{-1}(i)$ and $b=g^{-1}(j)$ iff $g(a)=i$ and $g(b)=j$. 

Now let us define the action of $S_n$ on the multi-graph $H$, also in a natural manner: $g\cdot H$ is the multi-graph that results from relabeling each node $i\in [n]$ in $H$ as $g(i)\in [n]$. The multi-graph $g\cdot H$ is isomorphic to $H$ and $(r,s)\in E$ with multiplicity $\ell$ iff $(g(r),g(s)) \in g\cdot E$ with multiplicity $\ell$. If we let $\tA$ be the adjacency matrix of $\tH$ then $g\cdot \tA$ (defined via \eqref{ea:action}) is the adjacency of $g\cdot H$, \ie, $(g\cdot \tA)_{i,j}=\tA_{g^{-1}(i),g^{-1}(j)}$. Furthermore, the red edge in $g\cdot H$ is $(g(a),g(b))$. With these definitions, the above equation takes the form     
    \begin{align}\label{ea:Q_H_orbits_H}
    Q_H(\tX)  &= \sum_{g\in S_n} M_{g\cdot H}(\tX)
\end{align}
Equation \ref{ea:Q_H_orbits_H} is the key to the proof. It shows that $Q_H$ is a sum over all monomials corresponding to the orbit of $H$ under node relabeling $g$, therefore, any two isomorphic multi-graphs $H\cong H'$ would correspond to the same equivariant polynomials $Q_H=Q_{H'}$. Differently put, in contrast to $M_H$ that are enumerated by \emph{labeled} multi-graphs $H$, $Q_H$ are enumerated by \emph{non-isomorphic} multi-graphs $H$. Note that if $H$ has isolated nodes (\ie, not touching any edge), these can be discarded without changing $Q_H$, so for degree $d$ polynomials we really just need to consider graphs with $d$ edges and a single red edge with no isolated nodes, so the maximal number of nodes is at most $\min\set{2d+2,n}$. 

We next show that $\set{Q_H}$, corresponding to all non-isomorphic $H$ with up to $d$ edges, is a basis for $\mathfrak{P}_d$. First, we claim it spans $\mathfrak{P}_d$. Indeed, since every polynomial $P\in\mathfrak{P}_d$ can be written as a linear combination of monomials $M_H$, $P=\sum_k c_k M_{H_k}(\tX)$. Now,
\begin{align*}
    P(\tX) &= \bar{P}(\tX) = \sum_k c_k \bar{M}_{H_k}(\tX) = \sum_k c_k Q_{H_k}(\tX)
\end{align*}
where in the first equality we used the fact that the symmetrization operator fixes $P$, \ie, $\bar{P}=P$, and in the second equality the fact that the symmetrization operator is linear. Next, we claim that $\set{Q_H}$ for non-isomorphic $H$ is an independent set. This is true since each $Q_H$ is a sum over the orbit of $H$, $\set{g\cdot H \vert g\in S_n}$, and the orbits are disjoint sets. Therefore, since the set of all monomials, $M_H$, is independent, also $\set{Q_H}$ is independent.

\paragraph{Formula for $Q_H$.} We found that $Q_H$ is a basis for the equivariant graph polynomials $\mathfrak{P}_d$. Let us write down an explicit formula for it next. The $(i_a,i_b)$ entry of $Q_H(\tX)$ takes the form
\begin{align}\nonumber
    Q_H(\tX)_{i_a,i_b} &= \sum_{g\in S_n} \delta^{g(a),g(b)}_{i_a,i_b} \prod_{r,s=1}^n \tX_{g(r)g(s)}^{\tA_{r,s}} \\ \nonumber
    &= \sum_{g\in S_n} \delta^{g(a),g(b)}_{i_a,i_b} \prod_{(r,s)
    \in E} \tX_{g(r)g(s)} \\ \label{ea:Q_H}
    &= \sum_{\substack{j_1\ne \cdots \ne j_m \in [n]\\  j_a=i_a,j_b=i_b}}\prod_{(r,s)\in E} \tX_{j_r,j_s}
\end{align}
where in the third equality we denote $j_1=g(1), j_2=g(2), \ldots, j_m=g(m)$, and $j_1\ne \cdots \ne j_m \in [n]$ stands for all assignments of different indices $j_1,\ldots,j_m\in [n]$.  

Note that $Q_H$ is proved a basis but is still different from $P_H$ in \eqref{e:P_H} in that it does not sum over repeated indices. The fact that allowing repeated indices is still a basis is proved next. This seemingly small change of basis is crucial for our tensor network connection and analysis in the paper.

\paragraph{$P_H$ is a basis. } We now prove that $P_H$ defined in \eqref{e:P_H} is a basis. For convenience we repeat it below:
\begin{equation}\label{ea:P_H}
    P_H(\tX)_{i_a,i_b} = \sum_{\substack{j_1,\ldots,j_m\in [n] \\ j_a=i_a, j_b=i_a}} \prod_{(r,s)\in E} \tX_{j_r,j_s}
\end{equation}

Denote by $\gH_m$ the set of all multigraphs $H=(V,E,(a,b))$ with $\abs{V}\leq m$. Since the cardinality of $\set{P_H}_{H\in \gH_m}$ is at most that of $\set{Q_H}_{H\in \gH_m}$ it is enough to show that $\set{P_H}_{H\in \gH_m}$ spans the same space as $\set{Q_H}_{H\in \gH_m}$.  

The proof follows an induction on $m$. 
For the base $m=1$ consider all multigraphs $H=(V,E,(a,a))$, where $V=\set{a}$. In this case both \eqref{ea:Q_H} and \eqref{ea:P_H} have vacant sums and
\begin{equation*}
    P_H(\tX)_{i_a,i_a} = \prod_{(r,s)\in E} \tX_{i_r,i_s} = Q_H(\tX)_{i_a,i_a},
\end{equation*}
where for all $(r,s)\in E$ we have $r,s \in \set{a}$. 

Now, for $m\geq 2$, assume $$\text{span}\set{P_H}_{H\in\gH_{m-1}}=\text{span}\set{Q_H}_{H\in\gH_{m-1}}$$ and consider an arbitrary $H\in \gH_m \setminus \gH_{m-1}$. 

If $m=2$, and $a\ne b$, then $H=(V,E,(a,b))$, and $V=\set{a,b}$. 
In this case again both \eqref{ea:Q_H} and \eqref{ea:P_H} have vacant sums and
\begin{equation*}
    P_H(\tX)_{i_a,i_b} = \prod_{(r,s)\in E} \tX_{i_r,i_s} = Q_H(\tX)_{i_a,i_b},
\end{equation*}
where for all $(r,s)\in E$ we have $r,s \in \set{a,b}$. 

In all other cases, consider the space of tuples $[n]^m=\set{(j_1,\ldots,j_m)\vert j_i\in [n], i\in [m]}$, and the action of $S_n$ on this collection via $g\cdot(j_1,\ldots,j_m) = (g(j_1),\ldots,g(j_m))$. The orbits, denoted $o_1,\ldots,o_{B}$ correspond to equality patterns of indices, and $B=\mathrm{Bell}(m)$, the Bell number of $m$. By convention we define $o_1$ to be the orbit
\begin{equation*}
    o_1=\brac{(1,2,\ldots,m)}
\end{equation*}
where we use the orbit notation $[(j_1,\ldots, j_m)]=\set{(g(j_1),\ldots,g(j_m))\vert g\in S_n}$. 
Now, we decompose the index set $\set{(j_1,\ldots,j_m)\in [n]^m \vert j_a=i_a, j_b=i_b}$ to disjoint index sets by intersecting it with $o_\ell$, $\ell\in[B]$. Note that some of these index sets may be empty; we let $c_\ell=1$ in case this index set is not empty, and  $c_\ell=0$ otherwise. 
\begin{align*}
&P_H(\tX)_{i_a,i_b} = \sum_{\ell=1}^B\sum_{\substack{(j_1,\ldots,j_m)\in [n]^m \cap o_\ell \\ j_a=i_a, j_b=i_b}} \prod_{(r,s)\in E} \tX_{j_r,j_s}   \\ 
&=\hspace{-15pt}\sum_{\substack{j_1\ne\ldots\ne j_m\\ j_a=i_a, j_b=i_b}}\prod_{(r,s)\in E} \tX_{j_r,j_s} + \sum_{\ell=2}^B \sum_{\substack{(j_1,\ldots,j_m)\in [n]^m \cap o_\ell \\ j_a=i_a, j_b=i_b}}  \prod_{(r,s)\in E} \tX_{j_r,j_s}  \\
&= Q_H(\tX)_{i_a,i_b} + \sum_{\ell=2}^B\ \sum_{\substack{(j_1,\ldots,j_m)\in [n]^m \cap o_\ell \\ j_a=i_a, j_b=i_b}}  \prod_{(r,s)\in E} \tX_{j_r,j_s}
\end{align*}
For $\ell\geq 2$ consider the polynomial 
\begin{equation*}
    \sum_{\substack{(j_1,\ldots,j_m)\in [n]^m \cap o_\ell \\ j_a=i_a, j_b=i_b}}  \prod_{(r,s)\in E} \tX_{j_r,j_s}
\end{equation*}
In case $c_\ell=1$, this polynomial corresponds to $Q_{H_\ell}(\tX)_{i_a,i_b}$, where we denote by $H_\ell=(V_\ell,E_\ell,(a,b))$ the multigraph that results from unifying nodes in $H$ that correspond to equal indices in $o_\ell$. %
We therefore have
\begin{align*}
    P_H(\tX)_{i_a,i_b}&= Q_H(\tX)_{i_a,i_b} + \sum_{\ell=2}^B c_\ell Q_{H_\ell}(\tX)_{i_a,i_b}
\end{align*}
Since for all $\ell\geq 2$ there is at-least one pair of equal indices in $o_k$, $\abs{V_\ell}\leq m-1$. We can therefore use the induction assumption and express these polynomials using polynomials in $\set{P_H}_{H\in\gH_{m-1}}$. This shows that $Q_H$ can be spanned by $\set{P_H}_{H\in \gH_{m}}$. Since $H\in \gH_m\setminus \gH_{m-1}$ was arbitrary this shows that all $Q_H\in \gH_m \setminus \gH_{m-1}$ can be spanned by elements in $\set{P_H}_{H\in \gH_{m}}$. Now using the induction assumption again and the fact that $\gH_{m-1}\subset \gH_m $ we get that $$\mathrm{span}\set{Q_h}_{H\in \gH_m} \subset \mathrm{span}\set{P_h}_{H\in \gH_m}$$
as required.

\section{Proof of Theorem \ref{thm:contraction}}


\begin{reptheorem}{thm:contraction}
    Let $H$ be some multi-graph and $\gF_\gB\in\set{\gF_n,\gF_e}$. Further, let $H'$ be the multi-graph resulting after contracting a single node in $H$ using one or more operations from $\gB$ to $H$. Then, $H$ is $\gF$-computable iff $H'$ is $\gF$-computable. 
\end{reptheorem}
\begin{proof}

We will use Lemma \ref{lem:always_can_contract_verts} and two auxiliary lemmas:

\begin{lemma}\label{lem:gF_only_affects_1_ring}
All the tensor contractions used in $\gF_n$ and $\gF_e$ only affect the 1-ring neighborhood of the contracted node. 
\end{lemma}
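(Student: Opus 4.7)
The plan is a straightforward case analysis over the primitive contractions enumerated in Figure~\ref{fig:ideal_graph_models}, together with a careful reading of what it means to ``apply'' such a contraction inside a larger multi-graph $H$. First I would make the statement precise: applying a primitive $C=(V_C,E_C,(a_C,b_C))\in\gB$ to contract a node $v$ of $H$ amounts to identifying a pattern match of $C$ in $H$ with the gray node of $C$ mapped to $v$, and replacing the matched subgraph by the contracted configuration prescribed by the corresponding \texttt{einsum} (see Figure~\ref{fig:computation_of_P_H}). The claim to prove is that the only vertices of $H$ whose incident edges are altered by this replacement are $v$ itself, which is removed, and the vertices of $N_H(v)$, the 1-ring of $v$.

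The core observation is structural. In each diagram of Figure~\ref{fig:ideal_graph_models}, the non-gray vertices of the pattern $C$ are exactly those that are directly connected to the gray (contracted) node by an edge of $E_C$; the banks contain no pattern whose non-gray vertices are separated from the gray one by a longer path. Hence when the pattern is matched into $H$, every non-gray vertex of $C$ is mapped to a 1-ring neighbor of $v$, and every edge of $E_C$ is incident to $v$ or to one of these neighbors. Since the replacement only modifies edges of the matched subgraph, any vertex $u\in V(H)\setminus(\{v\}\cup N_H(v))$ retains all its incident edges in $H'$ unchanged.

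Concretely, I would walk through each $C_i$ in turn. For $\gF_n$, the gray node in $C_1,C_3,C_4$ has at most one non-gray neighbor in the pattern (and $C_2$ has no gray node at all, acting as a purely local self-loop operation on a single vertex). For $\gF_e$, the gray node of $C_1,C_3,C_7$ has one or two non-gray neighbors in the pattern, while $C_2,C_4,C_5,C_6$ act on a single vertex or on a pair of already-adjacent vertices. In every case the set of vertices touched by the pattern is contained in $\{v\}\cup N_H(v)$. The explicit \texttt{einsum} formulas of each $C_i$ given in Appendix~\ref{s:approximation_proof} corroborate this: the summation index associated with $v$ appears only in tensor factors indexed additionally by $v$'s immediate neighbors, so the resulting tensor only couples indices in the 1-ring.

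There is no real obstacle beyond bookkeeping. The one subtlety worth stating explicitly in the write-up is that the ``neighbors in the pattern'' coincide, under the matching into $H$, with vertices that are neighbors of $v$ in $H$; this is automatic because a pattern match must preserve edges, so any non-gray vertex of $C$ adjacent to the gray one is mapped to a vertex of $H$ adjacent to $v$. With that remark in place, the lemma follows immediately from the exhaustive inspection of the finite bank.
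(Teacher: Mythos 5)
Your proposal is correct and follows essentially the same route as the paper, which likewise proves this lemma by direct inspection of the finite contraction banks in Figure~\ref{fig:ideal_graph_models}: in every primitive $C_i$ the non-gray vertices are adjacent to the gray (contracted) node, so a pattern match only touches $\set{v}\cup N_H(v)$ and the replacement cannot alter adjacencies outside that set. Your write-up simply makes explicit the bookkeeping (the pattern-matching semantics and the per-$C_i$ case check) that the paper's one-sentence proof leaves implicit.
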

\begin{lemma}\label{lem:assumption_implies_k_can_be_reduced}
Assumption (I) for $H_k$ and $H'_k$ imply that the $k$-th node can be contracted from $H'_k$.
\end{lemma}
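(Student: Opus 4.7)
The plan is to read off from Assumption (I) the structural similarity between $H_k$ and $H'_k$ that is maintained throughout the induction, and then show that this similarity transfers the ``low-degree at the $k$-th node'' property from $H_k$ (where it must hold by virtue of $H$ being computable via the chosen contraction sequence) across to $H'_k$. First I would unpack Assumption (I) as an invariant of the form: $H'_k$ agrees with $H_k$ outside a bounded region surrounding the one extra node $v_0$ that was already contracted when passing from $H$ to $H'$, and the $k$-th node $v_k$ lies outside this region (or meets it in a controlled way).

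Next I would invoke Lemma \ref{lem:gF_only_affects_1_ring}, which guarantees that each tensor contraction in $\gB$ changes only edges incident to the contracted node. Applied inductively, this shows that after $k$ contractions the symmetric difference between $H_k$ and $H'_k$ is confined to the 1-rings of the already-contracted nodes together with $v_0$. Consequently the neighborhood of $v_k$ in $H'_k$ can differ from its neighborhood in $H_k$ only because (a) $v_0$ itself was a neighbor of $v_k$ and has now been removed, (b) the contraction of $v_0$ replaced an edge $v_k \! - \! v_0$ by a self-loop or by an edge between $v_k$ and one remaining neighbor of $v_0$, or (c) an analogous change occurred when contracting one of $v_k$'s other former neighbors. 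Each of these possibilities either preserves or reduces the number of \emph{distinct} non-$v_k$ neighbors of $v_k$; none of them creates new neighbors beyond what was already in $v_k$'s enlarged 2-ring in $H_k$, which is itself bounded by Assumption (I).

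Combining the two steps, I get that the neighbor count of $v_k$ in $H'_k$ is at most that of $v_k$ in $H_k$. Since the ``original'' sequence is a valid $\gF_\gB$-contraction sequence for $H$, the $k$-th step in it is applied to $v_k$, so Lemma \ref{lem:always_can_contract_verts} says $v_k$ has at most $N$ neighbors in $H_k$, where $N=1$ for $\gF_n$ and $N=2$ for $\gF_e$. Hence $v_k$ has at most $N$ neighbors in $H'_k$ as well, and by the other direction of Lemma \ref{lem:always_can_contract_verts} the bank $\gB$ contains a sequence of primitive contractions that removes $v_k$ from $H'_k$, which is the claim.

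The main obstacle will be the case analysis in the $\gF_e$ setting: contracting a degree-$2$ node $v$ can \emph{add} an edge between the two neighbors of $v$, so one must verify that no such side-effect inflates $v_k$'s degree. The worry is that $v_0$, or a previously contracted node, had $v_k$ as one of its two neighbors and the second neighbor becomes newly connected to $v_k$. I would dispatch this by splitting on whether the second neighbor was already adjacent to $v_k$ in $H_k$ (in which case the edge becomes a parallel edge and contributes no new neighbor) or not (in which case Assumption (I) must be strong enough to guarantee the second neighbor is accounted for in the neighbor count of $v_k$ in $H_k$). The $\gF_n$ case is strictly easier since its bank only produces self-loops, so contractions of other nodes never alter $v_k$'s set of non-$v_k$ neighbors.
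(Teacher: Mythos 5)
Your proposal is correct and follows essentially the same route as the paper: reduce the claim to showing that the $k$-th node has at most as many neighbors in $H'_k$ as in $H_k$ (so that both directions of Lemma \ref{lem:always_can_contract_verts} apply), and establish that bound by a local case analysis around $c$. The only difference is cosmetic — you re-invoke Lemma \ref{lem:gF_only_affects_1_ring} inductively to re-derive the locality of the discrepancy, whereas the paper takes that locality as already packaged in Assumption (I) (which is the lemma's hypothesis) and splits cases by whether $k$ equals $c$, neighbors $c$, or lies outside $c$'s 1-ring.
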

    
For conciseness we will use $\gF=\gF_\gB$ to denote a graph model. If $H'$ is $\gF$ computable then there exists a sequence of contractions $C_{i_1},C_{i_2},...,C_{i_k}\in \gF$ that contracts $H'$ to the red edge. Then  $C,C_{i_1},C_{i_2},...,C_{i_k}$ is a sequence contracting $H$ to the red edge. Therefore $H$ is computable with $\gF$.

    The other direction is more challenging. We assume $H$ is computable with $\gF$ and need to prove $H'$ is computable with $\gF$. $H=(V=[m],E,(a,b))$ has some sequence of tensor contraction contracting all vertices in $H$ until only $a$ and $b$ are left ($a$ and $b$ could be the same node). 
%
%
 %
    Without losing generality $a\leq b$, and we assume the order of the node contraction from $H$ is $1,2,\ldots,a-1$. We will say that nodes $i,j\in [m]$ are neighbors (in $H$) if they share an edge. 
    
    A key property we use is proved in Lemma \ref{lem:always_can_contract_verts} that shows that using contractions from $\gB$, we can always contract a node if it has at-most $1$ and $2$ neighbors for $\gF_n$ and $\gF_e$, respectively.    
        
    We have that $H'=(V',E',(a,b))$ resulted from $H$ by contracting a single node using contractions in $\gF$. Therefore $|V'|=m-1$, and we let $c$ be the contracted node. Since $c$ is contracted then necessarily $c\ne a,b$. Therefore $c$ belongs to the $H$ contraction series, and in our notation that means $c<a$. Now we will use the series $1,2,\ldots,\bar{c},\ldots,a-1$ (a bar indicates a missing index) as a node contraction series for $H'$. What we need to prove is that this is indeed a series of node contractions that can be implemented with tensor contractions from $\gB$.
    
    To show that we will prove by induction the following claim. 
    We will compare the two node contraction sequences:
    \begin{align*}        
        H:  & \quad 1, 2, \ldots, c-1, c, c+1, \ldots, a-1 \\
        H':  & \quad 1, 2, \ldots, c-1, \emptyset, c+1, \ldots, a-1
    \end{align*}
    where $\emptyset$ means no node contraction done. We enumerate these steps using $k=1,\ldots,a$. We denote by $H_k$ and $H'_k$ the corresponding graphs \emph{before} performing the $k$-th contraction. So $H_1=H$, and $H'_1=H'$. 
    We claim the following holds at the $k$-th step:
    \begin{enumerate}[(I)]    
        \item Any pair of nodes $i,j$ where at-least one node is not $c$ or a neighbor of $c$ satisfy: $i,j$ are neighbors in $H_K$ iff they are neighbors in $H'_k$  
    \end{enumerate}


Where we define the 1-ring of a node $c$ to be the set of nodes that includes: $c$ and all the nodes that share an edge with $c$. Before proving this by induction we note that if the induction hypothesis holds at the $k$-th step then the $k$-th node can be contracted from $H'$ using operations from $\gB$, see Lemma \ref{lem:assumption_implies_k_can_be_reduced}.

    \paragraph{Base case, $k=1$:}
    For $k=1$ we compare the original $H$ and $H'$. Consider two nodes $i,j$ not in the 1-ring of $c$ in $H$. Lemma \ref{lem:gF_only_affects_1_ring} asserts that contraction of a node only affect its immediate neighbors. Therefore any edge/no edge between $i$ and $j$ will be identical in $H$ and $H'$.
   
    \paragraph{Induction step:}
    We assume by the induction assumption that the $H_{k-1},H'_{k-1}$ satisfy (I) and prove it for $H_k,H'_k$.  

    Consider the node $k-1$ that was contracted at the $k-1$ stage. Let $\set{d,e}$ be its neighbor set in $H_{k-1}$ (could be empty, with a single node, or at-most two nodes). There are three cases: (i) $k-1=c$, (ii) $k-1$ is a neighbor of $c$ in $H_{k-1}$ (\ie, $k-1\in\set{d,e}$), and (iii) $k-1$ is not in the 1-ring of $c$ in $H_{k-1}$ (\ie, $k-1\notin \set{c,e,d}$). 
    
    In case (i), its contraction will only affect its 1-ring in $H_{k-1}$ (see Lemma \ref{lem:gF_only_affects_1_ring}), and in $H'_{k-1}$ no contraction will happen. Therefore assumption (I) can be carried to $H_k$ and $H'_{k}$.   
    
    In case (ii), $k-1$ is a neighbor of $c$ in $H_{k-1}$. Since the contraction of $k-1$ only affects its 1-ring in $H_{k-1}$ (according to Lemma \ref{lem:gF_only_affects_1_ring}) and the 1-ring of $k-1$ in $H_{k-1}$ is included, aside of $k-1$, in the 1-ring of $c$ at $H_k$. Therefore the neighboring relations in $H_k$ and $H'_k$ outside the 1-ring of $c$ in $H_k$ do not change. The induction assumption (I) on $H_{k-1}$ and $H'_{k-1}$ now implies the assumption holds for $H_k$ and $H'_k$ .

    In case (iii), $k-1$ is not in the 1-ring of $c$ in $H_{k-1}$. Then Lemma \ref{lem:gF_only_affects_1_ring} implies that the 1-ring of $c$ in $H_k$ will not change and the neighborhood changes in the 1-ring of $k-1$ will be identical to $H_k$ and $H'_k$ due to induction assumption (I). 
\end{proof}

\begin{replemma}{lem:always_can_contract_verts}
    $\gF_n$ (for simple graphs) and $\gF_e$ (for general graphs) can always contract a node in $H$ iff its number of neighbors is at-most $1$ and $2$, respectively. 
\end{replemma}
 \begin{proof}
    We start with $\gF_n$: We will use contraction notations from the contraction banks presented in Figure \ref{fig:ideal_graph_models}, left.
    For simple graphs $H$ is simple (see Section \ref{ss:P_H}), and therefore does not have parallel edges. Applications of contractions from the bank of $\gF_n$ cannot introduce parallel edges and therefore any two neighbors in the graph will share a single edge. Furthermore, using $C_2$ we can always reduce the number of self-loops generated during the tensor computation path to $1$. Lastly, any node, with or without a single self-loop, and with at-most $1$ neighbor is connected to it with at-most a single edge, and therefore $C_3$ or $C_4$ will be able to contract it. 

    For $\gF_e$: We will use contraction notations from the contraction bank in Figure \ref{fig:ideal_graph_models}, right. First note that any number of self-loops and parallel edges can be reduced to $1$ using $C_2$ and $C_6$, respectively. Now if a node with a single self-loop has no neighbors in $H$ then is can be contracted with $C_1$. Now in the case a node $i$ has $1$ or $2$ neighbors in $H$ we can cancel its self-loop (if it has one) as follows. Let $j$ denote one of its neighbors. Then we first apply $C_5$ between $i$ (top) and $j$ (bottom), then we apply $C_6$ if necessary to make the existing edge between $i,j$ directing towards $j$, and lastly apply $C_6$ to have a single edge between $i$ and $j$. 

    Lets recap: we have now a node $i$, without self-loops, with a single edge going to its $1$ or $2$ neighbors. We can change the direction of these edges by applying $C_6$, if required. Now, we can use $C_3$ or $C_7$ to contract node $i$.     

    In the other direction if the number of neighbors is greater than 1 for $\gF_n$ and 2 for $\gF_e$ then inspection of the respective contraction banks shows that edges cannot be completely removed between nodes without contraction and no contraction operators for nodes with valence 2 and 3 exists for $\gF_n$ and $\gF_e$, respectively. 
    \end{proof}

    \begin{replemma}{lem:gF_only_affects_1_ring}
All the tensor contractions used in $\gF_n$ and $\gF_e$ only affect the 1-ring neighborhood of the contracted node.
\end{replemma}
\begin{proof}
Inspection of the tensor contraction banks of $\gF_n$ and $\gF_e$ (see Figure \ref{fig:ideal_graph_models}, contracted nodes are in gray) shows that any contraction of a node can introduce new edges in its 1-ring but does not affect neighboring relation outside the 1-ring.  
\end{proof}

\begin{replemma}{lem:assumption_implies_k_can_be_reduced}
Assumption (I) for $H_k$ and $H'_k$ imply that the $k$-th node can be contracted from $H'_k$.
\end{replemma}
\begin{proof}
Indeed, there are 3 options for the $k$-th node:  (i) $k=c$, (ii) $k$ is a neighbor of $c$ in $H_{k}$, and (iii) $k$ is not in the 1-ring of $c$ in $H_k$.     
    
Since $k$ can be contracted from $H_k$ by definition, Lemma \ref{lem:always_can_contract_verts} imply that we only need to show that $k$ has at-most the same number of neighbors in $H'_k$ in order to prove the lemma. We show that next. 

In case (i): since $k=c$ no contraction is to take place in $H'_k$. In case (ii): hypothesis (I) imply that the number of neighbors of $k$ in $H'_k$ is at most that in $H_k$. In case (iii): Hypothesis (I) implies that $k$ has the same neighbors in $H_k$ and $H'_k$.  
\end{proof}

\section{Proof of Theorem \ref{thm:gnn_poly_lower_bound}}\label{s:approximation_proof}

To prove Theorem \ref{thm:gnn_poly_lower_bound} it is enough to show that MPNN and PPGN++ can approximate any polynomial computable by the matching Prototypical models, namely node based $\gF_n$ and edge based $\gF_e$. We show that in the following theorem:
\begin{theorem}\label{thm:gnn_approximate}
    For any compact input domain,
    PPGN++ and MPNN can arbitrarily approximate any polynomial computable by the Prototypical graph models $\gF_e$ and $\gF_n$, respectively.
\end{theorem}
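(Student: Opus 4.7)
The plan is to decompose the claim into a per-layer approximation result together with a layerwise-to-global universality result, following the two-step template sketched in the paper. For the MPNN direction, I would argue that a single MPNN layer, as defined in \eqref{eq:MPNN}, can approximate any primitive contraction $C\in\gB$ from the node-based bank depicted on the left of Figure \ref{fig:ideal_graph_models}. The three terms concatenated inside the MLP, namely $\tX\tY^{(k)}$, $\one\one^T\tY^{(k)}$, and $\tY^{(k)}$, correspond exactly to neighbor aggregation (needed for $C_3, C_4$), global aggregation, and node identity/self-loop handling ($C_1, C_2$). Each $C\in\gB_n$ touches only the 1-ring of the contracted node and outputs node-valued data, so it can be written as a fixed continuous function of these three primitives, which the inner MLP $\texttt{m}$ can approximate to arbitrary accuracy on any compact input domain.

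For the PPGN++ direction I would do the analogous analysis layer by layer, showing that each $C\in\gB_e$ from Figure \ref{fig:ideal_graph_models} (right) is realized by the expression $\bar{\texttt{m}}_3[\bar{\texttt{m}}_1[\tZ^{(k)},\tZ^{(k)T}]\circledast \bar{\texttt{m}}_2(\tZ^{(k)}),\tZ^{(k)}]$. The matrix-product primitive $\circledast$ covers the two-neighbor node contractions (\eg $C_7$) as well as contractions yielding new edges, the transpose inside $\bar{\texttt{m}}_1$ supplies the direction-reversal primitive ($C_6$), and the diagonal/off-diagonal pair of MLPs in each $\bar{\texttt{m}}_i$ is what lets a single PPGN++ layer act differently on node-valued and edge-valued entries, e.g., distinguishing $C_1$--$C_4$ (which absorb nodes/edges into self-loops) from $C_5$--$C_7$ (which produce genuinely edge-valued outputs). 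Again MLP universal approximation closes the argument layerwise.

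With per-layer approximation in hand, I would invoke the layerwise-universality lemma of \citet{lim2022sign}: if each layer of a network class can uniformly approximate each primitive of a computational model on any compact set, then any finite composition of those primitives is uniformly approximable by a composition of layers from the class. Since, by definition of computability in $\gF_n$ and $\gF_e$, any $\gF$-computable polynomial $P_H$ is produced by a finite contraction sequence from the respective bank, chaining the per-layer approximants gives the desired uniform approximation on compact input domains.

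The main obstacle I anticipate is the PPGN++ direction, specifically verifying that the restricted operational form of a PPGN++ layer -- only one matrix multiplication, one transpose slot inside $\bar{\texttt{m}}_1$, and diagonal/off-diagonal-separated MLPs -- suffices to express every contraction in $\gB_e$. The delicate cases are the ones mixing directed edges with self-loops (\eg\ $C_3$ and $C_7$, where intermediate tensors have both diagonal and off-diagonal structure that must be processed differently) and the transpose-dependent contractions ($C_5, C_6$), which are precisely the operations that distinguish PPGN++ from vanilla PPGN (cf.\ Proposition~\ref{thm:ppgn_poly_lower_bound}). The plan here is a small but careful case analysis over the seven elements of $\gB_e$, showing that each can be written as an MLP-composable combination of $\tZ$, $\tZ^{T}$, a single $\circledast$, and a diagonal/off-diagonal split.
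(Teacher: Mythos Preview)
Your proposal is correct and follows essentially the same approach as the paper: a per-layer case analysis showing each primitive contraction $C\in\gB$ is uniformly approximable by one MPNN or PPGN++ layer (using MLP universality for the elementwise product and the diagonal/off-diagonal split for the $\diag$ operation), followed by the layerwise-universality lemma of \citet{lim2022sign} to lift to finite compositions. The only minor slip is the mapping of the node-based primitives to the three MPNN terms: $C_1$ is realized by the global aggregation $\one\one^T\tY^{(k)}$ and $C_2$ by the elementwise product of two feature channels (approximated via the MLP), not by the identity term, but this does not affect the overall argument.
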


\begin{proof}
    The proof of this theorem has two parts. First, we will show that for $\eps > 0$, a single layer of PPGN++ and MPNN  can approximate any contraction operation $C\in \gB$ of the corresponding graph model. The second part will show that a composition of those layers can approximate any finite sequence $f\in\gF_{\gB}$, i.e any polynomial computable by $\gF_{\gB}$. 
    

    \paragraph{Part I.} Let $\Omega\subset \Real^{n\times d}$ be an arbitrary compact set and $\tY_i\in\Real^n$ be the ith column of $\tY\in\Omega$. Consider $C\in \gB_n$, the tensor contraction bank of $\gF_n$. We will show that a single layer of MPNN (eq \ref{eq:MPNN}) can approximate $C$. To do so we will write the operations explicitly and verify that a MPNN layer can approximate them: 
    \begin{itemize}
        \item $C_1\too\tY^{(k+1)}_j=\one\one^T \tY^{(k)}_i$.

        \item $C_2\too\tY^{(k+1)}_l=\tY_i^{(k)}\odot \tY_j^{(k)}$ where $\odot$ is element-wise product.
        \item $C_3\too\tY^{(k+1)}_j=\tX\one$. 
        \item $C_4\too\tY^{(k+1)}_j=\tX\tY^{(k)}_i$.
    \end{itemize}
    In order for MPNN approximate the mentioned functions we need to argue that $\texttt{m}$ can approximate several functions. We assume that simple functions, such as constant functions and feature retrieving, can be computed exactly for $\texttt{m}$. To justify approximation of element-wise product we use the universal approximation theorem \citep{hornik1991universal}:
    \begin{theorem}
        The set of one hidden layer MLPs with a continuous $\sigma$, i.e, $M(\sigma)=\text{span}\set{\sigma(w^Tx+b)|w\in\Real^n, b\in\Real}$ is dense in $C(\Real^n)$ in the topology of uniform convergence over compact sets if and only if $\sigma$ is not a polynomial.
    \end{theorem}

    In the next case we set $\Omega\subset \Real^{n^2\times d}$ be an arbitrary compact set ($\tZ_i\in\Real^{n^2}$ for $\tZ\in\Omega$).
    We repeat the same process for PPGN++ (eq \ref{eq:PPGN++}) and $C\in\gB$ of $\gF_e$:   
    \begin{itemize}
        \item $C_1\too\tZ_j^{(k+1)}=\one\one^T(\diag(\tZ^{(k)}_i)\one\one^T)$. 
        \item $C_2\too\tZ_l^{(k+1)}=\diag(\tZ_i^{(k)}\cdot\tZ_j^{(k)})$.
        \item $C_3\too\tZ_j^{(k+1)}=\diag(\one\one^T\tZ_i^{(k)})$. 
        \item
        $C_4\too\tZ_j^{(k+1)}=\diag(\tZ_i^{(k)})\one\one^T$. 
        \item $C_5\too\tZ_l^{(k+1)}=\tZ_i^{(k)}\cdot\tZ_j^{(k)}$.
        \item $C_6\too\tZ_j^{(k+1)}=\tZ_i^{(k)T}$.
        \item $C_7\too\tZ_l^{(k+1)}=\tZ_i^{(k)}\tZ_j^{(k)}$.
    \end{itemize}
    Here the only addition is the diag function, which given a matrix returns a diagonal matrix with the matrix diagonal. This could computed using $\bar{\texttt{m}_i}$ since it computes different functions for the diagonal and off-diagonal elements.

    \paragraph{Part II.}
    This part of the proof will show that any sequence $f\in\gF_e$ (or $\in\gF_n$), namely a polynomial computable by this graph model, can be approximated by a composition of PPGN++ (or MPNN) layers. To prove that we can use Lemma 6 from \citep{lim2022sign} that states the following:
    \begin{lemma} [Layer-wise universality implies universality]
    Let $\gZ\subseteq \Real^{d_0}$ be a compact domain, let $\gF_1,\cdots,\gF_L$ be a families of continuous functinos where $\gF_i$ consists of functions from $\Real^{d_{i-1}}\too\Real^{d_i}$ for some $d_1,\cdots,d_L$. Let $\gF$ be a family of functions $\set{f_L\circ\cdots\circ f_1: \gZ\too\Real^{d_L},f_i\in\gF_i}$ that are compositions of functions $f_i\in\gF_i$.

    For each $i$, let $\Phi_i$, be a family of continuous functions that universally approximates $\gF_i$. Then the family of compositions $\set{\phi_L\circ\cdots\circ\phi_1:\phi_i\in\Phi_i}$ universally approximates $\gF$.       
    \end{lemma}

    Based on the first part proof, using this lemma while pluging in $\gB$ as $\gF_i$ for every $i$ and the corresponding GNN layer as $\Phi_i$ (also for every $i$) shows that PPGN++ and MPNN are universal approximators for $\gF_e$ and $\gF_n$, respectively.

\end{proof}


\begin{figure}[t]
    \centering
    \includegraphics{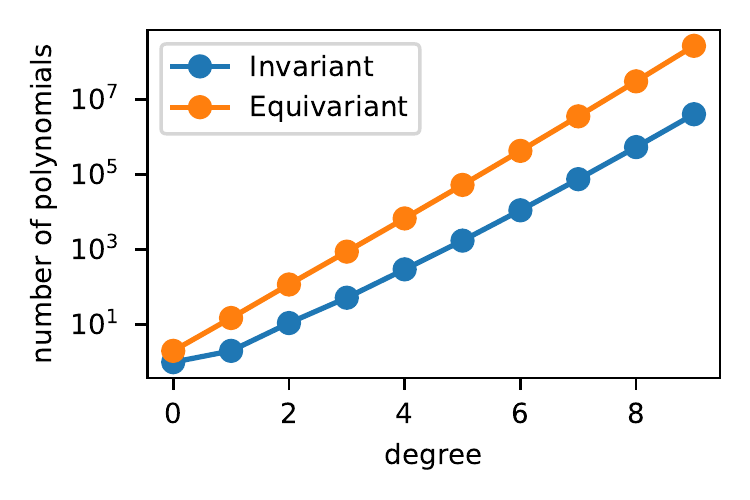}
    \caption{The number of graph invariant and equivariant polynomials scales exponentially with the degree of the polynomial. Here, we count the number of polynomials in the asymptotic limit of graphs of $n \to \infty$ nodes. Graphs with a fixed number of nodes will have fewer invariant polynomials. See \Cref{app:counting_section} for further details on the generating functions and explicit counts. }
    \label{fig:counts}
\end{figure}

\section{Proof of Proposition \ref{thm:ppgn_poly_lower_bound}}\label{s:ppgn_approximation_proof}
Let $f:\Real^{n^2\times d}\too \Real^{n^2\times d'}$ be a PPGN block, defined by the following equation (as portrayed in \citep{maron2019provably}):
\begin{equation}
    \tZ^{(k+1)} = \bar{\texttt{m}}_3\brac{ \bar{\texttt{m}}_1(\tZ^{(k)}) \circledast
 \bar{\texttt{m}}_2(\tZ^{(k)}), \tZ^{(k)}}.
    \label{eq:PPGN}
\end{equation}
Also, let
\begin{equation*}  
\tZ=\begin{bmatrix}
    a & a \\
    b & b \\
\end{bmatrix}\; a,b\in\Real
\end{equation*}
to be an input tensor. A PPGN network (composition of PPGN blocks) cannot approximate the transpose operator $P_H(\tZ)=\tZ^T$ due to the fact that the PPGN block maintains the row structure of $\tZ$, i.e
\begin{equation*}  
f(\tZ)_i=\begin{bmatrix}
    x & x \\
    y & y \\
\end{bmatrix}\; x,y\in\Real,
\end{equation*} 
when $f(\tZ)_i\in \Real^{n^2}$ (for $i\in\brac{d'}$) denotes the ith slice of $f(\tZ)$ along the last dimension.
This holds since each PPGN block is composed of Siamese element-wise operations and matrix multiplications which preserve this structure.
a simple induction can generalize this claim for a composition of blocks while the extension for larger size graphs is also trivial.

\section{Equivariant Polynomials of Attributed Graphs}
\label{a:attributed_graphs}
Repeating the derivations in \Cref{a:proof_of_poly} for the case of equivariant polynomials of attributed graphs, $P:\Real^{n^2\times f}\too\Real^{n^2}$, suggests that we should enumerate each $Q_H$ and $P_H$ with a multi-graph $H=(V,E,(a,b))$, where there are also $f$ types of edges \emph{types} (\eg, colors); we denote by $(r,s;k)\in E$ an edge $(r,s)$ with type $k\in [f]$. This gives the formulas 
\begin{align}
Q_H(\tX)_{i_{a},i_b} &= \sum_{\substack{j_1 \ne \ldots  \ne j_m \in [n]\\ j_a=i_a, j_b=i_b}}\prod_{(r,s;k)\in E} \tX_{j_r,j_s,k} \\
    P_H(\tX)_{i_{a},i_b} &= \sum_{\substack{j_1,\ldots j_m \in [n]\\ j_a=i_a, j_b=i_b}}\prod_{(r,s;k)\in E} \tX_{j_r,j_s,k}
\end{align}
where the degree of $P_H,Q_H$ is the total number of edges of all types, counting multiplicities. The basis $\set{Q_H}$ (and consequently $\set{P_H}$) for equivariant polynomials in this case is achieved by considering all non-isomorphic $H$ (comparing both edge types and multiplicity) with total number of edges up to $f$.   



Equivariant maps $\mathbb{R}^{n^2 \times f} \to \mathbb{R}^{n^2}$ are isomorphically equivalent to the module $(\Real[\mathbb{R}^{n^2 \times f}] \times \mathbb{R}^{n^2})^{S_n}$ , where $\Real[\mathbb{R}^{n^2 \times f}]$ is the polynomial ring on the vector space $\mathbb{R}^{n^2 \times f}$. Similarly to the proof in \Cref{a:proof_of_poly}, via the Reynolds operator applied to monomials in variables $\tX_{ijk}$, we obtain orbits which as before, correspond to unique subgraphs. For a given monomial, if the variable $\tX_{ijk}$ is contained in that monomial, then we add an edge between $i$ to $j$ and color that edge according to the index $k$. The $k$-index remains invariant under the group operation and is not permuted by the group action. 

To continue the graphical notation, we label this basis by labeling its orbits according to the monomials that appear. We pick a given monomial in the orbit and then for each variable in that monomial, we add an edge with the appropriate color. As before, the equivariant output dimension is colored red, but we make such an edge dotted to more clearly differentiate it with other edges. 

In this expanded graphical notation, we provide two examples below:

The proof that the above forms a basis follows directly from the proof in \Cref{a:proof_of_poly}. We follow the basic steps below.

We denote by the vector space of all polynomials $P:\Real^{n^2 \times f}\too\Real^{n^2}$ by  $\mathfrak{P}=\Real^{n^2 \times f}\otimes \Real[\tX]$, where $\otimes$ is the tensor product and $\Real[\tX]$ denotes the module of polynomials with indeterminate $\tX_{11},\ldots,\tX_{nn}$. The space of polynomials $\mathfrak{P}$ is now spanned by the expanded monomial basis
\begin{equation}
    M(\tX) = \delta^{a,b}\otimes \prod_{r,s=1}^n \prod_{k=1}^f \tX_{r,s,k}^{\tA_{r,s,k}}
\end{equation}
where $\tA\in \Nat_0^{n^2 \times f}$, $\Nat_0=\set{0,1,\ldots}$, and $\delta^{a,b} \in \Real^{n^2}$ is a matrix satisfying 
\begin{equation*}
    \delta^{a,b}_{i,j} = \begin{cases} 1 & \text{if } a=i, b=j \\ 0 & \text{o/w} \end{cases}    
\end{equation*}

Permuations now only act on the first two indices, \ie, 
\begin{equation}\label{ea:action}
    (g\cdot \tX)_{i,j,k} = \tX_{g^{-1}(i),g^{-1}(j),k}.
\end{equation}

Given the last index is invariant to permutations, symmetrization of monomials continues as before where we add the feature dimension:
\begin{align*}
    Q_H(\tX)_{i,j} &= \sum_{g\in S_n} \brac{g\cdot M_H(g^{-1}\cdot \tX)}_{i,j}\\ 
    &=\sum_{g\in S_n} \delta^{g(a),g(b)}_{i,j} \prod_{r,s=1}^n \prod_{k=1}^f  \tX_{r,s,k}^{\tA_{g^{-1}(r),g^{-1}(s),k}}.
\end{align*}

$Q_H$ is a sum over all monomials corresponding to the orbit of $H$ under node relabeling $g$. This forms an equivalence class over orbits of $H$ (i.e., two graphs $H$ and $H'$ are in the same class if they can be obtained from one another via permutations). Since every polynomial is a sum over monomials, symmetrization over these monomials implies each symmetrization falls into one of these orbits. Therefore, similar to the proof as before, the multigraphs $H$ compose the set of equivariant polynomials.

\subsection{Equivariant set polynomials}
As a note, we show here via an example how to form set polynomials from the structure described before. In correspondence with the equivariant polynomials on sets ($\mathbb{R}^{n \times d} \to \mathbb{R}^n$), \citet{segol2019universal} proved any polynomial in this setting takes the following form:
\begin{theorem}[Theorem 2 of \cite{segol2019universal}, paraphrased]
    Any equivariant map on sets $\mathbb{R}^{n \times d} \to \mathbb{R}^n$ can be generated by polynomials of the form 
    \begin{equation}
        P(\tX) = \sum_{|\bm \alpha| \leq n} \vb_{\bm \alpha} q_{\bm \alpha, j}(s_1, \dots, s_t),
    \end{equation}
    where $\vb_{\bm \alpha} = [\vx_1^{\bm \alpha}, \dots, \vx_n^{\bm \alpha}]^\top$, $s_j(\tX) = \sum_{i=1}^n \vx_i^{\bm \alpha_j}$ are the power sum symmetric polynomials indexed by $j \in \left[ {n+d \choose d} \right]$ possible such polynomials up to degree $d$, and $q_{\bm \alpha, j}(s_1, \dots, s_t)$ is a polynomial in its power sum polynomial inputs.
\end{theorem}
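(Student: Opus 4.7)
The plan is to reduce equivariance to an invariance statement about a single coordinate, and then apply the fundamental theorem of multi-symmetric (i.e., vector) polynomials to obtain the displayed generating form. First, observe that an equivariant polynomial $P:\Real^{n\times d}\too\Real^n$ is completely determined by its first component $P_1$, since $P_i(\tX)=P_1(g\cdot \tX)$ for any permutation $g\in S_n$ sending $i\mapsto 1$. The equivariance of $P$ translates into the condition that $P_1$ be invariant under the stabilizer of the index $1$, namely the subgroup $S_{n-1}\le S_n$ permuting the rows $\vx_2,\ldots,\vx_n$ while fixing $\vx_1$. So it suffices to characterize polynomials in $\vx_1,\ldots,\vx_n\in\Real^d$ that are symmetric in $\vx_2,\ldots,\vx_n$, and then to vectorize.

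Second, I would treat $P_1$ as a polynomial in the $d$ coordinates of $\vx_1$ whose coefficients are polynomials in $\vx_2,\ldots,\vx_n$ that are symmetric in these $n-1$ vectors. Writing $P_1(\tX)=\sum_{\bm\alpha}\vx_1^{\bm\alpha}\,p_{\bm\alpha}(\vx_2,\ldots,\vx_n)$, each $p_{\bm\alpha}$ lies in the ring of $S_{n-1}$-invariant polynomials on $(\Real^d)^{n-1}$. By the classical theorem on multi-symmetric polynomials (Weyl, Schmid), this ring is generated by the power sums $\tilde s_{\bm\beta}(\vx_2,\ldots,\vx_n)=\sum_{i=2}^n\vx_i^{\bm\beta}$ with $|\bm\beta|\le n-1$. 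Now use the identity $\tilde s_{\bm\beta}=s_{\bm\beta}-\vx_1^{\bm\beta}$ (where $s_{\bm\beta}$ is the full power sum on $\vx_1,\ldots,\vx_n$) to re-express every $p_{\bm\alpha}$ as a polynomial in $\vx_1^{\bm\beta}$ and $s_{\bm\beta}$ for $|\bm\beta|\le n-1$. Collecting terms by powers of $\vx_1$ yields $P_1(\tX)=\sum_{\bm\alpha}\vx_1^{\bm\alpha}\,q_{\bm\alpha}(s_1,\ldots,s_t)$ for polynomials $q_{\bm\alpha}$ in the full power sums $s_j$, with the set of indices $\bm\alpha$ that appear enumerated as in the statement.

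Third, the degree truncation $|\bm\alpha|\le n$ is obtained using a Cayley--Hamilton-type reduction: any monomial $\vx_1^{\bm\alpha}$ with $|\bm\alpha|>n$ can be rewritten modulo the ideal generated by the elementary multi-symmetric polynomials of $\vx_1,\ldots,\vx_n$ (equivalently, by the $s_j$'s, via Newton's identities) as a polynomial in lower-degree monomials of $\vx_1$ with coefficients in the invariants. This lets us truncate the outer sum to $|\bm\alpha|\le n$ without losing generality. Finally, by the equivariance $P_i(\tX)=P_1(g_{i\leftrightarrow 1}\cdot\tX)$ and the $S_n$-invariance of each $s_j$, we have $P_i(\tX)=\sum_{|\bm\alpha|\le n}\vx_i^{\bm\alpha}\,q_{\bm\alpha}(s_1,\ldots,s_t)$, so stacking into a vector gives precisely $P(\tX)=\sum_{|\bm\alpha|\le n}\vb_{\bm\alpha}\,q_{\bm\alpha}(s_1,\ldots,s_t)$ as claimed.

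The main obstacle I anticipate is establishing the sharp bound $|\bm\alpha|\le n$ in the outer index set. The invariant generation step (step two) is classical and gives only $|\bm\beta|\le n-1$ for the arguments of the $q$'s, but controlling the $\vx_1$-degree requires the vector analogue of the fact that $\vx_1$ satisfies its multivariate characteristic identity in the quotient by the ideal of multi-symmetric polynomials. I would handle this by verifying Newton's identities in the multi-symmetric setting (expressing elementary multi-symmetric polynomials in $\vx_1,\ldots,\vx_n$ as polynomials in $s_1,\ldots,s_t$) and then showing that $\vx_1^{\bm\alpha}$ for $|\bm\alpha|=n+1$ lies in the submodule generated by lower-degree monomials over the invariant ring; this is the technical heart of the argument and is where the specific combinatorial count $\binom{n+d}{d}$ of generating power sums enters.
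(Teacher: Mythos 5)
First, a framing point: the paper does not prove this statement. It is imported verbatim as ``Theorem 2 of \citet{segol2019universal}, paraphrased'' and used in the appendix only to check, on a worked example, that the paper's graphical basis specializes sensibly to the set case (multigraphs all of whose black edges are self-loops). So there is no in-paper proof to match; the natural comparisons are with the original proof and with the paper's own general machinery. Your argument is essentially the original one: restrict to $P_1$, note it is invariant under the stabilizer $S_{n-1}$ of the first row, expand in monomials of $\vx_1$ with multi-symmetric coefficients in $\vx_2,\dots,\vx_n$, invoke the first fundamental theorem for multi-symmetric polynomials (power sums of degree at most $n-1$ generate in characteristic zero), substitute $\tilde s_{\bm{\beta}}=s_{\bm{\beta}}-\vx_1^{\bm{\beta}}$, and stack by equivariance. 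All of these steps are correct. The paper's own route (Theorem \ref{thm:poly}, proved via the Reynolds operator in \Cref{a:proof_of_poly}) would instead produce a \emph{basis} indexed by orbits of monomials, i.e.\ by multigraphs with only self-loops; that description is equivalent to the power-sum generating form but looks different, and the paper never derives one from the other beyond the example.

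The step you rightly flag as the crux --- truncating the outer sum to $|\bm{\alpha}|\le n$ --- is where your sketch is thinnest, but your Cayley--Hamilton instinct is the right one and the claim does hold. The clean way to finish: for any linear form $\ell$ on $\Real^d$, $\ell(\vx_1)$ is a root of $\prod_{i=1}^n\bigl(t-\ell(\vx_i)\bigr)$, whose coefficients are symmetric in $\ell(\vx_1),\dots,\ell(\vx_n)$ and hence polynomials in the power sums $s_{\bm{\beta}}$ with $|\bm{\beta}|\le n$; since in characteristic zero the degree-$m$ monomials $\vx_1^{\bm{\alpha}}$ are spanned by $m$-th powers of linear forms (polarization), induction on $m$ places every $\vx_1^{\bm{\alpha}}$ with $|\bm{\alpha}|\ge n$ in the module generated by lower-degree monomials of $\vx_1$ over the invariant ring. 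Note that this reduction is also what forces the $q_{\bm{\alpha}}$'s to depend on power sums up to degree $n$ (hence the count $\binom{n+d}{d}$), even though your step two only produced degrees up to $n-1$; you should say this explicitly. With that step made precise, the proof is complete.
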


In our graphical language, set polynomials correspond to graphs with only multi-edges that are self loops. To recover the above theorem in our graphical notation, we consider each element in the sum above. We identify a given $\vb_{\bm \alpha}$ with the self loops on the equivariant edge with the dotted line. The polynomial $q_{\bm \alpha, j}$ is identified with the polynomial on the rest of the nodes, e.g. let us consider the below graph.

\begin{equation}
    \begin{tikzpicture}[main/.style = {draw, circle},baseline=5pt,semithick] 
\node[main] (1) [] {};
\node[main] (2) [right of=1] {};
\node[main] (3) [right of=2] {};
\path (1) edge [densely dotted, loop above, color=red] (1);
\path (1) edge [loop below, color=blue] (1);
\path (2) edge [loop below, color=orange] (2);
\path (3) edge [loop below, color=green] (3);
\end{tikzpicture} 
\end{equation}
As before, for colors indexed by index zero (orange), index one (green), and index two (blue), the above corresponds to the polynomial
\begin{equation}
    P(\tX) = \begin{bmatrix} \tX_{1,2} \\ \tX_{2,2} \\ \vdots \\ \tX_{n,2}
    \end{bmatrix}  \cdot \left( \sum_{i,j=1}^n \tX_{i,0} \tX_{j,1} \right).
\end{equation}
By inspection, one can see that the above is of the form as stated in \cite{segol2019universal}, i.e. choose $q_{ \bm \alpha,j} = (\sum_{i=1}^n \vx_i^{[1,0,0]})(\sum_{i=1}^n \vx_i^{[0,1,0]}) $ only for $\bm \alpha = [0,0,1]$ and $q_{ \bm \alpha,j} =  0$ otherwise.

\section{Relationship between Equivariant Polynomials,  Homomorphisms, and Subgraph Counting}\label{appendix:homomorphism}

There is a close correspondence between homomorphism counts, subgraph counts, and the evaluation of our polynomials on binary graphs $\tX$, meaning directed graphs with self-loops but no multiedges, i.e. those graphs with adjacency in $ \{0, 1\}^{n \times n}$.
For binary graphs $H$ and $\tX$, a homomorphism is a function $\varphi: V(H) \to V(\tX)$ such that if $(r,s) \in E(H)$, then $(\varphi(r), \varphi(s)) \in E(\tX)$. An isomorphism is a bijective homomorphism whose inverse is also a homomorphism.
We let $\hom(H, \tX)$ denote the number of homomorphisms from $H$ to $\tX$. We let $\inj(H, \tX)$ denote the number of injective homomorphisms from $H$ to $\tX$. The injective homomorphism number is closely related to subgraph counts. If we let $\mathrm{count}(H, \tX)$ denote the number of subgraphs isomorphic to $H$, and let $\Aut(H)$ denote the automorphism group of $H$ (i.e. the set of isomorphisms from $H$ to $H$), then $\inj(H, \tX) = |\Aut(H)|\cdot \mathrm{count}(H, \tX)$. The $|\Aut(H)|$ term is due to overcounting when $H$ has symmetries.

\subsection{Invariant Polynomials and Standard Homomorphisms}

\textbf{$P_H$ and standard homomorphisms.} Let $H$ and $\tX$ be binary graphs. Then it can be seen that the homomorphism count $\hom(H, \tX)$ can be written as~\citep{lovasz2012large}:
\begin{equation}
    \hom(H, \tX) = \sum_{\varphi: V(H) \to V(\tX)} \  \prod_{(r,s) \in E(H)} \tX_{\varphi(r), \varphi(s)},
\end{equation}
where the sum ranges over all functions from $V(H)$ to $V(\tX)$. Choose an ordering $1, \ldots, m$ of the nodes of $V(H)$ and $1, \ldots, n$ of the nodes of $\tX$; writing $\varphi(l) = j_l$, we see that $\hom(H, \tX)$ is exactly equivalent to our (invariant) polynomial basis element $P_H$:
\begin{equation}
    P_H(\tX) = \hom(H, \tX) = \sum_{j_1, \ldots, j_m \in [n]} \prod_{(r,s) \in E(H)} \tX_{j_r, j_s}.
\end{equation}

If $H$ has multiple edges, then let $\tilde H$ be the graph $H$ with any multiple edges reduced to a single edge. If $\tX$ is still a binary graph, it is easy to see that $P_H(\tX) = P_{\tilde H}(\tX)$, so in this case $P_H(\tX) = \hom(\tilde H, \tX)$.

\textbf{$Q_H$ and injective homomorphisms / subgraph counts.} Once again, let $H$ and $\tX$ be binary graphs. The injective homomorphism number can be written in a similar form to the homomorphism count~\citep{lovasz2012large}:
\begin{equation}
    \inj(H, \tX) =  \sum_{\substack{\varphi: V(H) \to V(\tX) \\ \varphi \text{ injective}}} \ \prod_{(r,s) \in E(H)}  \tX_{\varphi(r), \varphi(s)}.
\end{equation}
In this case, the sum ranges over all injective functions $\varphi: V(H) \to V(\tX)$. As in the non-injective case, we write $j_l = \varphi(l)$ for each $l=1, \ldots, m$. By injectivity $j_1 \neq \ldots \neq j_m$. Thus, we have a corrspondence between $\inj(H, \tX)$ and our invariant basis polynomial $Q_H$:
\begin{equation}
    Q_H(\tX) = \inj(H, \tX) =  \sum_{j_1 \neq \ldots \neq j_m \in [n]} \ \prod_{(r,s) \in E(H)}  \tX_{\varphi(r), \varphi(s)} = 
 |\Aut(H)| \cdot \mathrm{count}(H, \tX).
\end{equation}

\subsection{Equivariant Polynomials and Homomorphism Tensors}

\newcommand{\Hom}{\mathrm{Hom}}
\newcommand{\Inj}{\mathrm{Inj}}

Here, we consider our equivariant polynomials basis elements $P_H, Q_H: \Real^{n^2} \to \Real^{n^2}$. On binary graphs, this will also correspond to counts of homomorphisms, except now we have to restrict the homomorphisms to preserve the red edge in an equivariant way. Let $(a,b)$ be the red edge of $H$, and consider any two nodes (possibly the same) $i_a, i_b$ of $\tX$. Then we may define a tensor of homomorphism counts $\Hom(H, \tX) \in \Real^{n^2}$, where $\Hom(H, \tX)_{i_a, i_b}$ is the number of homomorphisms $\varphi: V(H) \to V(\tX)$ such that $\varphi(a) = i_a$ and $\varphi(b) = i_b$. We define a tensor $\Inj(H, \tX)$ of injective homomorphism  counts similarly.

Following the arguments for the invariant case, it is easy to see that
\begin{align}
    P_H(\tX)_{i_a, i_b} & = \Hom(H, \tX)_{i_a, i_b}\\
    Q_H(\tX)_{i_a, i_b} & =
    \Inj(H, \tX)_{i_a, i_b}.
\end{align}
Thus, this gives an interpretation of $P_H(\tX)_{i_a, i_b}$ and $Q_H(\tX)_{i_a, i_b}$, when $H$ and $\tX$ are binary graphs. We expand on this interpretation for $Q_H$ in the next section.

\subsection{$Q_H$ as Subgraph Counts}

\begin{figure}[h!]
    \centering  
    \begin{tabular}{ccc}
    $H$\ \begin{tikzpicture}[main/.style = {draw, circle},baseline=5pt,semithick] 
\node[main] (1) [] {};
\node[main] (2) [above right of=1] {};
\node[main] (3) [below right of=2] {};
\draw[<-] (1) edge (2);
\draw[<-, bend right=15] (3) edge (2);
\draw[<-, bend left=15] (3) edge (2);
\draw[<-, bend left=15] (3) edge (1);
\draw[<-, bend left=15] (3) edge (1);
\draw[<-, bend left=15] (1) edge (3);
\path (2) edge [loop above, color=red] (2);
\end{tikzpicture} 
        &  $\tilde H$
\begin{tikzpicture}[main/.style = {draw, circle},baseline=5pt,semithick] 
\node[main] (1) [] {};
\node[main] (2) [above right of=1] {};
\node[main] (3) [below right of=2] {};
\draw[<-] (1) edge (2);
\draw[<-] (3) edge (2);
\draw[<-, bend left=15] (3) edge (1);
\draw[<-, bend left=15] (3) edge (1);
\draw[<-, bend left=15] (1) edge (3);
\path (2) edge [loop above, color=red] (2);
\end{tikzpicture} \end{tabular} 
    \\[15pt]   
    \begin{tabular}{cc}
    $\tX$
\begin{tikzpicture}[main/.style = {draw, circle},baseline=5pt, semithick] 
\node[main, label={\small $1$}] (1) [] {};
\node[main, label={\small $2$}] (2) [right of=1] {};
\node[main, label=right:{\small $3$}] (3) [below of=2] {};
\node[main, label={\small $4$}] (4) [right of=2] {};
\node[main, label={\small $5$}] (5) [left of=3] {};
\draw[<-, bend right=15] (2) edge (1);
\draw[<-, bend right=15] (1) edge (2);
\draw[<-] (1) edge (3);
\draw[<-] (2) edge (3);
\draw[<-] (4) edge (3);
\draw[<-, bend right=15] (2) edge (4);
\draw[<-, bend right=15] (4) edge (2);
\draw[<-] (3) edge (5);
\end{tikzpicture} 
& 
\begin{tabular}{l}
$|\Aut(\tilde H)| = 2$\\
$Q_H(\tX)_{i,i} = \sum_{j \neq k \in [n] \setminus \{i\}} \tX_{i, j} \tX_{i, k}^2 \tX_{j,k}^2$ \\
    $Q_H(\tX)_{3, 3} = 4$\\
    $Q_H(\tX)_{i, i} = 0$, $i \neq 3$\\
\end{tabular}
    \end{tabular}
\caption{Relationship between $Q_H$ and subgraph counts. The directed multigraph $H$ (top left) has its multiedges turned to single edges to form $\tilde H$ (top right). The size of the automorphism group $\Aut(\tilde H)$ is 2, as we may swap the bottom left and bottom right node while preserving the graph structure. $Q_H(\tX)_{3,3} = 4$ because node 3 participates in 2 subgraphs isomorphic to $\tilde H$, and this subgraph count is scaled by $|\Aut(\tilde H)| = 2$ to get $Q_H(\tX)_{3,3}$.}
    \label{fig:counting_example}
\end{figure}
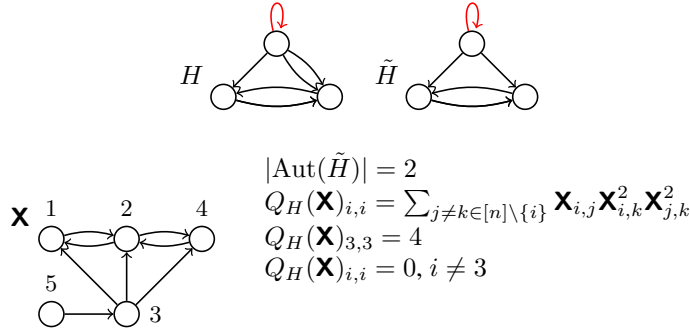

The basis $Q_H$ can be interpreted as subgraph counts when the input is a simple binary graph $\tX \in \{0, 1\}^{n^2}$. For any directed multigraph $H$ with red edge $(a,b)$, let $\tilde H$ denote the same graph $H$ but where any multiple black edges are collapsed to just one black edge; the difference between $H$ and $\tilde H$ is that if $H$ has more than one black edge from node $i$ to $j$, then $\tilde H$ only has one edge.  Then we have that
\begin{equation}\label{eq:ph_count}
    Q_H(\tX)_{i_a, i_b} = |\Aut(\tilde H)| \cdot \scount(\tilde H, \tX, (i_a, i_b)),
\end{equation}
where $\scount(\tilde H, \tX, (i_a, i_b))$ is the number of subgraphs of $\tX$ that are isomorphic to $\tilde H$, after adding a red edge $(i_a, i_b)$ to $\tX$ and labelling edge $(a,b)$ in $\tilde H$ as red. $|\Aut(\tilde H)|$ is the size of the automorphism group of $\tilde H$, which contains the automorphisms $\varphi: V(H) \to V(H)$ that have $\varphi(a) = a$ and $\varphi(b) = b$.

Now, suppose $H$ has a red self loop $(a,a)$, and let $i$ be any node in $\tX$. We have that $Q_H(\tX)_{i_a, i_a}$ is equal to $|\Aut(\tilde H)|$ multiplied by the number of subgraphs of $\tX$ that are isomorphic to $\tilde H$, where the isomorphism maps $i_a$ in $\tX$ to $a$ (the node with the self loop in $H$). Intuitively, this is proportional to the number of subgraphs isomorphic to $\tilde H$ that $i_a$ participates in as the designated red-self-loop node. See Figure~\ref{fig:counting_example} for an illustration.

\textbf{Derivation.} Here, we derive the relationship between $Q_H(\tX)$ and subgraph counts. First, we write out some definitions more precisely. Let $\Aut(\tilde H)$ denote the automorphism group of $\tilde H$. This is the set of permutations $\sigma: V(\tilde H) \to V(\tilde H)$ such that $\sigma(a) = a$, $\sigma(b) = b$, and $(r,s) \in E(\tilde H)$ if and only if $(\sigma(r), \sigma(s)) \in E(\tilde H)$. Further, let $\scount(\tilde H, \tX, (i_a, i_b))$ denote the number of subgraphs of $\tX$ isomorphic to $\tilde H$, where the isomorphism maps $a$ to $i_a$ and $b$ to $i_b$. In other words, it is the number of choices $(V', E')$ such that $V' \subseteq V(\tX)$ and $E' \subseteq E(\tX) \cap (V' \times V')$ where there exists a bijection $\varphi: V(\tilde H) \to V'$ such that $\varphi(a) = i_a$, $\varphi(b) = i_b$, and $(r,s) \in E(\tilde H)$ if and only if $(\varphi(r), \varphi(s)) \in E'$. We call any such map $\varphi$ a subgraph isomorphism, and we may also view it as an injective function $V(\tilde H) \to V(\tX)$.

We will use the fact that for any subgraph isomorphism $\varphi$ and any automorphism $\sigma \in \Aut(\tilde H)$, the map $\varphi \circ \sigma : V(\tilde H) \to V(\tX)$ is also a subgraph isomorphism. Let $(V', E')$ be the subgraph that $\varphi$ maps to. To see that $\varphi$ is a subgraph isomorphism, first note that $\varphi \circ \sigma$ is injective, $\varphi \circ \sigma(a) = \varphi(a) = i_a$, and $\varphi \circ \sigma(b) = \varphi(b) = i_b$. To see the edge preserving property, note that $(r,s) \in E(\tilde H)$ if and only if $(\sigma(r), \sigma(s)) \in E(\tilde H)$ since $\sigma \in \Aut(\tilde H)$. Moreover, $(\sigma(r), \sigma(s)) \in E(\tilde H)$ if and only if $(\varphi(\sigma(r)), \varphi(\sigma(s))) \in E'$ since $\varphi$ is a subgraph isomorphism. Thus, $\varphi \circ \sigma$ is a subgraph isomorphism.

We now derive the formula for $Q_H(\tX)$ in terms of subgraph counts:
\begin{proposition}
    If $\tX \in \{0, 1\}^{n^2}$, then
    \begin{equation}
        Q_H(\tX)_{i_a, i_b} = |\Aut(\tilde H)| \cdot \scount(\tilde H, \tX, (i_a, i_b))
    \end{equation}
\end{proposition}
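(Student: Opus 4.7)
My plan is to interpret each nonzero summand in $Q_H(\tX)_{i_a,i_b}$ as a subgraph isomorphism and then group these isomorphisms into orbits under $\Aut(\tilde H)$.

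First I would expand the explicit formula
\[
    Q_H(\tX)_{i_a,i_b} = \sum_{\substack{j_1\ne\cdots\ne j_m\in[n]\\ j_a=i_a,\,j_b=i_b}} \prod_{(r,s)\in E(H)} \tX_{j_r,j_s},
\]
and exploit binarity: since $\tX_{ij}\in\{0,1\}$, we have $\tX_{ij}^k=\tX_{ij}$ for every $k\ge 1$, so parallel edges of $H$ collapse and the product over $E(H)$ agrees with the product over $E(\tilde H)$. Each summand is therefore either $0$ or $1$, and it equals $1$ precisely when the assignment $\varphi(l):=j_l$ defines an injective map $\varphi:V(\tilde H)\to V(\tX)$ carrying every edge of $\tilde H$ to an edge of $\tX$, with $\varphi(a)=i_a$ and $\varphi(b)=i_b$. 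Thus $Q_H(\tX)_{i_a,i_b}$ equals the number of such injective homomorphisms from $\tilde H$ to $\tX$ that send $(a,b)$ to $(i_a,i_b)$.

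Next I would pass from injective homomorphisms to subgraph counts. Any such $\varphi$ determines a subgraph of $\tX$ with vertex set $V'=\varphi(V(\tilde H))$ and edge set $E'=\{(\varphi(r),\varphi(s)):(r,s)\in E(\tilde H)\}$; this subgraph is isomorphic to $\tilde H$ via $\varphi$ with the red edge transported to $(i_a,i_b)$, so it is counted by $\scount(\tilde H,\tX,(i_a,i_b))$, and conversely every such subgraph arises in this way. To finish, I would count preimages: for a fixed such subgraph $(V',E')$, pick any subgraph isomorphism $\varphi_0:V(\tilde H)\to V'$ with $\varphi_0(a)=i_a,\varphi_0(b)=i_b$; the map $\sigma\mapsto \varphi_0\circ\sigma$ is a bijection from $\Aut(\tilde H)$ onto the set of subgraph isomorphisms $V(\tilde H)\to V'$ fixing the red edge, exactly as recorded in the observation preceding the proposition statement. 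Summing $|\Aut(\tilde H)|$ over the $\scount(\tilde H,\tX,(i_a,i_b))$ subgraphs yields the claimed identity.

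The main obstacle is not a hard calculation but the bookkeeping for corner cases: when $a=b$ (the red constraint becomes a single vertex condition and $\Aut(\tilde H)$ is the stabilizer of the single node $a$), when $H$ contains self-loops or multi-edges outside the red edge (handled uniformly by the idempotence $\tX_{ij}^k=\tX_{ij}$), and checking that the injectivity condition $j_1\ne\cdots\ne j_m$ exactly matches injectivity of $\varphi$ rather than some weaker condition. Once these are verified, every step above is routine.
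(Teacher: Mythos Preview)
Your proposal is correct and follows essentially the same route as the paper: collapse $E(H)$ to $E(\tilde H)$ by binarity, identify the nonzero summands with red-edge-preserving injective homomorphisms $\tilde H\to\tX$, and partition these into $\Aut(\tilde H)$-orbits indexed by the image subgraph $(V',E')$. One minor caveat: the observation preceding the proposition only establishes that $\varphi_0\circ\sigma$ is again a subgraph isomorphism, not the full bijectivity of $\sigma\mapsto\varphi_0\circ\sigma$ onto all isomorphisms with a given image---the paper verifies the surjectivity direction inside the proof itself, and you would need to as well, though as you say it is routine.
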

\begin{proof}
    Note that we can write
    \begin{equation}
    Q_H(\tX)_{i_{a},i_b} = \sum_{\substack{j_1\ne\cdots \ne j_m \\ j_a=i_a, j_b=i_b}}\prod_{(r,s)\in E(\tilde H)} \tX_{j_r,j_s},
    \end{equation}
    where we replace the product over $E(H)$ with the product over $E(\tilde H)$, due to the assumption that $\tX$ is binary. Further, note that the summand $\prod_{(r,s)\in E} \tX_{j_r,j_s}$ is either $0$ or $1$ for each setting of $(j_1, \ldots, j_m)$.
    We show that the number of nonzero terms of the sum in $Q_H(\tX)_{i_a, i_b}$ is equal to $|\Aut(\tilde H)| \cdot \scount(\tilde H, \tX, (i_a, i_b))$. 
    
     Let $(j_1, \ldots, j_m)$ correspond to a nonzero term in the sum. Define $\varphi: V(H) \to V(\tX)$ by $\varphi(i) = j_i$. Note that $\varphi$ is injective since $j_1 \neq \ldots \neq j_m$. Moreover, $\varphi(a) = j_a = i_a$ and $\varphi(b) = j_b = i_b$. Now, define
    \begin{equation}
        V_\varphi = \{\varphi(i) : i \in V(H) \}, \quad E_\varphi = \{(\varphi(r), \varphi(s)) : (r,s) \in E(\tilde H) \}.
    \end{equation}
    This is the vertex set and edge set of a subgraph in $\tX$, because $\prod_{(r,s) \in E(\tilde H)} \tX_{j_r, j_s} = 1$ implies that $(\varphi(r), \varphi(s)) \in E(\tX)$ for all $(r,s) \in E(\tilde H)$. Thus, $\varphi$ corresponds to an isomorphism between $\tilde H$ and a subgraph of $\tX$.

    Suppose $(j_1, \ldots, j_m) \neq (\tilde j_1, \ldots, \tilde j_m)$ are both indices corresponding to nonzero summands, with corresponding subgraph isomorphisms $\varphi$ and $\tilde \varphi$. Note that $\varphi \neq \tilde \varphi$, so each nonzero summand corresponds to a unique subgraph isomorphism $\varphi$. Hence, it suffices to show that the number of subgraph isomorphisms is $|\Aut(\tilde H)| \cdot \scount(\tilde H, \tX, (i_a, i_b)) $.
    
    For each $l \in \{1, \ldots, \scount(\tilde H, \tX, (i_a, i_b)) \}$, let $G_l = (V_l, E_l)$ be a subgraph of $\tX$ that is isomorphic to $\tilde H$. Then choose a subgraph isomorphism $\varphi_l: V(H) \to V(\tX)$ associated to $G_l$. For this $\varphi_l$, as in our argument above we know that $\varphi_l \circ \sigma: V(H) \to V(\tX)$ is a subgraph isomorphism for every $\sigma \in \Aut(\tilde H)$. Thus, there are at least $|\Aut(\tilde H)| \cdot \scount(\tilde H, \tX, (i_a, i_b))$ subgraph isomorphisms $\varphi_l \circ \sigma$.

    To show that there are at most $|\Aut(\tilde H)| \cdot \scount(\tilde H, \tX, (i_a, i_b))$ subgraph isomorphisms, assume for sake of contradiction that $\psi : V(H) \to V(\tX)$ is a subgraph isomorphism that is not of the form $\varphi_l \circ \sigma$ above. Denote the vertex set and edge set of the associated subgraph as $V_\psi$ and $E_\psi$, respectively. If $(V_\psi, E_\psi) \neq (V_l, E_l)$ for each $l$, then we have the existence of $\scount(\tilde H, \tX, (i_a, i_b)) + 1$ subgraphs in $\tX$ isomorphic to $\tilde H$, which contradicts the definition of $\scount$. Thus, $(V_\psi, E_\psi) = (V_l, E_l)$ for some $l$. As $\varphi_l$ and $\psi$ are both bijective from $V(\tilde H) \to V_\psi$, there is a unique bijection $\sigma: V(\tilde H) \to V(\tilde H)$ such that $\varphi_l = \psi \circ \sigma$.
    
    We will show that $\sigma \in \Aut(\tilde H)$, which contradicts our definition of $\psi$. Note that $\sigma(a) = a$ and $\sigma(b) = b$, because $\varphi_l(a) = \psi(a) = i_a$ and $\varphi_l(b) = \psi(b) = i_b$. Now, suppose $(r,s) \in E(\tilde H)$. Since $(\psi \circ \sigma(r), \psi \circ \sigma(s)) = (\varphi(r), \varphi(s)) \in E_{\varphi_l} = E_\psi$, we know that $(\sigma(r), \sigma(s)) \in E(\tilde H)$ as $\psi$ is a subgraph isomorphism. On the other hand, if $(\sigma(r), \sigma(s)) \in E(\tilde H)$, then $(\varphi_l(r), \varphi_l(s)) = (\psi \circ \sigma(r), \psi \circ \sigma(s)) \in E_\psi$ since $\psi$ is a subgraph isomorphism, so that $(r,s) \in E(\tilde H)$ because $\varphi_l$ is a subgraph isomorphism. Thus, $\sigma \in \Aut(\tilde H)$, and we are done.
    
\end{proof}

\section{k-WL Equivalence}

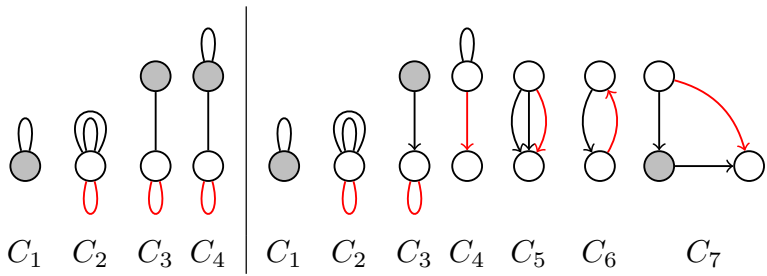
\begin{figure}[h!]
    \centering
    \resizebox{.6\columnwidth}{!}{
    \begin{tabular}{@{\hskip0pt}c@{\hskip2pt}c@{\hskip2pt}c@{\hskip5pt}c|c@{\hskip2pt}c@{\hskip2pt}c@{\hskip5pt}c@{\hskip5pt}c@{\hskip5pt}c@{\hskip5pt}c@{\hskip0pt}}
   \begin{tikzpicture}[main/.style = {draw, circle},baseline=5pt,semithick,every loop/.style={}]
\node[main] (1) [fill=lightgray] {};
\path 
(1) edge [-, loop above] node {} (1);
\end{tikzpicture}
&  
 \begin{tikzpicture}[main/.style = {draw, circle},baseline=5pt,semithick,every loop/.style={}]
\node[main] (1) [] {};
\path 
(1) edge [-, loop below, color=red] node {} (1)
(1) edge [-, loop above] node {} (1)
(1) edge [out=120, in=60, loop, min distance = 20pt] (1);
\end{tikzpicture}
& 
\begin{tikzpicture}[main/.style = {draw, circle},baseline=5pt,semithick,every loop/.style={}]
\node[main] (2) [] {};
\node[main] (1) [above of=2, fill=lightgray] {};
\path 
(1) edge [-] node {} (2)
(2) edge [-, loop below, color=red] node {} (2);
\end{tikzpicture} & 
     \begin{tikzpicture}[main/.style = {draw, circle},baseline=5pt,semithick,every loop/.style={}]
\node[main] (2) [] {};
\node[main] (1) [above of=2, fill=lightgray] {};
\path 
(1) edge [loop above,-] node {} (1)
(1) edge [-] node {} (2)
(2) edge [-, loop below, color=red] node {} (2);
\end{tikzpicture} 
&

\begin{tikzpicture}[main/.style = {draw, circle},baseline=5pt,semithick,every loop/.style={}]
\node[main] (1) [fill=lightgray] {};
\path 
(1) edge [-, loop above] node {} (1);
\end{tikzpicture}
&  
 \begin{tikzpicture}[main/.style = {draw, circle},baseline=5pt,semithick,every loop/.style={}]
\node[main] (1) [] {};
\path 
(1) edge [-, loop below, color=red] node {} (1)
(1) edge [-, loop above] node {} (1)
(1) edge [out=120, in=60, loop, min distance = 20pt] (1);
\end{tikzpicture}
& 
\begin{tikzpicture}[main/.style = {draw, circle},baseline=5pt,semithick,every loop/.style={}]
\node[main] (2) [] {};
\node[main] (1) [above of=2, fill=lightgray] {};
\path 
(1) edge [->] node {} (2)
(2) edge [-, loop below, color=red] node {} (2);
\end{tikzpicture} 
&

\begin{tikzpicture}[main/.style = {draw, circle},baseline=5pt,semithick,every loop/.style={}]
\node[main] (2) [] {};
\node[main] (1) [above of=2] {};
\path 
(1) edge [-, loop above] node {} (1)
(1) edge [->, color=red] node {} (2);
\end{tikzpicture}
& 
\begin{tikzpicture}[main/.style = {draw, circle},baseline=5pt,semithick,every loop/.style={}]
\node[main] (2) [] {};
\node[main] (1) [above of=2] {};
\path 
(1) edge [->] node {} (2)
(1) edge [->, bend right] node {} (2)
(1) edge [->, bend left, color=red] node {} (2);
\end{tikzpicture}
& 
\begin{tikzpicture}[main/.style = {draw, circle},baseline=5pt,semithick,every loop/.style={}]
\node[main] (2) [] {};
\node[main] (1) [above of=2] {};
\path 
(1) edge [->, bend right] node {} (2)
(1) edge [<-, bend left, color=red] node {} (2);
\end{tikzpicture}
& 
\begin{tikzpicture}[main/.style = {draw, circle},baseline=5pt,semithick,every loop/.style={}]
\node[main] (2) [fill=lightgray] {};
\node[main] (1) [above of=2] {};
\node[main] (3) [right of=2] {};
\path 
(1) edge [->] node {} (2)
(2) edge [->] node {} (3)
(1) edge [->, bend left, color=red] node {} (3);
\end{tikzpicture}\\
$C_1$ & $C_2$ & $C_3$ & $C_4$ & $C_1$ & $C_2$ & $C_3$ & $C_4$ & $C_5$ & $C_6$ & $C_7$
\end{tabular} }
    \caption{For convenience, we redraw our Prototypical graph models here. We show these the node-based model (left) is equivalent to 1-WL on simple graphs, and the edge-based model (right) is equivalent to 2-FWL / 3-WL on simple graphs.}
\end{figure}

In this section, we demonstrate that our studied Prototypical graph models achieve 1-WL and 3-WL/2-FWL expressive power, thus showing that our framework can be used to design and analyze k-WL style models. The key connection comes from a result of ~\citet{dvovrak2010recognizing, dell2018lov}, which states that $k$-FWL indistinguishability is equivalent to $\hom(H, \tX)$ indistinguishability for all $H$ of tree-width at most $k$. 
\begin{lemma}[\citet{dvovrak2010recognizing, dell2018lov}]
    Two simple graphs $\tX^{(1)}$ and $\tX^{(2)}$ are $k$-FWL distinguishable if and only if there is a graph $H$ of tree-width at most $k$ such that $\hom(H, \tX^{(1)}) \neq \hom(H, \tX^{(2)})$
\end{lemma}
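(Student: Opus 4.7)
The plan is to prove both directions of the biconditional, which is the classical theorem of Dvořák and of Dell--Grohe--Rattan. I would factor the argument through the equivalence between $k$-FWL and the $(k+1)$-variable counting logic $C^{k+1}$ (Cai--Fürer--Immerman), reducing the lemma to showing that $C^{k+1}$-equivalence of $\tX^{(1)}$ and $\tX^{(2)}$ coincides with equality of all homomorphism counts $\hom(H,\cdot)$ for $H$ of tree-width at most $k$. Consistently with the tensor-contraction language used elsewhere in the paper, I would phrase both directions in terms of tree decompositions of $H$ and tensors indexed by $(k+1)$-tuples of vertices of $\tX$.

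\textbf{Direction $(\Leftarrow)$.} Fix $H$ of tree-width at most $k$ with a rooted tree decomposition $(T,\{B_t\})$ whose bags have size at most $k+1$. For each bag $B_t$ and each map $\alpha:B_t\to V(\tX)$, let $f_t(\alpha)$ be the number of homomorphisms from the portion of $H$ in the subtree below $t$ into $\tX$ that agree with $\alpha$ on $B_t$. Standard bottom-up dynamic programming on the tree decomposition expresses $f_t$ from its children's $f$-values by element-wise products of tensors indexed by $(k+1)$-tuples of $V(\tX)$, followed by summation over forgotten coordinates. These are exactly the primitive operations whose outputs $k$-FWL stably invariantizes, so $\hom(H,\tX)=\sum_\alpha f_{\mathrm{root}}(\alpha)$ is a function of the $k$-FWL color histogram of $\tX$, and hence agrees on any $k$-FWL-equivalent pair.

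\textbf{Direction $(\Rightarrow)$.} By contrapositive, assume $\tX^{(1)}$ and $\tX^{(2)}$ agree on $\hom(H,\cdot)$ for every $H$ of tree-width at most $k$; one must show they are $k$-FWL equivalent. I would proceed by induction on the FWL refinement depth $r$, showing that the multiset of $r$-th round colors on $k$-tuples of $\tX$ is a $\mathbb{Q}$-linear combination of homomorphism counts of tree-width $\leq k$ graphs. The witnesses are built by unrolling the FWL update rule: the $(r{+}1)$-st color of a $k$-tuple $\vec v$ aggregates the $r$-th colors of the $k$ neighboring $k$-tuples obtained by replacing each coordinate of $\vec v$ with a fresh vertex, and the resulting recursion tree admits a natural tree decomposition whose bags are exactly the $k$-tuples visited --- hence tree-width at most $k$. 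A Möbius-style inversion converts between $\hom$ counts and the ``injective-hom'' / orbit counts that directly match the FWL color histogram (analogously to the $P_H$ versus $Q_H$ translation in \Cref{appendix:homomorphism}), completing the induction and yielding $k$-FWL equivalence.

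\textbf{Main obstacle.} The hard step is the converse, where one must simultaneously (i) verify that the inductively constructed pattern graphs genuinely have tree-width $\leq k$, which requires picking the correct tree decomposition at each recursive step so that shared coordinates across FWL queries collapse into a single bag; and (ii) carry out the Möbius inversion turning equality of hom counts into equality of the color histograms tracked by FWL. In a formal writeup I would either replay Dvořák's pebble-game argument for $C^{k+1}$, or use the Dell--Grohe--Rattan tensor-algebra presentation of $k$-FWL, both of which make these two issues rigorous with minimal extra bookkeeping.
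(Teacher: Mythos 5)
This lemma is one the paper imports from \citet{dvovrak2010recognizing, dell2018lov} with no in-paper proof, so there is nothing internal to compare against; your sketch reproduces exactly the standard route of those references --- the easy direction by bottom-up dynamic programming over a tree decomposition, showing $\hom(H,\cdot)$ is a function of the stable $k$-FWL colouring, and the hard converse by unrolling the refinement and writing the colour-class counts as linear combinations of homomorphism counts of tree-width-$\leq k$ patterns via M\"obius inversion between $\hom$ and injective-$\hom$ counts. The outline is sound, with two caveats: the bags of the unrolled decomposition must contain the $k$ current pebble positions \emph{together with} the fresh replacement vertex (size $k+1$, hence tree-width $\leq k$ --- bags consisting of ``the $k$-tuples visited'' alone would be size $k$), and the converse direction remains a plan whose genuinely hard steps you explicitly defer to the cited pebble-game or tensor-algebra arguments rather than carry out.
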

Recall from Appendix~\ref{appendix:homomorphism} that $\hom(H, \tX)$ is equal to the evaluation of the invariant polynomial $P_H(\tX)$ when $\tX \in \{0, 1\}^{n \times n}$. Thus, an Prototypical graph model can compute $\hom(H, \tX)$ if it can contract $H$ into the trivial graph (that has zero nodes and zero edges) using contractions from its bank.
\begin{proposition}
    The Prototypical node-based graph model can distinguish any two simple graphs if and only if 1-WL can.
\end{proposition}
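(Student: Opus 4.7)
The plan is to chain two reductions. First, by Appendix~\ref{appendix:homomorphism}, on a binary input $\tX \in \{0,1\}^{n \times n}$ the invariant polynomial $P_H(\tX)$ coincides with $\hom(H,\tX)$, and $\mathcal{F}_n$ computes $P_H$ iff it can contract $H$ (now with no red edge) all the way to the empty multigraph using its bank. Second, by the cited theorem of \citet{dvovrak2010recognizing, dell2018lov}, two simple graphs are $1$-WL equivalent iff $\hom(H,\cdot)$ agrees on them for every $H$ of tree-width $\leq 1$, i.e.\ every forest $H$. Granting these two facts, the whole statement reduces to the following.

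\textbf{Key claim.} A simple multigraph $H$ is $\mathcal{F}_n$-contractible to the empty graph iff $H$ is a forest.

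For the \emph{if} direction I would induct on $|V(H)|$. If $H$ is a forest with at least two nodes, pick a leaf $v$; collapse any self-loops at $v$ down to at most one using $C_2$, and then contract $v$ into its unique neighbor via $C_3$ or $C_4$. This strictly reduces $|V(H)|$, preserves the forest property, and leaves us with a forest on fewer nodes. The base case of an isolated node (possibly with one self-loop after $C_2$) is killed by $C_1$. For the \emph{only if} direction, suppose $H$ contains a cycle. I would invoke the obvious invariant analogue of Algorithm~\ref{alg:Q_H} (no red-edge guard) together with the invariant analogue of Theorem~\ref{thm:contraction}, which says that $\mathcal{F}_n$-computability is preserved under any single greedy contraction of a node with $\leq 1$ neighbors. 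Iterating the greedy step until no such node remains leaves precisely the $2$-core of $H$, which is nonempty since $H$ has a cycle; every node there has $\geq 2$ neighbors, so by Lemma~\ref{lem:always_can_contract_verts} no further contraction is possible, and the $2$-core is not $\mathcal{F}_n$-contractible. Hence neither is $H$.

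The main obstacle I expect is bookkeeping around self-loops produced during contractions. Self-loops at $v$ do not increase the neighbor count of $v$, and any number of parallel self-loops can be collapsed by $C_2$ without altering $\mathcal{F}_n$-computability, so the leaf-stripping argument proceeds cleanly; still, I want to check that the proof of Theorem~\ref{thm:contraction} really does carry over verbatim to the invariant setting, which it should since that proof only used the fact that contractions in $\mathcal{F}_n$ affect the $1$-ring of the contracted node and never used the red edge in an essential way. Once the key claim is in hand, combining it with the homomorphism correspondence and the Dvořák/Dell theorem immediately yields the proposition.
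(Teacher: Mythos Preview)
Your proposal is correct and follows essentially the same route as the paper: reduce via the homomorphism correspondence and the Dvo\v{r}\'ak/Dell--Grohe--Rattan theorem to showing that $\mathcal{F}_n$ contracts $H$ iff $H$ is a forest, handle the forward direction by leaf-stripping, and handle the converse by observing that a cycle obstructs contraction. The only cosmetic difference is in the converse: you phrase it via the $2$-core and the (invariant analogue of) Theorem~\ref{thm:contraction}, whereas the paper argues directly that the first cycle vertex to be contracted would necessarily have at least two distinct neighbors, which no operation in the $\mathcal{F}_n$ bank can handle; both arguments encode the same obstruction.
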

\begin{proof}
$(\impliedby)$ Suppose 1-WL can distinguish two simple graphs $\tX^{(1)}$ and $\tX^{(2)}$. Then there is a graph $H$ of tree-width 1 such that $\hom(H, \tX^{(1)}) \neq \hom(H, \tX^{(2)})$. We will show that the node-based Prototypical model can contract $H$ to the trivial graph. As $H$ has tree-width 1, it is a forest or a tree. We can assume it is a tree as we can contract each tree connected component one by one if it is a forest. We will show that the node-based model can contract any tree $T$ with or without self-loops.

Suppose $T$ consists of one node. If $T$ has no self-loops, it is the constant polynomial, which is a trivial case. Otherwise, the node has at least one self-loop, and $C_1$ and $C_2$ can of course contract it the trivial graph.

Now, suppose $T$ consists of $m \geq 2$ nodes. Then there is a leaf (a node that has degree 1 when we ignore self-loops). If this leaf does not have a self-loop, we can contract it to remove this node using $C_3$. Otherwise, we can use $C_2$ to remove any multiple self-loops (if needed), and then use $C_4$ to remove the node once we are left with one self-loop. This gives a tree $T'$ with at least one self-loop of $m-1$ nodes, so $T'$ can be contracted by induction.

$(\implies)$ Suppose 1-WL cannot distinguish the two simple graphs $\tX^{(1)}$ and $\tX^{(2)}$, so $\hom(H, \tX^{(1)}) = \hom(H, \tX^{(2)})$ for all $H$ of tree-width 1. We show that the node-based contraction bank cannot contract any $\hom(H, \tX)$ for $H$ of tree-width greater than 1.

Suppose $H$ has tree-width greater than 1, so it cannot be a forest or a tree. Hence, $H$ must have a cycle. However, the node-based model cannot contract any graph that has a cycle; this is because if it could, then the first node of the cycle that is contracted would have had at least two different neighbors, but such a node cannot be contracted by $C_1, C_2, C_3$ or $C_4$. Hence, the node-based graph model cannot distinguish $\tX^{(1)}$ and $\tX^{(2)}$.
\end{proof}

\begin{proposition}
    The Prototypical edge-based graph model can distinguish any two simple graphs if and only if 2-FWL / 3-WL can.
\end{proposition}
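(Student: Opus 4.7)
The plan is to mirror the 1-WL proof exactly, reducing the claim to a purely combinatorial statement about tree-width. Concretely, I will prove the following characterization: the edge-based Prototypical model can contract a multi-graph $H$ (with no red edge) to the trivial graph if and only if $H$ has tree-width at most $2$. Combined with the Dvořák/Dell--Grohe--Rattan theorem cited in the previous proposition (which equates $2$-FWL/$3$-WL indistinguishability with equality of $\hom(H,\cdot)$ for all $H$ of tree-width $\leq 2$), and with the identification $P_H(\tX)=\hom(H,\tX)$ from Appendix~\ref{appendix:homomorphism}, both directions of the proposition follow just as in the node-based case.

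For the $(\Leftarrow)$ direction, suppose $3$-WL distinguishes $\tX^{(1)}$ and $\tX^{(2)}$, so there exists $H$ of tree-width $\leq 2$ with $\hom(H,\tX^{(1)})\neq \hom(H,\tX^{(2)})$. I will show by induction on $|V(H)|$ that any such $H$ is contractible by $\mathcal{F}_e$. The base case ($|V(H)|\leq 1$) is handled by repeated applications of $C_2$ and then $C_1$. For the inductive step, I invoke the standard fact that every tree-width-$\leq 2$ graph contains a vertex $v\notin\{a,b\}$ of degree at most $2$ (ignoring self-loops), and that the ``elimination'' of $v$ -- deleting $v$ and adding an edge between its (at most two) non-$v$ neighbors -- yields a graph $H'$ that still has tree-width $\leq 2$. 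By Lemma~\ref{lem:always_can_contract_verts}, $\mathcal{F}_e$ can contract $v$ using operations from its bank; inspection of $C_1$--$C_7$ shows that the effect on the underlying simple graph structure (after absorbing any multi-edges via $C_6$ and self-loops via $C_2$) is exactly the elimination operation, possibly up to a harmless self-loop at a surviving neighbor. The inductive hypothesis then finishes.

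For the $(\Rightarrow)$ direction, assume the edge-based model distinguishes $\tX^{(1)}$ and $\tX^{(2)}$, so it contracts some $H$ with $P_H(\tX^{(1)})\neq P_H(\tX^{(2)})$. By Lemma~\ref{lem:always_can_contract_verts}, each contraction step removes a vertex with at most $2$ neighbors in the current intermediate graph. The sequence of contracted vertices, read off in order, therefore gives a perfect elimination ordering for $H$ in which every vertex has $\leq 2$ higher-numbered neighbors after filling in edges between contracted neighbors; this is precisely the classical characterization of tree-width $\leq 2$. Hence $H$ has tree-width $\leq 2$, so $\hom(H,\tX^{(1)})\neq \hom(H,\tX^{(2)})$ and the Dvořák/Dell--Grohe--Rattan result gives that $2$-FWL/$3$-WL distinguishes them.

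The main obstacle will be the bookkeeping in the inductive step of the $(\Leftarrow)$ direction: when $\mathcal{F}_e$ contracts a degree-$2$ vertex, the resulting intermediate graph is directed and may carry extra self-loops, parallel edges, and orientation data that are not present in the purely combinatorial ``tree-width elimination''. I will need to verify that these discrepancies can always be absorbed by a finite sequence of the auxiliary cleanup operations ($C_2$ for self-loops, $C_6$ for parallel edges and for re-orienting edges) without obstructing the remaining contractions, and that adding the fill-in edge does not push the remaining graph out of tree-width $2$. Once the correspondence between a single $\mathcal{F}_e$-contraction step and one step of tree-width elimination is made airtight, the induction and the overall proof proceed directly.
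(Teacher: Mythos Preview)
Your proposal is correct and follows essentially the same approach as the paper: both directions reduce to the combinatorial characterization ``$\mathcal{F}_e$ can contract $H$ iff $H$ has tree-width $\leq 2$,'' proved via the standard partial-$2$-tree elimination ordering for $(\Leftarrow)$ and the chordal-completion/elimination-ordering argument for $(\Rightarrow)$, with the same cleanup operations ($C_2,C_5,C_6$) handling the bookkeeping you flag. One small slip: since you are working with invariant $P_H$ (no red edge), the exclusion ``$v\notin\{a,b\}$'' is vacuous and should simply be dropped.
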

\begin{proof}
$(\impliedby)$  Suppose 2-FWL can distinguish the two simple graphs $\tX^{(1)}$ and $\tX^{(2)}$, so there is a graph $H$ of tree-width 2 such that $\hom(H, \tX^{(1)}) \neq \hom(H, \tX^{(2)})$. We will show that the edge-based model can contract $H$. We can assume $H$ is connected because otherwise we can separately contract each connected component. Moreover, we can assume that $H$ has no self-loops or multiple edges. This is because if a node has a self-loop and it has no neighbors, then $C_2$ and $C_1$ can remove the node, and if it has neighbors then $C_4$ can contract the self-loop and add an edge to a neighbor, thus forming a multiple edge. For multiple edges, $C_6$ can align the direction of the edges between if necessary, and then $C_5$ can remove the multiple edges.

Since $H$ has tree-width 2, we know it is a partial 2-tree. Thus, there is an ordering of the $m$ vertices of $H$, say $v_1, \ldots, v_m$, such that when deleting each vertex and all incident edges in turn, we only ever delete vertices of degree at most 2. We show that our edge-based Prototypical model can contract edges in this order by induction. For $i \in [m]$, suppose we have deleted nodes $v_1, \ldots, v_{i-1}$ (in the base case $i=1$ we have not deleted any nodes), we are deleting $v_i$, the current graph has no multiple edges, and the any self-loops in the current graph belong to nodes with no neighbors
\begin{itemize}
    \item  If node $v_i$ has degree zero, then it can be contracted to a trivial graph by $C_1$ and $C_2$.
    \item If node $v_i$ has degree one, then we use $C_3$ to contract it, thus adding a self-loop to its neighbor. If its neighbor then has degree 0, then we do not need to remove self-loops for our induction. Otherwise, if its neighbor has nonzero degree, then we use $C_4$, $C_6$ (if necessary), and $C_5$ to remove the self-loop of the neighbor, and remove any multiple edges.

    \item If node $v_i$ has degree two, then we use $C_6$ if necessary, then use $C_7$ to contract $v_i$, and use $C_6$ and $C_5$ to remove any multiple edges formed.
\end{itemize}
After any of these operations, we have deleted the node $v_i$, and maintained the assumptions of our induction. In particular, note that the degree of a node (ignoring self-loops) never increases, so we indeed only ever contract nodes of degree at most 2. Thus, the Prototypical edge model is capable of contracting $H$ to a trivial graph.

$(\implies)$  Consider a pair of graphs that are not distinguishable by 3-WL. Let $k > 3$ be the smallest integer such that $k$-WL distinguishes this pair. This pair of graphs differs in the number of homomorphisms for a subgraph that has at least tree-width $k-1 \geq 3$ \cite{grohe2021homomorphism} (e.g. see figure 2 of \cite{bouritsas2022improving} for explicit example). Series of eliminations of nodes of a subgraph form a tree decomposition of the subgraph via its chordal completion as described below. Subgraphs of tree-width $\geq 3$ have at least one bag of 4 or more nodes in their tree decomposition. Since the Prototypical edge-based graph model can only contract with bags of size 3 or fewer, such a homomorphism count cannot be performed using the contractions.

To show that any set of contractions forms a valid tree decomposition of the graph, consider the chordal completion of the graph formed by the eliminations. This chordal completion consists of the original graph with all edges added between nodes that were part of a contraction which eliminated any other node. E.g., if nodes in $\{a,b,c\}$ were part of a contraction eliminating node $a$, then all edges between the nodes in $\{a,b,c\}$ are added to its chordal completion. Note that this chordal completion follows naturally in any contraction as once a node is eliminated in a contraction, an edge must be made between its remaining neighbors to store the output of the contraction.

Any chordal completion of a subgraph has the property that via the same elimination order of its construction, no more edges are added. I.e., the subgraph constructed by each eliminated node and its neighbors forms a clique. This elimination ordering forms a tree decomposition with bags consisting of the cliques in the elimination ordering. Thus, the tree-width of a graph is upper bounded by the size of the maximal clique in its chordal completion minus one \cite{graphtheorytextbook}. Since the Prototypical edge based model can only contract up to 3 nodes at once, cliques of size at most $3$ can be constructed in the chordal completion and the tree-width of such a tree decomposition is at most $2$.
\end{proof}

Another way to approach this result is through the relationship between graphs of tree-width 2 and series-parallel graphs. Any biconnected graph $H$ of tree-width 2 is a series-parallel graph (more generally, a graph has tree-width 2 if and only if all biconnected components are series-parallel)~\citep{bodlaender1998partial}. Suppose $H$ is series-parallel. Then it is known that it can be contracted to a single edge by two operations~\citep{duffin1965topology}:
\begin{itemize}
    \item[(op1)] Delete a node of exactly degree 2, and connecting its two neighbors.
    \item[(op2)] If there are two edges between the same two nodes, delete one of the edges.
\end{itemize}
The edge-based Prototypical model can implement these two operations, so it can contract $H$ into a single edge. The first operation (op1) is $C_7$ (matrix multiplication), possibly with a $C_6$ (matrix transpose) beforehand to align the directions of the edges. The second operation (op2) is $C_5$ (replace two parallel edges with one edge), again possibly with a $C_6$ (matrix transpose) beforehand to align the directions of the edges. After contraction to a single edge, the remaining operations $C_1, C_2, C_3$ can be used to contract $H$ to the trivial graph.

If $H$ is not biconnected, then we can use the block-cut tree of $H$ to get the biconnected components. Then we can contract each biconnected component that is a leaf of the block-cut-tree (as it is a series parallel graph) in a way such that we add a self-loop to the cut vertex it is connected to. After pruning cut vertices appropriately, we can continue this process until reaching the trivial graph.

\section{Counting of Invariant Polynomials of Symmetric Group}
\label{app:counting_section}
Various methods exist to count the number of invariant polynomials of the Symmetric group (in our case, also isomorphic to multigraphs) of a given form \cite{harary2014graphical,polya1937kombinatorische,MolienTheorem,thiery2000algebraic,bedratyuk2015new}. Here, we follow a standard strategy to count the number of invariant polynomials by summing over partitions corresponding to cycle indices of the permutation group. Given $S_n$ as the symmetric group on $n$ elements, let $S_n^{(k)}$ be the symmetric group acting on the representation $X \in \left(\mathbb{R}^{n}\right)^{\otimes k}$. Let $P_n$ denote the set of integer partitions of $n$ where each partition $\vm \in P_n$ is a length $n$ vector whose $j$-th element is the number of elements of size $j$ in the given partition. For example, for the partition of $4$ into $(1,1,2)$, the corresponding value of $\vm=[2,1,0,0] $. Let $\vs_i$ for $i \in \mathbb{N}$ be arbitrary variables for now (their meaning will become clear later). We define the cycle index $Z(S_n^{(k)})[\vs_i]$ of $S_n^{(k)}$ as a power series in variables $\vs_i$ for $i \in \mathbb{N}$ as  
\begin{equation}
    Z(S_n^{(k)})[\vs_i] =   \sum_{\vm \in P_n} \frac{1}{\prod_{t=1}^n t^{\vm_t} \vm_t! } \prod_{i = 1}^k \prod_{j_i=1}^n \vs_{\operatorname{lcm}(j_1, \dots, j_k)}^{\prod_{t=1}^k\vm_{j_t} j_t / \operatorname{lcm}(j_1, \dots, j_k) },
\end{equation}
where $\operatorname{lcm}(\cdot)$ is the least common multiple of the arguments.

As an example, we have
\begin{equation}
\label{eq_appZ3_2}
    Z(S_3^{(2)})[\vs_i] =  \frac{1}{6} \vs_1^9 + \frac{1}{2} \vs_1\vs_2^4 + \frac{1}{3} \vs_3^3.
\end{equation}

For the equivariant case, we need a weighted cycle index which we denote as $Z_W(S_n^{(k)} \times S_n^{(d)})[\vs_i]$ which can be calculated as
\begin{equation}
\begin{split}
    Z_W(S_n^{(k)}\times S_n^{(d)})[\vs_i] =   \sum_{\vm \in P_n} \frac{\vm_1^d}{\prod_{t=1}^n t^{\vm_t} \vm_t! } &\prod_{i = 1}^k \prod_{j_i=1}^n \vs_{\operatorname{lcm}(j_1, \dots, j_k)}^{\prod_{t=1}^k\vm_{j_t} j_t / \operatorname{lcm}(j_1, \dots, j_k) },
\end{split}
\end{equation}


From here, we can generate the Molien series which counts the number of invariant/equivariant polynomials on $S_n^{(k)}$.

\begin{theorem}
\label{thm:counting_proof}
The Molien series $M_{S_n^{(k)}}(x)$ for invariant polynomials on $S_n^{(k)}$ is generated by
\begin{equation}
    \begin{split}
        \text{ invariant: }& M_{S_n^{(k)}}(x) = Z(S_n^{(k)})[\vs_i = 1 + x^i + x^{2i} + \cdots],
    \end{split}
\end{equation}
and more generally, the Molien series for the polynomials which are equivariant to $S_n^{(d)}$ outputs and $S_n^{(k)}$ inputs is generated by:
\begin{equation}
    \begin{split}
        \text{ equivariant: }& M_{S_n^{(k)} \times S_n^{(d)} }(x) = Z_W(S_n^{(k)}\times S_n^{(d)})[\vs_i = 1 + x^i + x^{2i} + \cdots].
    \end{split}
\end{equation}
\end{theorem}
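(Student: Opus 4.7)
\textbf{Proof proposal for Theorem \ref{thm:counting_proof}.}

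The plan is to reduce the claim to a direct application of Molien's theorem (in the invariant case) and its character-weighted version (in the equivariant case), followed by a careful bookkeeping of the induced cycle structure on tensor powers of the permutation representation. Recall Molien's theorem: for a finite group $G$ acting linearly on a finite-dimensional space $V$ (resp.\ on a pair $V, W$), the Hilbert series of the graded ring of invariants $\mathbb{R}[V]^G$ (resp.\ the graded module of $W$-valued equivariants $(W \otimes \mathrm{Sym}^\bullet V^\ast)^G$) is
\begin{equation*}
\frac{1}{|G|}\sum_{g \in G}\frac{1}{\det(I - x\,g|_V)}
\qquad\text{resp.}\qquad
\frac{1}{|G|}\sum_{g \in G}\frac{\chi_W(g)}{\det(I - x\,g|_V)},
\end{equation*}
where $\chi_W$ denotes the character of $W$. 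I will apply this with $G = S_n$ acting on $V = (\mathbb{R}^n)^{\otimes k}$ (inputs) and $W = (\mathbb{R}^n)^{\otimes d}$ (outputs).

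First I would reorganize the sum over $g \in S_n$ by conjugacy class, indexed by partitions $\vm \in P_n$ (with $\vm_t$ denoting the number of $t$-cycles). The class of type $\vm$ has size $n!\bigl/\prod_t t^{\vm_t}\vm_t!$, which after dividing by $|S_n| = n!$ contributes exactly the prefactor $1/\prod_t t^{\vm_t}\vm_t!$ appearing in $Z(S_n^{(k)})$ and $Z_W(S_n^{(k)}\times S_n^{(d)})$. Next, for such a $g$, I would compute $\det(I - x\,g|_V)$ by determining the cycle structure of the induced permutation of the basis $\{e_{i_1}\otimes\cdots\otimes e_{i_k}\}_{(i_1,\ldots,i_k)\in[n]^k}$: if $i_\ell$ lies in a $g$-cycle of length $j_\ell$, then the orbit of $(i_1,\ldots,i_k)$ under $\langle g\rangle$ has length $\operatorname{lcm}(j_1,\ldots,j_k)$. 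Grouping basis tuples by the unordered multiset of cycle lengths $(j_1,\ldots,j_k)$ their indices belong to, and using that a $g$-cycle of length $t$ contributes $t$ candidate indices, the number of induced cycles of length $\operatorname{lcm}(j_1,\ldots,j_k)$ coming from a fixed tuple of cycle lengths is $\prod_t \vm_{j_t} j_t / \operatorname{lcm}(j_1,\ldots,j_k)$. Consequently,
\begin{equation*}
\frac{1}{\det(I - x\,g|_V)}
\;=\; \prod_{j_1,\ldots,j_k=1}^{n}\bigl(1 - x^{\operatorname{lcm}(j_1,\ldots,j_k)}\bigr)^{-\prod_t \vm_{j_t}j_t/\operatorname{lcm}(j_1,\ldots,j_k)},
\end{equation*}
which is precisely the value of $\prod_i\prod_{j_i}\vs_{\operatorname{lcm}(j_1,\ldots,j_k)}^{\prod_t \vm_{j_t}j_t/\operatorname{lcm}(j_1,\ldots,j_k)}$ under the substitution $\vs_i = 1 + x^i + x^{2i}+\cdots = 1/(1-x^i)$. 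Summing over $\vm \in P_n$ produces exactly $M_{S_n^{(k)}}(x) = Z(S_n^{(k)})[\vs_i = 1/(1-x^i)]$, proving the invariant case.

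For the equivariant case, the only extra ingredient is the character $\chi_W(g)$ of the action of $g$ on $W = (\mathbb{R}^n)^{\otimes d}$. A basis tensor $e_{i_1}\otimes\cdots\otimes e_{i_d}$ is fixed by $g$ iff each $i_\ell$ is a fixed point of $g$; there are $\vm_1$ such fixed points, so $\chi_W(g) = \vm_1^d$. Inserting this factor in the character-weighted Molien formula yields the additional weight $\vm_1^d$ in the summand indexed by $\vm$, which is exactly what distinguishes $Z_W(S_n^{(k)}\times S_n^{(d)})$ from $Z(S_n^{(k)})$. The same substitution $\vs_i = 1/(1-x^i)$ then produces $M_{S_n^{(k)}\times S_n^{(d)}}(x)$, completing the proof.

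I expect the only nontrivial step to be the bookkeeping in the middle paragraph: verifying that grouping basis tuples by the multiset of cycle lengths produces the exact exponent $\prod_t \vm_{j_t}j_t/\operatorname{lcm}(j_1,\ldots,j_k)$ for cycles of length $\operatorname{lcm}(j_1,\ldots,j_k)$, including the fact that this quantity is always an integer. This can be checked by a direct orbit-counting argument (each ordered choice of a cycle of length $j_\ell$ in $g$ for each $\ell$ yields $\prod_\ell j_\ell$ basis tuples, each orbit of size $\operatorname{lcm}(j_1,\ldots,j_k)$) and cross-validated against the small case $Z(S_3^{(2)})$ already displayed in \eqref{eq_appZ3_2}. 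Everything else is routine.
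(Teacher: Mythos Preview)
Your proposal is correct and follows essentially the same approach as the paper: both apply Molien's formula (and its equivariant/character-weighted version), group the sum over $S_n$ by conjugacy class to recover the prefactor $1/\prod_t t^{\vm_t}\vm_t!$, and then compute the determinant term via the induced structure on $(\mathbb{R}^n)^{\otimes k}$, with the equivariant weight $\vm_1^d$ arising as the trace/character on $(\mathbb{R}^n)^{\otimes d}$. The only cosmetic difference is that the paper phrases the determinant computation through products of roots of unity (eigenvalues of the tensor-product permutation), whereas you phrase it through orbit lengths of the induced permutation on basis tuples---these are the same calculation, since a $c$-cycle contributes a factor $(1-x^c)$ either way.
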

\begin{proof}
    We enumerate the Molien series of order $k$ invariant polynomials $R_k^G$ in the invariant ring of polynomials $R^G$ using Molien's formula \cite{derksen2015computational,MolienTheorem}. A similar proof can be obtained via the Pólya enumeration theorem \cite{tucker1994applied,polya1937kombinatorische}. 
    
    Given a representation $\rho:G \to \mathrm{GL}\left(V \right)$, Molien's formula states that
    \begin{equation}
        M_G(x) = \sum_{k} \operatorname{dim}(R_k^G) x^k = |G|^{-1} \sum_{g \in G} \frac{1}{\operatorname{det}\left( \mI - x \rho(g) \right)}.
    \end{equation}
    First, let us consider the invariant setting for $S_n^{(k)}$ -- the symmetric group with representation acting on the vector space $X \in \left(\mathbb{R}^{n}\right)^{\otimes k}$. Eigenvalues of a permutation matrix depend only on the cycle index of the permutation. Therefore, we decompose the sum of group elements in the symmetric group by their cycle indices
    \begin{equation}
    \label{eq:Molien_Snk_raw}
        M_{S_n^{(k)}}(x) = \sum_{\vm \in P_n} \frac{1}{\prod_{t=1}^n t^{\vm_t} \vm_t! } \frac{1}{\operatorname{det}\left( \mI - x \rho(g_{\vm}^{(k)}) \right)},
    \end{equation}
    where $\rho(g_{\vm^{(k)}})$ is any permutation with cycle index $\vm_t$ for the representation on $S_n^{(k)}$. For $S_n^{(1)}$, the representation of the permutation group is the standard representation corresponding to permutations of indices of the vector space. For cycle index $\vm$, the representation $\rho(g_{\vm^{(1)}})$ has $\vm_j$ eigenvalues equal to the $j$ different powers of the $j$-th root of unity. Denoting $\vs_i = 1 + x^i + x^{2i} + \cdots$, this then results in the following:
    \begin{equation}
        \frac{1}{\operatorname{det}\left( \mI - x \rho(g_{\vm}^{(1)}) \right)} = \prod_{j_i=1}^n \vs_{j_i}^{\vm_{j_i}}.
    \end{equation}
    $\rho(g_{\vm^{(k)}})$ acts as a k-fold tensor product of the representation $\rho(g_{\vm^{(1)}})$, i.e. $\rho(g_{\vm^{(k)}}) = [\rho(g_{\vm^{(1)}})]^{(\otimes k)}$. Therefore, to generalize the above formula to higher order $k$, we enumerate the possible eigenvalues of the tensor product of $\rho(g_{\vm^{(1)}}$. Since the eigenvalues of a tensor product of operators are simply the product of the eigenvalues of the elements of the tensor product, we can perform this enumeration over products of the operators. 
    
    As we noted before, for cycle index $\vm$, the representation $\rho(g_{\vm^{(1)}})$ has $\vm_j$ eigenvalues equal to the $j$ different powers of the $j$-th root of unity. Given the product of two elements of this cycle index, we now consider the product of eigenvalues equal to the different powers of the $j$-th and $k$-th roots of unity. This results in all the eigenvalues which are products of the $\operatorname{lcm}(j,k)$-roots of unity where $\operatorname{lcm}(\cdot)$ denotes the least common multiple. Given $\vm_j$ and $\vm_k$ cycles of $j$-th and $k$-th order respectively, powers of the $\operatorname{lcm}(j,k)$-roots of unity will appear a total of $\vm_j \vm_k / \operatorname{lcm}(j,k)$ number of times. Generalizing this to products of more than two eigenvalues, we arrive at
    \begin{equation}
        \frac{1}{\operatorname{det}\left( \mI - x \rho(g_{\vm}^{(k)}) \right)} = \prod_{i = 1}^k \prod_{j_i=1}^n \vs_{\operatorname{lcm}(j_1, \dots, j_k)}^{\prod_{t=1}^k\vm_{j_t} j_t / \operatorname{lcm}(j_1, \dots, j_k) }.
    \end{equation}
    Plugging the above into \Cref{eq:Molien_Snk_raw}, we obtain the desired result. 

    For the equivariant setting, we use the equivariant form of Molien's formula. For maps from a vector space $V$ to another vector space $W$, we consider a representation $\rho:G \to \mathrm{GL}\left(V \right)$ acting on the input space and a representation $\sigma:G \to \mathrm{GL}\left(W \right)$ acting on the output space. Here, Molien's formula takes the form \cite{derksen2015computational,antoneli2008invariants}
    \begin{equation}
        M_G(x) = \sum_{k} \operatorname{dim}(R_k^G) x^k = |G|^{-1} \sum_{g \in G} \frac{\Tr(\sigma(g)^{-1})}{\operatorname{det}\left( \mI - x \rho(g) \right)}.
    \end{equation}
    The above can be shown by noting that the module of equivariant polynomials corresponds to $(R[V] \otimes W)^G$ where $R[V]$ is the ring of polynomials on the vector space $V$. The representation of $S_n^{(d)}$ corresponds to the $d$-fold tensor product of the standard representation of the symmetric group. For a cycle index $\vm$ this representation has one nonzero entries on the diagonal for each cycle of size $1$. Therefore, there are $\vm_1^d$ total nonzero entries each equal to one. Plugging this in, we arrive at the final solution.
\end{proof}

As an example, returning to \Cref{eq_appZ3_2}, we have for invariant polynomials on $S_3^{(2)}$:
\begin{equation}
\begin{split}
    M_{S_3^{(2)}}(x) &=  \frac{1}{6} \bigl[ (1+x+x^2+\cdots)^9 + 3(1+x+x^2+\cdots)(1+x^2+x^4+\cdots)^4 \\
    &\;\;\;\;\;\;\;\;\; + 2 (1+x^3+x^6+\cdots)^3 \bigr] \\
    &= 1 + 2x + 10x^2 + \cdots
\end{split}
\end{equation}

\Cref{thm:counting_proof} quantifies the Molien series for polynomials on $n$ nodes. To obtain the asymptotic limit $M_{S_\infty^{(2)}}(x)$ for invariant polynomials on graphs of arbitrary size, we note that $M_{S_n^{(2)}}(x)$ and $M_{S_\infty^{(2)}}(x)$ agree in powers $x^c$ up to $c=\floor{n/2}$. Therefore, to generate the asymptotic series up to power $n$, it suffices to calculate the corresponding Molien series for $M_{S_{2n}^{(2)}}(x)$. A similar logic can be applied for equivariant polynomials which also have a ``red" edge as well as described in the main text.

\begin{corollary}
    The number of invariant polynomials quantified in the Molien series $M_{S_\infty^{(2)}}(x)$ for the asymptotic limit of $n \to \infty$ nodes agrees with $M_{S_n^{(2)}}(x)$ for $n$ up to the first $\floor{n/2}$ degrees. Similarly, the number of equivariant polynomials quantified in the Molien series $M_{S_\infty^{(2)}}(x)$ for the asymptotic limit of $n \to \infty$ nodes agrees with $M_{S_n^{(2)}}(x)$ for $n$ up to the first $\floor{n/2}-1$ degrees.
\end{corollary}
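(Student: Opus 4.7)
The plan is to reduce both agreements to the combinatorial content of Theorem \ref{thm:poly}, which enumerates a basis of the degree-$d$ polynomials by non-isomorphic multigraphs $H$ subject to a vertex bound $|V(H)| \leq n$. Once $n$ is large enough that this vertex bound is inactive for every multigraph contributing to degree $d$, the enumeration no longer depends on $n$, and so the coefficient of $x^d$ in $M_{S_n^{(2)}}(x)$ coincides with that in $M_{S_\infty^{(2)}}(x)$. The whole argument is therefore a careful accounting of when the bound $|V(H)| \leq n$ becomes vacuous for the relevant families of multigraphs.

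For the invariant case, I would first specialize the invariant analogue of Theorem \ref{thm:poly} (stated in Section \ref{ss:gen_and_discussion}) to the homogeneous degree-$d$ piece: a basis for degree-$d$ homogeneous invariant polynomials on $n$-node graphs is indexed by non-isomorphic multigraphs $H=(V,E)$ with $|E|=d$, no isolated vertices, and $|V| \leq n$. A multigraph with $d$ edges and no isolated vertices has $|V| \leq 2d$, so the constraint $|V|\leq n$ is vacuous whenever $n \geq 2d$, equivalently $d \leq \lfloor n/2 \rfloor$. For every such $d$ the finite-$n$ enumeration coincides with the $n \to \infty$ one, yielding equality of the corresponding Molien coefficients.

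For the equivariant case I would repeat the argument using the basis from Theorem \ref{thm:poly} directly: the degree-$d$ homogeneous equivariant polynomials are indexed by non-isomorphic multigraphs $H=(V,E,(a,b))$ with $|E|=d$, with $V\setminus\{a,b\}$ free of isolated nodes, and $|V|\leq n$. The sharp bound here is $|V|\leq 2+2d$, since the two marked output nodes $a,b$ may themselves be isolated in $E$. Hence $|V|\leq n$ is vacuous when $n \geq 2d+2$, i.e.\ $d \leq (n-2)/2$. For integer $d$ this is exactly $d \leq \lfloor n/2 \rfloor - 1$ whether $n$ is even or odd, giving the stated range.

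The only content beyond this bookkeeping is verifying that distinct non-isomorphic $H$ continue to give linearly independent basis polynomials when embedded in the $S_n$-module for every $n \geq |V(H)|$, so that no basis element is created or identified as $n$ grows past the stabilization threshold; this is exactly what Theorem \ref{thm:poly} establishes. The mild pitfall to flag is that Theorem \ref{thm:poly} is phrased as a basis for polynomials of degree \emph{at most} $d$, whereas the Molien coefficient counts the homogeneous degree-$d$ piece, so one must restrict attention to multigraphs with $|E|=d$ exactly; once this is done, no further obstacles arise.
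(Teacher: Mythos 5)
Your proposal is correct and follows essentially the same reasoning the paper relies on (the paper only asserts the stabilization in the preceding paragraph): a degree-$d$ monomial orbit corresponds to a multigraph with $d$ edges and no superfluous isolated vertices, hence at most $2d$ vertices in the invariant case and $2d+2$ in the equivariant case (where $a,b$ may be isolated), so the vertex bound $|V|\leq n$ is vacuous for $d\leq\lfloor n/2\rfloor$ and $d\leq\lfloor n/2\rfloor-1$ respectively. Your bookkeeping of the parity cases and the homogeneous-versus-filtered distinction is accurate, so nothing further is needed.
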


\paragraph{Counts of invariant polynomials.}
The Molien series of the number of invariant polynomials on graphs, i.e., $\mathbb{R}^{n^2} \to \mathbb{R}$ for sufficiently large $n$, begins with
\begin{equation}
\begin{split}
    &1, 2, 11, 52, 296, 1724, 11060, 74527, 533046, 3999187, \\
    &31412182, 257150093, 2188063401, 19299062896, \\
    &176059781439, 1657961491087, \dots 
\end{split}
\end{equation}

\paragraph{Counts of equivariant polynomials.}
The Molien series of the number of equivariant polynomials on graphs, i.e., $\mathbb{R}^{n^2} \to \mathbb{R}^{n^2}$ for sufficiently large $n$, begins with
\begin{equation}
    \begin{split}
        &2, 15, 117, 877, 6719, 52505, 422824, 3508753, 30036833, \\
        &265100322, 2410638644, 22563597944, 217175819474, \\
        &2147355853088, 21790101729085, 226707665717377, \dots
    \end{split}
\end{equation}

For the setting of the standard representation of the symmetric group on nodes ($S_n^{(1)}$ in our notation), the above recovers the generating series for partitions for which there exist efficiently calculable recurrences via the pentagonal number theorem \cite{hardy1979introduction}. We do not know of a similarly more direct way to compute the Molien series above in the general case.

\end{document}